\definecolor{Gray}{gray}{0.85}
\newcommand{\ie}{\text{i.e.}}
\newcommand{\Var}{\operatorname{Var}}
\newcommand{\varest}{\widehat{\sigma}}
\renewcommand{\S}{\mathcal{S}}
\newcommand{\A}{\mathcal{A}}
\newcommand{\D}{\mathcal{D}}
\newcommand{\Dtilde}{\tilde{\D}}
\newcommand{\Dmain}{\mathcal{D}^{\operatorname{main}}}
\newcommand{\Daux}{\mathcal{D}^{\operatorname{aux}}}
\newcommand{\Dvar}{\mathcal{D}^{\operatorname{var}}}
\newcommand{\Dtrim}{\mathcal{D}^{\operatorname{trim}}}
\newcommand{\Dmainsub}{\mathcal{D}^{\operatorname{main,sub}}}
\newcommand{\Dvarsub}{\mathcal{D}^{\operatorname{var,sub}}}
\newcommand{\Diid}{\mathcal{D}^{\operatorname{i.i.d.}}}
\newcommand{\Crob}{C_{\operatorname{rob}}^{\star}}
\newcommand{\textb}{\operatorname{b}}
\newcommand{\intd}{{\mathrm{d}}}
\newcommand{\hatV}{\widehat{V}}
\newcommand{\N}{\mathcal{N}}
\newcommand{\M}{\mathcal{M}}
\newcommand{\F}{\mathcal{F}}
\newcommand{\E}{\mathbb{E}}
\newcommand{\B}{\mathbb{B}}
\newcommand{\Nmain}{N^{\operatorname{main}}}
\newcommand{\Naux}{N^{\operatorname{aux}}}
\newcommand{\Nvar}{N^{\operatorname{var}}}
\newcommand{\Ntrim}{N^{\operatorname{trim}}}
\newcommand{\subopt}{\operatorname{SubOpt}}
\renewcommand{\P}{\mathcal{P}}
\newcommand{\Prho}{\mathcal{P}^{\rho}}
\newcommand{\Urho}{\mathcal{U}^{\rho}}
\newcommand{\Pkernel}{\mathbb{P}}
\newcommand{\bellman}{\mathbb{B}}
\newcommand{\indicator}{\mathbbm{1}}
\newcommand{\1}{\mathbbm{1}}
\newcommand{\defeq}{\vcentcolon=}
\newcommand{\resp}{\text{resp.}}
\newcommand{\alg}{\texttt{DROP}\xspace}
\newcommand{\algvar}{\texttt{DROP-V}\xspace}
\newcommand{\twofold}{\texttt{Two-fold-subsampling}\xspace}
\newcommand{\threefold}{\texttt{Three-fold-subsampling}\xspace}
\newcommand{\Tr}{\operatorname{Tr}}
\algnewcommand\algorithmicinput{\textbf{Input:}}
\algnewcommand\Input{\item[\algorithmicinput]}
\algnewcommand\algorithmicoutput{\textbf{Ouput:}}
\algnewcommand\Output{\item[\algorithmicoutput]}
\algnewcommand\algorithmicinit{\textbf{Initialization:}}
\algnewcommand\Init{\item[\algorithmicinit]}
\algnewcommand\algorithmicvar{\textbf{Construct the variance estimator:}}
\algnewcommand\ConstructVar{\item[\algorithmicvar]}
\newcommand{\norm}[1]
{\left\lVert#1\right\rVert}
\DeclareMathOperator*{\argmax}{arg\,max}
\DeclareMathOperator*{\argmin}{arg\,min}
\theoremstyle{plain}
\newtheorem{lemma}{\textbf{Lemma}}
\newtheorem{corollary}{\textbf{Corollary}}
\newtheorem{theorem}{\textbf{Theorem}} 
\newtheorem{nono-theorem}{\textbf{Theorem}}[]
\newtheorem{assumption}{Assumption}
\theoremstyle{remark}
\definecolor{hew}{RGB}{0,47,167}
\definecolor{lxs}{RGB}{138,43,226}
\title{Sample Complexity of Offline Distributionally Robust Linear Markov Decision Processes}
\author{
 	He Wang\thanks{Department of Electrical and Computer Engineering, Carnegie Mellon University,  PA 15213, USA; Email: \texttt{\{hew2,yuejiec\}@andrew.cmu.edu}.}\\
 	CMU
 	\and
     Laixi Shi\thanks{Department of Computing Mathematical Sciences, California Institute of Technology, CA 91125, USA; Email: \texttt{laixis@caltech.edu}.}\\
 	Caltech 
    \and
 	Yuejie Chi\footnotemark[1] \\ 	 
     CMU  
 	}
\date{ \today}
\begin{document}

\maketitle
\begin{abstract}
    In offline reinforcement learning (RL), the absence of active exploration calls for attention on the model robustness to tackle the sim-to-real gap, where the discrepancy between the simulated and deployed environments can significantly undermine the performance of the learned policy. To endow the learned policy with robustness in a sample-efficient manner in the presence of high-dimensional state-action space, this paper considers the sample complexity of  distributionally robust linear Markov decision processes (MDPs) with an uncertainty set characterized by the total variation distance using offline data. We develop a pessimistic model-based algorithm and establish its sample complexity bound under minimal data coverage assumptions,  which outperforms prior art by at least $\widetilde{O}(d)$, where $d$ is the feature dimension. We further improve the performance guarantee of the proposed algorithm by incorporating a carefully-designed variance estimator. 
 
\end{abstract}
\allowdisplaybreaks
\setcounter{tocdepth}{2}
\tableofcontents

\section{Introduction}\label{sec:introduction}

In reinforcement learning (RL), agents aim to learn an optimal policy that maximizes the expected total rewards, by actively interacting with an \textit{unknown} environment. However, online data collection may be prohibitively expensive or potentially risky in many real-world applications, e.g., autonomous driving \citep{gu2022constrained}, healthcare \citep{yu2021reinforcement}, and wireless security \citep{uprety2020reinforcement}. This motivates the study of {\em offline RL}, which leverages existing historical data (aka batch data) collected in the past to improve policy learning, and has attracted growing attention \citep{levine2020offline}. Nonetheless, the performance of the learned policy invoking standard offline RL techniques could drop dramatically when the deployed environment shifts from the one experienced by the historical data even slightly, necessitating the development of robust RL algorithms that are resilient against environmental uncertainty.

In response, recent years have witnessed a surge of interests in \textit{distributionally robust offline RL} \citep{zhou2021finite,yang2021towards,shi2022distributionally,blanchet2023double}.
In particular, given only historical data from a nominal environment, \textit{distributionally robust offline RL} aims to learn a policy that optimizes the \textit{worst-case performance} when the environment falls into some prescribed uncertainty set around the nominal one. Such a framework ensures that the performance of the learned policy does not fail drastically, provided that the \textit{distribution shift} between the nominal and deployment environments is not excessively large.

Nevertheless, most existing provable algorithms in distributionally robust offline RL only focus on the tabular setting with finite state and action spaces \citep{zhou2021finite,yang2021towards,shi2022distributionally}, where the sample complexity scales linearly with the size of the state-action space, which is prohibitive when the problem is high-dimensional. To expand the reach of distributionally robust offline RL, a few latest works \citep{ma2023distributionally,blanchet2023double} attempt to develop sample-efficient solutions by leveraging linear function approximation \citep{bertsekas2012dynamic}, which is widely used in both theoretic  \citep{jin2020provably,jin2021pessimism,xiong2023nearly} and practical \citep{prashanth2010reinforcement,bellemare2019distributional} developments of standard RL. However, the existing sample complexity is still far from satisfactory and notably larger than the counterpart of standard offline RL~\citep{jin2021pessimism,xiong2023nearly} with linear function approximation, especially in terms of the dependency on the dimension of the feature space $d$. Therefore, it is natural to ask:
\begin{quote}
\textit{Can we design a provably sample-efficient algorithm for distributionally robust offline RL with linear representations?}
\end{quote}

\subsection{Main contribution}\label{sec:contribution}

To answer this question, we focus on learning a robust variant of linear Markov decision processes in the offline setting.
Throughout this paper, we consider a class of finite-horizon distributionally robust linear MDPs (Lin-RMDPs), where the uncertainty set is characterized by the total variation (TV) distance between the feature representations in the latent space, motivated by its practical \citep{pan2024adjustable} and theoretical appeals \citep{shi2023curious}. 
The highlights of our contributions can be summarized as follows.
 
\begin{itemize}
    \item We propose a distributionally robust variant of pessimistic least-squares value iteration, referred to as \alg, which incorporates linear representations of the MDP model and devises a data-driven penalty function to account for data scarcity in the offline setting. We also establish the sub-optimality bound for \alg under the minimal offline data assumption (cf.~Theorem \ref{thm:main}).
    \item We introduce a clipped single-policy concentrability coefficient $\Crob \geq 1$ tailored to characterize the partial feature coverage of the offline data in Lin-RMDPs, and demonstrate that \alg attains $\epsilon$-accuracy (for learning the robust optimal policy) as soon as the sample complexity is above $\widetilde{O}(\Crob d^2H^4/\epsilon^2)$ (cf. Corollary \ref{corollary:sufficient_coverage}). Compared with the prior art \citep{blanchet2023double}, \alg~improves the sample complexity by at least $\widetilde{O}(d)$.

    \item  We further develop a variance-weighted variant of \alg by integrating a carefully designed variance estimator, dubbed by \algvar. Due to tighter control of the variance, \algvar exhibits an improved sub-optimality gap under the full feature coverage assumption (see Section~\ref{assump:full_coverage}).

\end{itemize}

See Table~\ref{tab:linearmdp} for a summary of our results in terms of the sub-optimality gap.

\begin{table*}[!h]
    \begin{center}
    \begin{tabular}{c|c|c}
    \toprule
    Algorithm &  Coverage  & Sub-optimality gap \tabularnewline
     \hline 
    {} \vphantom{$\frac{1^{7}}{1^{7^{7}}}$}    &  &     \tabularnewline
{ \multirow{-2}{*}{\cite{blanchet2023double}}} &     \multirow{-2}{*}{ partial} &\multirow{-2}{*}{$\sqrt{\frac{C^{\star}_{1} d^4H^4}{K}}$}    \tabularnewline  
    \hline
        {}  \vphantom{$\frac{1^{7}}{1^{7^{7}}}$}   & &    \tabularnewline
    {\multirow{-2}{*}{\alg \bf (this work)}}   &  \multirow{-2}{*}{partial}  &  \multirow{-2}{*}{$\sqrt{\frac{ \Crob d^2 H^4}{K}}$} \tabularnewline 
    \hline
    {}  \vphantom{$\frac{1^{7}}{1^{7^{7}}}$}   & &    \tabularnewline
    {\multirow{-2}{*}{\alg \bf (this work)}}   &  \multirow{-2}{*}{arbitrary}  &  \multirow{-2}{*}{$ \sqrt{d}H\sum_{i=1}^{d}\sum_{h=1}^{H} \underset{{P\in\Prho(P^0)}}{\max}\E_{\pi^{\star}, P}\|\phi_i(s,a)\1_i\|_{(\Lambda_h)^{-1}}$} \tabularnewline 
    \hline
    {}  \vphantom{$\frac{1^{7}}{1^{7^{7}}}$}   & &    \tabularnewline
    {\multirow{-2}{*}{\algvar \bf (this work)}}    & \multirow{-2}{*}{full}  &  \multirow{-2}{*}{$ \sqrt{d}\sum_{i=1}^{d}\sum_{h=1}^{H}  \underset{{P\in\Prho(P^0)}}{\max}\E_{\pi^{\star}, P}\|\phi_i(s,a)\1_i\|_{(\Sigma_h^{\star})^{-1}}$} \tabularnewline 
    \hline
    \toprule
    \end{tabular}
    \end{center}
    \caption{Our results and comparisons with prior art (up to log factors) in terms of sub-optimality gap, for learning robust linear MDPs with an uncertainty set measured by the TV distance. Here, $d$ is the feature dimension, $H$ is the horizon length, $K$ is the number of trajectories, $C_1^{\star},\Crob \in [1,\infty)$ are some concentrability coefficients (defined in Section~\ref{sec:alg_partial}) satisfying $\Crob \leq d C_1^{\star}$. In addition, $\Prho(P^0)$ denotes the uncertainty set around the nominal kernel $P^0$, $\phi_i(s,a)$ is the $i$-th coordinate of the feature vector given any state-action pair $(s,a)\in\S\times\A$, $\Lambda_h $ and $\Sigma_h^{\star}$ are some sort of cumulative sample covariance matrix and variance-weighted cumulative sample covariance matrix, satisfying $H \Lambda_h^{-1} \succeq (\Sigma_h^{\star})^{-1}$.}
    \label{tab:linearmdp}
    \end{table*}

\subsection{Related works}\label{sec:prior_works}
In this section, we mainly discuss works that study sample complexity of linear MDPs and robust RL, which are closely related to this paper.

\paragraph{Finite-sample guarantees for linear MDPs.} Considering linear function approximation in RL, a significant body of works study linear MDPs with  linear transitions and rewards. Focusing on offline settings, \citet{jin2021pessimism} proposed Pessimistic Value Iteration (PEVI) for offline RL with finite-horizon linear MDPs, which incorporated linear function approximation together with the principle of pessimism \citep{rashidinejad2021bridging,shi2022pessimistic,yan2023efficacy,woo2024federated,li2022settling}. However, the temporally statistical dependency arising from backward updates leads to an $O(\sqrt{d})$ amplification in the sub-optimality gap with comparison to the lower bound \citep{zanette2021provable}.
Subsequently, \citet{min2021variance} and \citet{yin2022near} attempted to address this gap. 
The near-optimal sample complexity in \citet{xiong2023nearly} is achieved by applying variance estimation techniques, motivating us to explore variance information for robust offline RL. Beyond these works, there are many offline RL works that investigate model-free algorithms or the infinite-horizon setting that deviates from our focus \citep{zanette2021provable,uehara2021pessimistic,xie2021bellman}. In addition to the offline setting, linear MDPs are also widely explored in other settings such as generative model or online RL \citep{yang2020reinforcement,jin2020provably,agarwal2023vo,he2023nearly}.

\paragraph{Distributionally robust RL with tabular MDPs.}
To promote robustness in the face of environmental uncertainty, a line of research known as distributionally robust RL incorporates distributionally robust optimization tools with RL to ensure robustness in a worst-case manner \citep{iyengar2005robust,xu2012distributionally,wolff2012robust,kaufman2013robust,tamar2014scaling,ho2018fast,smirnova2019distributionally,derman2020distributional,ho2021partial,badrinath2021robust,goyal2022robust,ding2024seeing}. Recently, a surge of works focus on understanding/improving sample complexity and computation complexity of RL algorithms in the tabular setting \citep{wang2021online,zhou2021finite,dong2022online,yang2021towards,panaganti2021sample,liu2022distributionally,li2022first,xu2023improved,wang2023finite,wang2023sample,liang2023single,wang2023foundation,li2023first,yang2023avoiding,zhang2023regularized,kumar2023policy,blanchet2023double,shi2024sample}. Among them, \citet{zhou2021finite,yang2021towards,shi2022distributionally,blanchet2023double} consider distributionally robust RL in the offline setting, which are most related to us. Different from the sample complexity achieved in the tabular setting that largely depends on the size of state and action spaces, this work advances beyond the tabular setting and develops a sample complexity guarantee that only depends on the feature dimension based on the linear MDP model assumption.

\paragraph{Distributionally robust RL with linear MDPs.}
The prior art \citep{blanchet2023double} studies the same robust linear MDP (Lin-RMDP) problem as this work, while the sample complexity  dependency on the feature dimension $d$ is still worse than that of standard linear MDPs in offline setting \citep{jin2021pessimism,xiong2023nearly}, highlighting gap for potential improvements.
Besides TV distance considered herein, \citet{ma2023distributionally,blanchet2023double} consider the Kullback-Leibler (KL) divergence for the uncertainty set. Moreover, \citet{liu2024distributionally} explores Lin-RMDPs with an additional structure assumption in the online setting, which diverges from our focus on the offline context. 
We also note that a concurrent work \citep{liu2024minimax} studies offline Lin-RMDPs, which aligns with our interest. However, their focus is limited to the well-explored data coverage assumption.
In addition to linear representations for robust MDPs, \citet{badrinath2021robust,ramesh2023distributionally} consider other classes of function for model approximation.

\paragraph{Notation.}
Throughout this paper, we define $\Delta(\S)$ as the probability simplex over a set $\S$, $[H]\defeq \{1,\ldots,H\}$ and $[d]\defeq \{1,\ldots,d\}$ for some positive integer $H,d>0$. We also denote $\1_i$ as the vector with appropriate dimension, where the $i$-th coordinate is $1$ and others are $0$. We use $I_d$ to represent the identity matrix of the size $d$. For any vector $x\in\mathbb{R}^d$, we use $\|x\|_{2}$ and $\|x\|_{1}$ to represent its $l_2$ and $l_1$ norm, respectively. In addition, we denote $\sqrt{x^{\top}Ax}$ as $\|x\|_A$ given any vector $x\in\mathbb{R}^d$ and any semi-definite matrix $A\in\mathbb{R}^{d\times d}$.  For any set $\D$, we use $|\D|$ to represent the cardinality (\ie, the size) of the set $\D$. Additionally, we use $\min\{a,b\}_{+}$ to denote the minimum of $a$ and $b$ when $a,b>0$, and $0$ otherwise. We also let $\lambda_{\min}(A)$ to denote the smallest eigenvalue of any matrix $A$.

\section{Problem Setup}
In this section, we introduce the formulation of finite-horizon distributionally robust linear MDPs (Lin-RMDPs), together with the batch data assumptions and the learning objective. 

\paragraph{Standard linear MDPs.} Consider a finite-horizon standard linear MDP $\M = (\S, \A, H, P=\{P_h\}_{h=1}^H, r=\{r_h\}_{h=1}^H)$, where $\S$ and $\A$ denote the state space and action space respectively, and $H$ is the horizon length. At each step $h\in[H]$, we denote $P_h: \S\times\A\to \Delta (\S)$ as the transition kernel and $r_h:\S\times\A\to [0,1]$ as the deterministic reward function, which satisfy the following assumption used in \citet{yang2019sample,jin2020provably}. 

\begin{assumption}[Linear MDPs] \label{assump:linear-mdp}
    A finite-horizon MDP $\M = (\S, \A, H, P, r)$ is called a linear MDP if given a known feature map $\phi:\S\times \A\to\mathbb{R}^d$, there exist $d$ unknown measures $\mu_h^P = (\mu_{h, 1}^P, \cdots, \mu_{h, d}^P)$ over the state space $\S$ and an unknown vector $\theta_h \in \mathbb{R}^d$ at each step $h$ such that
\begin{align}
    r_h(s, a) &= \phi(s, a)^{\top} \theta_h,\quad P_h(s' \mid s , a) = \phi(s, a)^{\top} \mu_h^P(s'), \quad \quad \forall (h,s, a, s') \in [H]\times \S \times \A \times \S. \label{eq:linear-mdp-defn}
\end{align}
Without loss of generality, we assume $\|\phi(s,a)\|_2\le 1$ and $\phi_i(s,a)\ge 0$ for any $(s,a,i)\in\S\times\A\times[d]$, and  $\max\{\int_{\S}\|\mu_h^P(s)\|_2\intd s,\|\theta_h\|_2\} \leq \sqrt{d}$ for all $h \in [H]$.
\end{assumption}

In addition, we denote $\pi = \{\pi_h\}_{h = 1}^H$ as the policy of the agent, where $\pi_h:\S\to\Delta(\A)$ is the action selection probability over the action space $\A$ at time step $h$. Given the policy $\pi$ and the transition kernel $P$, the value function $V_h^{\pi,P}$ and the Q-function $Q_h^{\pi,P}$ at the $h$-th step are defined as: for any $(s,a)\in \mathcal{S}\times \mathcal{A}$,
\begin{align*}
  V^{\pi,P}_h (s) = \mathbb{E}_{\pi,P}\bigg[ \sum_{t=h}^H r_t(s_t,a_t)\mid s_h\!=\!s\bigg], \ \ 
    Q^{\pi,P}_h\!(s,a) =  r_h(s,a) \!+\! \mathbb{E}_{\pi,P}\bigg[\!\sum_{t=h+1}^H \!\!r_t(s_t,a_t)\mid s_h = s, a_h=a\bigg],
\end{align*}
where the expectation is taken over the randomness of the trajectory induced by the policy $\pi$ and the transition kernel $P$.
\paragraph{Lin-RMDPs: distributionally robust linear MDPs.}
In this work, we consider distributionally robust linear  MDPs (Lin-RMDPs), where the transition kernel can be an arbitrary one within an \textit{uncertainty set} around the nominal kernel --- an ensemble of probability transition kernels, rather than  a fixed transition in standard linear MDPs \citep{jin2021pessimism,yin2022near,xiong2023nearly}. 
Formally, we denote it by $\mathcal{M}_{\operatorname{rob}} = (\S,\A,H,\Prho(P^0),r)$, where $P^0$ represents a nominal transition kernel and then $\Prho(P^0)$ represents the \textit{uncertainty set} (a ball) around the nominal $P^0$ with some \textit{uncertainty level} $\rho\ge 0$. 
For notational simplicity, we denote $\mu_h^0 \defeq \mu_h^{P^0}$ and according to \eqref{eq:linear-mdp-defn}, we let
 \begin{align*}
  \forall (h,s,a,s')\in [H]\times \S\times \A\times\S: \quad  P^0_{h,s,a}(s')\defeq P^0_h(s'\mid s,a) &= \phi(s,a)^{\top}\mu_h^0(s').
 \end{align*}
In this work, we consider the total variation (TV) distance \citep{tsybakov08nonparametric} as the divergence metric for the uncertainty set $\Prho$ 
and we assume that Lin-RMDPs satisfy the following $d$-rectangular assumption \citep{ma2023distributionally,blanchet2023double}.
\begin{assumption}[Lin-RMDPs]\label{assump:d_rectangular}
    In Lin-RMDPs,  the uncertainty set $\Prho(P^0)$ is $d$-rectangular, i.e., $\mu^0_{h,i} \in \Delta(\S)$ for any $(h,i)\in[H]\times[d]$ and 
    
\begin{align*}
    \Prho(P^0)\defeq \otimes_{[H],\S, \A} \P^{\rho}(P_{h,s,a}^0), &\quad\text{with}~\P^{\rho}(P_{h,s,a}^0) \defeq \left\{ \phi(s, a)^{\top} \mu_h(\cdot): \mu_h \in \Urho(\mu_{h}^0) \right\},\\
  \text{where} \quad \Urho(\mu_{h}^0) \defeq \otimes_{[d]} \Urho(\mu_{h, i}^0), &\quad\text{with}~\Urho(\mu_{h, i}^0) \defeq \left\{\mu_{h, i}: D_{\operatorname{TV}}\left( \mu_{h, i} \parallel \mu^0_{h, i} \right) \leq \rho ~\text{and}~\mu_{h,i}\in\Delta(\S) \right\}.
\end{align*}
Here, $D_{\operatorname{TV}}( \mu_{h, i} \parallel \mu^0_{h, i} )$ represents the $\operatorname{TV}$-distance between two measures over the state space, i.e.
\begin{equation*}
    D_{\operatorname{TV}} (\mu_{h,i},\mu_{h,i}^0) = \frac{1}{2}\|\mu_{h,i}-\mu_{h,i}^0\|_1 ,
\end{equation*}
where $\otimes_{[d]}$ (resp.~$\otimes_{[H],\S, \A}$) denotes Cartesian products over $[d]$ (resp.~$[H]$, $\S$,  and $\A$). 
\end{assumption}
Assumption \ref{assump:d_rectangular} indicates that the uncertainty set can be decoupled into each feature dimension $i\in[d]$ independently, so called $d$-rectangularity. 
Note that by letting $d=SA$ and $\phi_i(s,a) = \1_i$ for any $(i,s,a)\in[d]\times\S\times\A$, the Lin-RMDP is instantiated to the tabular RMDP and $d$ rectangularity becomes the $(s,a)$-rectangularity commonly used in prior literatures \citep{yang2021towards,shi2023curious}. 
Moreover, when the uncertainty level $\rho=0$, Lin-RMDPs reduce to a subclass of standard linear MDPs satisfying Assumption~\ref{assump:linear-mdp}.

\paragraph{Robust value function and robust Bellman equations.}
Considering Lin-RMDPs with any given policy $\pi$, we define  \textit{robust value function} (\resp~\textit{robust Q-function}) to characterize the \textit{worst-case} performance induced by  all possible transition kernels over the uncertainty set, denoted as
\begin{equation*}
   \forall (h,s,a)\in [H] \times \S\times\A: \quad  V_h^{\pi,\rho}(s) \defeq \inf_{P \in \Prho(P^0)} V_h^{\pi,P}(s), \qquad
    Q_h^{\pi,\rho}(s,a) = \inf_{P \in \Prho(P^0)} Q_h^{\pi,P}(s,a).
\end{equation*}
They satisfy the following \textit{robust Bellman consistency equations}:
    \begin{align}
       \forall (h,s,a)\in [H] \times \S\times\A: ~~~    Q_h^{\pi,\rho}(s,a) = \bellman_h^{\rho} V_{h+1}^{\pi,\rho}(s,a), ~~ \text{where} \  V^{\pi,\rho}_h(s) = \E_{a\sim\pi_h(\cdot\mid s)} [Q^{\pi,\rho}_h(s, a)],\label{eq:robust_bellman_consistency}    
    \end{align}
and the robust linear Bellman operator and transition operator for any function $f:\S\to \mathbb{R}$ is defined by
    \begin{align}
        [\bellman^{\rho}_hf](s, a) &\defeq r_h(s, a) + [\mathbb{P}^{\rho}_hf](s, a),\label{eq:robust_bellman_operator}\\
        [\Pkernel^{\rho}_hf](s, a) &\defeq \inf_{\mu_{h} \in \Urho(\mu_h^0)} \int_{\S} \phi(s, a)^{\top} \mu_h(s') f(s')\intd s'. \label{eq:robust_transition_operator}
    \end{align}
 Note that under Assumption \ref{assump:d_rectangular}, the robust Bellman operator inherits the linearity of the Bellman operator in standard linear MDPs \citep[Proposition 2.3]{jin2020provably}, as shown in the following lemma. The proof is postponed to Appendix \ref{proof:lemma_linearity}.
\begin{lemma}[Linearity of robust Bellman operators]\label{lemma:linearity} Suppose that the finite-horizon Lin-RMDPs satisfies Assumption \ref{assump:linear-mdp} and \ref{assump:d_rectangular}. There exist weights $w^{\rho} = \{w^{\rho}_h\}_{h = 1}^H$, where $w^{\rho}_h \defeq \theta_h + \inf_{\mu_h\in \Urho(\mu_h^0)} \int_{ \S} \mu_h(s') f(s') \intd s'$ for any $h\in[H]$, such that $\mathbb{B}^\rho_h f(s, a)$ is linear with respect to the feature map $\phi$, i.e., $\mathbb{B}^\rho_h f(s, a) = \phi(s, a)^{\top} w^{\rho}_h$.
\end{lemma}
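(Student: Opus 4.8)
The plan is to expand the robust Bellman operator using the two structural assumptions and show that the entire $(s,a)$-dependence factors through $\phi(s,a)$. First I would substitute the reward representation from Assumption~\ref{assump:linear-mdp}, writing $r_h(s,a) = \phi(s,a)^{\top}\theta_h$, so that only the robust transition term $[\Pkernel^{\rho}_h f](s,a)$ requires work. Using the linear-MDP factorization $P^0_h(s'\mid s,a) = \phi(s,a)^{\top}\mu^0_h(s')$ together with definition~\eqref{eq:robust_transition_operator}, I would rewrite the integrand coordinatewise as $\phi(s,a)^{\top}\mu_h(s') f(s') = \sum_{i=1}^d \phi_i(s,a)\,\mu_{h,i}(s') f(s')$, so that the objective of the infimum becomes a finite separable sum in which the $i$-th summand depends on $\mu_h$ only through its $i$-th component $\mu_{h,i}$.

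The crux is to interchange the infimum with the sum over coordinates and then to pull the scalars $\phi_i(s,a)$ outside. Here I would invoke the $d$-rectangular structure of Assumption~\ref{assump:d_rectangular}: since $\Urho(\mu_h^0) = \otimes_{[d]}\Urho(\mu_{h,i}^0)$ is a Cartesian product, each $\mu_{h,i}$ may be chosen freely and independently within its own set $\Urho(\mu_{h,i}^0)$, so the infimum of a separable sum decouples term-by-term into $\sum_{i=1}^d \inf_{\mu_{h,i}\in\Urho(\mu_{h,i}^0)}\phi_i(s,a)\int_{\S}\mu_{h,i}(s')f(s')\,\intd s'$. At this point the nonnegativity assumption $\phi_i(s,a)\ge 0$ from Assumption~\ref{assump:linear-mdp} becomes essential: it lets the scalar $\phi_i(s,a)$ be extracted from the infimum while keeping it an infimum (rather than flipping to a supremum), yielding $\phi_i(s,a)\inf_{\mu_{h,i}}\int_{\S}\mu_{h,i}(s')f(s')\,\intd s'$. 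I expect this to be the main subtlety to flag: were $\phi_i$ allowed to change sign, the choice between inf and sup would itself depend on $(s,a)$, and the resulting coefficient would no longer be a fixed vector, breaking linearity.

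Finally I would collect terms. Defining the $i$-th coordinate of the weight as $w^{\rho}_{h,i} \defeq \theta_{h,i} + \inf_{\mu_{h,i}\in\Urho(\mu_{h,i}^0)}\int_{\S}\mu_{h,i}(s')f(s')\,\intd s'$, a quantity depending on $f$, $h$, and $i$ but not on $(s,a)$, the combined expression reads $\bellman^{\rho}_h f(s,a) = \sum_{i=1}^d \phi_i(s,a)\,w^{\rho}_{h,i} = \phi(s,a)^{\top}w^{\rho}_h$, which is exactly the claimed linearity; this matches the stated formula once the coordinatewise infima are read as the single vector-valued infimum over the product set, as justified by rectangularity. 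The only remaining routine check is finiteness of each per-coordinate infimum, which holds whenever $f$ is bounded (as it is for the value functions of interest, being bounded by $H$), since each $\mu_{h,i}\in\Delta(\S)$ makes $\int_{\S}\mu_{h,i}(s')f(s')\,\intd s'$ uniformly bounded.
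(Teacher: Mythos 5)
Your proposal is correct and follows essentially the same route as the paper's proof: expand the robust transition operator coordinatewise, use the $d$-rectangular product structure to decouple the infimum across coordinates, and use $\phi_i(s,a)\ge 0$ to pull each scalar out of its infimum before recollecting into $\phi(s,a)^{\top}w_h^{\rho}$. The subtlety you flag about sign-changing features breaking the argument is exactly the role the nonnegativity assumption plays in the paper's version as well.
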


In addition, conditioned on some initial state distribution $\zeta$, we define the induced \textit{occupancy distribution} w.r.t. any policy $\pi$ and transition kernel $P$ by
\begin{equation}\label{eq:occupancy_distribution_any}
       \forall (h,s,a)\in [H] \times \S\times\A: ~~     d_h^{\pi,P}(s) \defeq d_h^{\pi,P}(s;\zeta) =  \mathbb{P}(s_h = s\mid \zeta ,\pi,P), ~~ d_h^{\pi,P}(s,a)  = d_h^{\pi,P}(s)\pi_h(a\mid s).
\end{equation}

To continue, we denote $\pi^{\star} = \{\pi^{\star}_h\}_{h = 1}^H$ as a deterministic \textit{optimal robust policy} \citep{iyengar2005robust}.  
The resulting  \textit{optimal robust value function} and \textit{optimal robust Q-function} are defined as:  
\begin{align*}
    V_h^{\star,\rho}(s) &\defeq V_h^{\pi^\star,\rho}(s)=\max_{\pi} V_h^{\pi,\rho}(s), & &\forall (h,s)\in [H]\times\S,\\
    Q_h^{\star,\rho}(s,a) &\defeq Q_h^{\pi^\star,\rho}(s,a)= \max_{\pi} Q_h^{\pi,\rho}(s,a), & &\forall (h,s,a)\in [H]\times\S\times\A.
\end{align*}
 Accordingly, we also have the following \textit{robust Bellman optimality equation}:
\begin{equation}\label{eq:robust_bellman_optimality_Q}
    Q_h^{\star,\rho}(s, a) = [\mathbb{B}_h^{\rho} V_{h + 1}^{\star,\rho}](s, a),  \quad \forall (h,s,a)\in[H]\times \S\times \A.
\end{equation}
Similar to \eqref{eq:occupancy_distribution_any}, we also denote the occupancy distribution associated with the optimal robust policy $\pi^{\star}$ and some transition kernel $P$ by
\begin{equation}\label{eq:occupancy_distribution_optimal}
  \forall (h,s,a)\in [H] \times \S\times\A: \quad  d_h^{\star,P}(s) \defeq d_h^{\pi^{\star},P}(s;\zeta), \qquad d_h^{\star,P}(s,a) \defeq d_h^{\pi^{\star},P}(s)\pi_h^{\star}(a\mid s).
\end{equation}

\paragraph{Offline data.}
Suppose that we can access a batch dataset $\D = \{(s_h^{\tau},a_h^{\tau},r_h^{\tau},s_{h+1}^{\tau})\}_{h\in [H],\tau\in [K]}$, which consists $K$ i.i.d. trajectories that are generated by executing some (mixed) behavior policy $\pi^{\textb} = \{\pi_h^{\textb}\}_{h=1}^H$ over some nominal linear MDP $\mathcal{M}^0 = (\S,\A, H,P^0,r)$. Note that $\D$ contains $KH$ transition-reward sample tuples in total, where 
each sample tuple $(s_h^{\tau},a_h^{\tau},r_h^{\tau},s_{h+1}^{\tau})$ represents that the agent took the action $a^\tau_h$ at the state $s_h^{\tau}$, received the reward $r^{\tau}_h = r_h(s_h^{\tau},a_h^{\tau})$, and then observed the next state $s_{h+1}^{\tau} \sim P_h^0(\cdot\mid s_h = s_h^{\tau}, a_h = a_h^{\tau})$. 
To proceed, we define 
\begin{equation*}
    \D_h^0 = \{(s_h^{\tau},a_h^{\tau},r_h^{\tau},s_{h+1}^{\tau})\}_{\tau\in[K]},
\end{equation*}
which contains all samples at the $h$-th step in $\D$. For simplicity, we abuse the notation $\tau\in \D_h^0$ to denote $(s_h^{\tau},a_h^{\tau},r_h^{\tau},s_{h+1}^{\tau})\in\D_h^0$.
In addition, we define the induced occupancy distribution w.r.t. the behavior policy $\pi^{\textb}$ and the nominal transition kernel $P^0$ at each step $h$ by 
\begin{equation}\label{eq:occupancy_distribution_behavior}
   \forall (h,s,a)\in [H] \times \S\times\A: \quad  d_h^{\textb}(s) \defeq d_h^{\pi^{\textb},P^0}(s;\zeta) \quad\text{and}\quad
    d_h^{\textb}(s,a) \defeq d_h^{\pi^{\textb},P^0}(s,a;\zeta),
\end{equation}
which is conditioned on the initial distribution $\zeta$. 
\paragraph{Learning goal.} Given the batch dataset $\D$, the goal of solving the Lin-RMDP $\M_{\operatorname{rob}}$ with a given initial state distribution $\zeta$ is to learn an $\epsilon$-optimal robust policy $\widehat{\pi}$ such that the sub-optimality gap satisfies
\begin{equation}\label{eq:suboptimality_gap}
    \subopt(\widehat{\pi};\zeta,\Prho)\defeq V_1^{{\star},\rho}(\zeta) - V_1^{\widehat{\pi},\rho}(\zeta) \le \epsilon,
\end{equation}
using as few samples as possible, where $\epsilon$ is the targeted accuracy level, 
\begin{equation*}
    V_1^{{\star},\rho}(\zeta) = \mathbb{E}_{s_1\sim\zeta}[V_1^{{\star},\rho}(s_1)] \quad\text{and}\quad V_1^{\widehat{\pi},\rho}(\zeta) = \mathbb{E}_{s_1\sim\zeta} [V_1^{\widehat{\pi},\rho}(s_1)].
\end{equation*}

\section{Algorithm and Performance Guarantees}
In this section, we propose a model-based approach referred to as \textit{\underline{D}istributionally \underline{Ro}bust \underline{P}essimistic Least-squares Value Iteration} (\alg), by constructing an empirical Bellman operator for Lin-RMDPs in an offline fashion. Then we analyze the sub-optimality bound of the robust policy learned from \alg  and discuss the sample complexity under different historical data quality scenarios.

\subsection{Empirical robust Bellman operator and strong duality}

Recalling the robust Bellman operator in \eqref{eq:robust_bellman_operator} gives that for any value function $V: \S \rightarrow [0,H]$,
\begin{equation*}
    (\bellman^{\rho}_hV)(s, a) = r_h(s, a) + \inf_{\mu_{h} \in \Urho(\mu_h^0)} \int_{\S} \phi(s, a)^{\top} \mu_h(s') V(s') \intd s',
\end{equation*}
which can be equivalently rewritten as its dual form:
\begin{equation}\label{eq:TV_dual_linear}
    (\bellman^{\rho}_hV)(s, a) = \phi(s, a)^{\top} \big(  \theta_h + \nu_h^{\rho,V} \big),
\end{equation}
from the views of strong duality \citep{iyengar2005robust,shi2023curious} (see the detailed proof in Appendix \ref{sec:TV_dual}). Here,  $\nu_{h}^{\rho,V} = [\nu_{h,1}^{\rho,V},\nu_{h,2}^{\rho,V}\ldots,\nu_{h,d}^{\rho,V}]^{\top}\in\mathbb{R}^d$ and its $i$-th coordinate is defined by
\begin{equation}\label{eq:nu}
    \nu_{h, i}^{\rho,V} \defeq \!\!\!\!\max_{\alpha\in [\min_s V(s),\max_s V(s)]}\!\! \big\{ \E_{s\sim\mu^0_{h,i}} [V]_{\alpha}(s) \!-\! \rho(\alpha\! - \min_{s'} [V]_{\alpha}(s'))\big\}, ~\text{with}~[V]_{\alpha}(s) = \begin{cases}
        \alpha, &\!\text{if }V(s) > \alpha,\\
        V(s), &\!\text{otherwise}.
    \end{cases}
\end{equation}
However, notice that we cannot directly apply the above robust Bellman operator and perform value iterations since we cannot have access to the ground-truth nominal linear MDP $\M^0$ (i.e., $\theta_h$ and $\mu_h^0$). Therefore, for each time step $h$, we incorporate ridge regression to obtain the estimator $\widehat{\theta}_h\in\mathbb{R}^d$ and $\widehat{\nu}_{h}^{\rho,V}\in\mathbb{R}^d$, based on the batch dataset $\mathcal{D}^0_h$ that contains all the samples at $h$-th step collected in $\D^0$. In particular, for any value function $V:\S\to[0,H]$ and any time step $h\in[H]$, the estimator $\widehat{\theta}_h$ and the $i$-th coordinate of $\widehat{\nu}_{h}^{\rho,V}$ are defined by
\begin{align}
     \widehat{\theta}_{h} &= \arg\min_{\theta\in\mathbb{R}^d} \sum_{\tau\in\D_h^0} \left(\phi(s_h^\tau,a_h^{\tau})^{\top}\theta - r_h^\tau \right)^2 + \lambda_0\|\theta\|_2^2= \Lambda_h^{-1} \bigg(\sum_{\tau\in\D_h^0}\phi(s_h^{\tau},a_h^{\tau})r_h^{\tau} \bigg),\label{eq:update_theta}\\
   \widehat{\nu}_{h,i}^{\rho,V} &= \max_{\alpha \in [\min_s {V}(s),\max_s {V}(s)]} \big\{ \bar{\nu}_{h,i}^{{V}}(\alpha) - \rho \big(\alpha - \min_{s'} [V]_{\alpha}(s')\big)\big\}, \qquad \forall i\in[d],\label{eq:update_nu}
\end{align}
where $\lambda_0>0$ is the regularization coefficient, $\bar{\nu}_{h,i}^{{V}}(\alpha)$ is the  $i$-th coordinate of $\bar{\nu}_h^{{V}}(\alpha)$ defined by
\begin{align}
    \bar{\nu}_h^{{V}}(\alpha) 
    &= \arg\min_{\nu\in\mathbb{R}^d}\!\sum_{\tau\in \D_h^0}\! \big(\phi(s_h^{\tau},a_h^{\tau})^{\top}\nu - [{V}]_\alpha(s_{h+1}^{\tau})\big)^2 + \lambda_0\|\nu\|_2^2 = \Lambda_h^{-1}\!\bigg( \sum_{\tau\in \D_h^0} \phi(s_h^{\tau},a_h^{\tau})[{V}]_{\alpha}(s_{h+1}^\tau)\bigg),\label{eq:update_bar_nu}
\end{align}
and the cumulative sample covariance matrix is denoted as
\begin{equation}
    \Lambda_h = \sum_{\tau\in \D_h^0} \phi(s_h^{\tau}, a_h^{\tau}) \phi(s_h^{\tau}, a_h^{\tau})^{\top} + \lambda_0 I_d. \label{eq:Lambda}
\end{equation}
Then following the linearity of the robust Bellman operator shown in Lemma \ref{lemma:linearity}, we construct the empirical robust Bellman operator $\widehat{\B}^{\rho}_h $ to approximate $\B^{\rho}_h $, using  the estimators obtained from \eqref{eq:update_theta} and \eqref{eq:update_nu}: for any value function $V:\S\to[0,H]$,
\begin{equation}\label{eq:empirical_bellman}
     (\widehat{\B}^{\rho}_h {V}) (s,a)= \phi(s,a)^{\top}(\widehat{\theta}_h+\widehat{\nu}_h^{\rho,{V}}),\qquad \forall(s,a,h)\in\S\times\A\times[H].
\end{equation}

\subsection{\alg: distributionally robust pessimistic least-squares value iteration}
To compute \eqref{eq:empirical_bellman} for all time steps $h\in[H]$ recursively, we propose a distributionally robust pessimistic least-squares value iteration algorithm, referred to as \alg and summarized as Algorithm~\ref{alg:DRLSVI}.
\begin{algorithm}[!t]
\caption{Distributionally Robust Pessimistic Least-Squares Value Iteration (\alg)}\label{alg:DRLSVI}
\begin{algorithmic}[1]
\Input Dataset $\mathcal{D}$; feature map $\phi(s,a)$ for $(s,a)\in\S\times\A$; parameters $\lambda_0,\gamma_0>0$.
\State Construct a temporally  independent dataset $\D_0 = \twofold(\D)$ (Algorithm~\ref{alg:two-sampling}).
\Init Set $\widehat{Q}_{H + 1}(\cdot,\cdot) = 0$ and $\widehat{V}_{H + 1}(\cdot) = 0$.
\For{step $h = H, H - 1, \cdots, 1$}
    \State $\Lambda_h =  \sum_{\tau\in \D_h^0}  \phi(s_h^{\tau}, a_h^{\tau}) \phi(s_h^{\tau}, a_h^{\tau})^{\top} + \lambda_0 I_d$.
    \State $\widehat{\theta}_h = \Lambda_h^{-1} \left(  \sum_{\tau\in \D_h^0}  \phi(s_h^{\tau}, a_h^{\tau}) r_h^{\tau} \right)$.
    \For{feature $i = 1, \cdots, d$}
    \State Update $\widehat{\nu}_{h,i}^{\rho,\widehat{V}}$ via \eqref{eq:update_nu}.
    \EndFor
    \State $\widehat{w}_h^{\rho,\widehat{V}} = \widehat{\theta}_h + \widehat{\nu}_h^{\rho,\widehat{V}}$.
    \State $\bar{Q}_h(\cdot, \cdot) = \phi(\cdot, \cdot)^{\top}\widehat{w}_h^{\rho,\widehat{V}} - \gamma_0\sum_{i=1}^d \|\phi_i(\cdot,\cdot)\1_i\|_{\Lambda_h^{-1}}$.
    \State $\widehat{Q}_h(\cdot, \cdot) = \min\left\{\bar{Q}_h(\cdot,\cdot), H - h + 1 \right\}_{+} $.
    \State $ \widehat{\pi}_h(\cdot) = \argmax_{a\in\A} \widehat{Q}_h(\cdot,a)$.
    \State $\widehat{V}_h(\cdot) =  \widehat{Q}_h(\cdot,\widehat{\pi}_h(\cdot))$.
\EndFor
\Output $\widehat{V}\defeq \{\hatV\}_{h=1}^{H+1}$, $\widehat{\pi} \defeq \{\widehat{\pi}_h\}_{h = 1}^H$.
\end{algorithmic}
\end{algorithm}

In Algorithm \ref{alg:DRLSVI}, we first construct a dataset $\D_0$ by subsampling from $\D$ by \twofold (cf.~Algorithm~\ref{alg:two-sampling}), inspired by \citet{li2022settling} to tackle the statistical dependency between different time steps $h\in[H]$ in the original batch dataset $\D$. As the space is limited, we defer the details of \twofold and the corresponding statistical independence property to Section \ref{sec:two-fold}. With $\D_0$ in hand and initializations $\widehat{Q}_{H + 1}(\cdot,\cdot) = 0$ and $\widehat{V}_{H + 1}(\cdot) = 0$, the updates at time step $h$ in \alg can be boiled down to the following two steps. The first one is to construct the empirical robust Bellman operator via \eqref{eq:update_theta}-\eqref{eq:empirical_bellman} conditioned on a fixed $\widehat{V}_{h+1}$ from the previous iteration (see the line 3-8 in Algorithm \ref{alg:DRLSVI}).  Then we can estimate the robust Q-function from the pessimistic value iteration as below:
\begin{equation*}
    \bar{Q}_h(s,a) = \widehat{\B}^{\rho}_h(\widehat{V}_{h+1}) (s,a)-\underset{\text{penalty function}~\Gamma_h:\S\times\A\to\mathbb{R}}{\underbrace{\gamma_0\sum_{i=1}^d \|\phi_i(s,a)\1_i\|_{\Lambda_h^{-1}}}},\quad\forall (s,a)\in \S\times \A,
\end{equation*}
where $\gamma_0>0$ is the coefficient of the penalty term.

The above pessimistic principle is widely advocated in both standard and robust offline RL \citep{jin2021pessimism,xiong2023nearly,shi2022distributionally}. When dealing with the uncertainty set characterized by TV distance, previous penalty designs tailored for standard linear MDPs \citep{jin2021pessimism,xiong2023nearly} and robust linear MDPs specifically addressing KL divergence \citep{ma2023distributionally}, are no longer applicable. To this end, we carefully devise the penalty function, denoted as $\Gamma_h$, for Lin-RMDPs with TV distance. Compared to the prior art \citep{blanchet2023double} which promotes pessimism by solving an inner constrained optimization problem, our proposed penalty function efficiently addresses the uncertainty in every feature dimension $i\in[d]$, which plays a crucial role in improving the sub-optimality gap.

\subsection{Performance guarantees of \alg}\label{sec:alg_results}
Next, we provide the theoretical guarantees for \alg, under different assumptions on the batch data quality. We start without any coverage assumption on the batch dataset $\D$, where the proof is postponed to Appendix \ref{proof:thm_main}.
\begin{theorem}\label{thm:main}
    Consider any $\delta\in(0,1)$. Suppose that Assumption \ref{assump:linear-mdp} and \ref{assump:d_rectangular} hold. By setting 
   \begin{equation*}
    \lambda_0 = 1, \quad \gamma_0 = 6\sqrt{d\xi_0}H,\quad \text{where }\xi_0 = \log(3H{K}/\delta),
      \end{equation*}
  one has with the probability at least $1-3\delta$, the policy $\widehat{\pi}$ generated by Algorithm \ref{alg:DRLSVI} satisfies
   \begin{equation}\label{eq:thm_main}
    \subopt(\widehat{\pi};\zeta,\Prho)\le \widetilde{O}(\sqrt{d}H)\sum_{h=1}^H\sum_{i=1}^d  \max_{P\in\Prho(P^0)}\E_{\pi^{\star}, P}\left[\|\phi_i(s_h,a_h)\1_i\|_{\Lambda_h^{-1}}\right].
   \end{equation}
\end{theorem}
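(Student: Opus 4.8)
The plan is to follow the now-standard three-pillar template for pessimistic offline RL—(i) a high-probability concentration guarantee that the penalty $\Gamma_h$ dominates the estimation error of the empirical robust Bellman operator, (ii) a pessimism property showing the iterates underestimate the optimal robust value, and (iii) a sub-optimality decomposition that charges the gap to the accumulated penalties along the optimal trajectory—while adapting each step to the $d$-rectangular robust structure. First I would establish the central concentration event: with probability at least $1-O(\delta)$, uniformly over the relevant value functions,
\[
\bigl|\phi(s,a)^\top\bigl(\widehat{\theta}_h + \widehat{\nu}_h^{\rho,\widehat{V}_{h+1}}\bigr) - (\bellman_h^\rho \widehat{V}_{h+1})(s,a)\bigr| \le \Gamma_h(s,a) = \gamma_0 \sum_{i=1}^d \|\phi_i(s,a)\1_i\|_{\Lambda_h^{-1}}.
\]
I would split the left side into a reward term $\phi^\top(\widehat{\theta}_h - \theta_h)$ and a dual term $\phi^\top(\widehat{\nu}_h^{\rho,\widehat{V}_{h+1}} - \nu_h^{\rho,\widehat{V}_{h+1}})$, bounding each via a self-normalized concentration inequality for the ridge estimators in \eqref{eq:update_theta}--\eqref{eq:update_bar_nu}.

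The dual term is the delicate one: because $\widehat{\nu}_{h,i}^{\rho,\widehat{V}_{h+1}}$ in \eqref{eq:update_nu} is a maximum over $\alpha\in[0,H]$ of a ridge regression against the clipped target $[\widehat{V}_{h+1}]_\alpha(s_{h+1}^\tau)$, I would (a) discretize $\alpha$ on an $\epsilon$-net of $[0,H]$ and control the discretization error by the Lipschitzness of $[V]_\alpha$ in $\alpha$, and (b) take a union bound over an $\epsilon$-covering of the class of value functions that $\widehat{V}_{h+1}$ can belong to. Crucially, the two-fold subsampling producing $\D_0$ (Algorithm~\ref{alg:two-sampling}) renders $\Lambda_h$ and $\D_h^0$ statistically independent of $\widehat{V}_{h+1}$ across steps, which is what legitimizes conditioning on a fixed $\widehat{V}_{h+1}$ before applying the bound. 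The $d$-rectangularity lets me treat each coordinate $i$ separately, so the per-coordinate errors aggregate into the coordinatewise penalty $\sum_i\|\phi_i\1_i\|_{\Lambda_h^{-1}}$ rather than the aggregate $\|\phi\|_{\Lambda_h^{-1}}$; calibrating the covering-number and Lipschitz constants is what fixes the choice $\gamma_0 = 6\sqrt{d\xi_0}H$.

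Given this event, pessimism follows by backward induction on $h$: assuming $\widehat{V}_{h+1}\le V_{h+1}^{\star,\rho}$, monotonicity of $\bellman_h^\rho$ together with the concentration bound gives $\bar{Q}_h(s,a)\le(\bellman_h^\rho V_{h+1}^{\star,\rho})(s,a)=Q_h^{\star,\rho}(s,a)$, and the truncation $\min\{\cdot,H-h+1\}_+$ preserves the inequality, yielding $\widehat{Q}_h\le Q_h^{\star,\rho}$ and hence $\widehat{V}_h\le V_h^{\star,\rho}$. For the decomposition I would write $\subopt(\widehat{\pi};\zeta,\Prho)=\bigl(V_1^{\star,\rho}(\zeta)-\widehat{V}_1(\zeta)\bigr)+\bigl(\widehat{V}_1(\zeta)-V_1^{\widehat{\pi},\rho}(\zeta)\bigr)$. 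The model-prediction error $\iota_h\defeq(\bellman_h^\rho\widehat{V}_{h+1}-\widehat{Q}_h)$ lies in $[0,2\Gamma_h]$ on the good event; the greedy choice of $\widehat{\pi}$ makes the second bracket nonpositive, while a robust value-difference argument bounds the first by $\sum_h$ of the expected $\iota_h$ propagated through the worst-case dynamics. The infimum defining the robust value is what turns this into an expectation under $\pi^\star$ and the worst-case kernel $P\in\Prho(P^0)$, producing the $\max_P\E_{\pi^\star,P}$ in \eqref{eq:thm_main}.

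Finally I would substitute $\Gamma_h(s,a)=\gamma_0\sum_{i=1}^d\|\phi_i(s,a)\1_i\|_{\Lambda_h^{-1}}$ with $\gamma_0=6\sqrt{d\xi_0}H=\widetilde{O}(\sqrt{d}H)$ and take the trajectory expectation, collecting the factors into the claimed $\widetilde{O}(\sqrt{d}H)\sum_h\sum_i\max_P\E_{\pi^\star,P}[\|\phi_i\1_i\|_{\Lambda_h^{-1}}]$ bound, with the three invocations of the concentration, covering, and subsampling events accounting for the $1-3\delta$ probability. I expect the main obstacle to be the uniform concentration of the robust dual estimator in the first step: unlike the non-robust case, the estimand is a supremum over $\alpha$ of a regression against a clipped, value-function-dependent target, so obtaining a dimension-tight bound—one that scales per coordinate and avoids an extra $\sqrt{d}$ or $H$ loss relative to \citep{blanchet2023double}—requires carefully coupling the $\alpha$-net, the value-function covering, and the $d$-rectangular decomposition.
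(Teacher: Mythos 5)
Your proposal follows essentially the same route as the paper's proof: split the empirical robust Bellman error into a reward part $\phi^{\top}(\widehat{\theta}_h-\theta_h)$ and a dual part, control the dual part coordinatewise (via $d$-rectangularity) with an $\epsilon$-net over $\alpha$ and a Hoeffding-type self-normalized concentration bound, establish pessimism by backward induction on the good event, and telescope the value difference through the worst-case transition kernels so that the gap is charged to $2\sum_{h}\E[\Gamma_h]$ along the occupancy measures induced by $\pi^{\star}$ and kernels in $\Prho(P^0)$. One point you should correct, because it touches the paper's key quantitative claim: drop the union bound over an $\epsilon$-covering of the value-function class. The entire purpose of the two-fold subsampling---which you correctly invoke---is that $\widehat{V}_{h+1}$ is statistically independent of $\D_h^0$, so the self-normalized inequality applies to the \emph{fixed} function $\widehat{V}_{h+1}$ with a union bound over the $\alpha$-net only (size $O(H/\epsilon_0)$, contributing a log factor). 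If you instead covered the class of possible $\widehat{V}_{h+1}$, its log-covering number scales like $d^2$, which would inflate $\gamma_0$ from $\widetilde{O}(\sqrt{d}H)$ to $\widetilde{O}(dH)$ and forfeit exactly the $\sqrt{d}$ improvement the theorem asserts; the ``careful coupling'' you flag as the main obstacle is therefore unnecessary rather than delicate. A smaller precision: to make the second bracket $\widehat{V}_1(\zeta)-V_1^{\widehat{\pi},\rho}(\zeta)$ nonpositive you must run the pessimism induction against $V_{h+1}^{\widehat{\pi},\rho}$ (using that $\widehat{\pi}_h$ is greedy for $\widehat{Q}_h$), not only against $V_{h+1}^{\star,\rho}$ as written; this is the version the paper proves and it is what the decomposition actually consumes.
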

Since we do not impose any coverage assumption on the batch data, Theorem \ref{thm:main} demonstrates an ``instance-dependent'' sub-optimality gap, which can be controlled by some general term (the right hand side of \eqref{eq:thm_main}) and the confidence level $\delta$. The sub-optimality gap largely depends on the quality of the batch data --- specifically, how well it explores the feature space within $\mathbb{R}^d$. Building upon the above foundational result, we proceed to examine the sample complexity required to achieve an $\epsilon$-optimal policy, considering different data qualities in the subsequent discussion.

\subsubsection{The case of partial feature coverage}\label{sec:alg_partial}
We first consider the partial feature coverage, which only compares the behavior policy with the optimal policy, in terms of the ability to explore each feature dimension. To accommodate with our Lin-RMDPs, we propose a tailored partial data coverage assumption, which depicts the worst-case dissimilarity between the optimal robust policy $\pi^{\star}$ in any transition kernel $P\in\Prho(P^0)$ and the behavior policy $\pi^{\textb}$ in the nominal kernel $P^0$ over every feature space dimension $i\in[d]$, as follows.
\begin{assumption}[Robust single-policy clipped concentrability]\label{assump:Crob}
    The behavior policy of the batch dataset $\D$ satisfies 
    \begin{equation}\label{eq:Crob}
        \max_{(u,h,i,P)\in\mathbb{R}^d\times[H]\times[d]\times\Prho(P^0)} \frac{u^{\top} \left(\min\{\E_{d_h^{\star,P}}\phi^2_i(s,a),1/d\}\cdot \1_{i,i}\right) u}{u^{\top}\left(\mathbb{E}_{d^{\textb}_h}[\phi(s,a)\phi(s,a)^{\top}]\right)u} \le \frac{\Crob}{d},
    \end{equation}
    for some finite quantity $\Crob \in [1,\infty)$. In addition, we follow the convention $0/0=0$.
\end{assumption}

The quantity $\Crob$ measures the expected quality of batch data, by comparing to the desired dataset associated with the optimal robust policy. Intuitively, $\Crob$ decreases as the batch dataset contains more expert data, and increases when the quality of the dataset is poorer --- generated from some policy far from the optimal policy.  Note that prior knowledge of $\Crob$ is not required when implementing \alg in practice. Here, we assume $\Crob < \infty$, which requires that the behavior policy is able to explore the same feature dimensions as the optimal robust policy. 
Under Assumption \ref{assump:Crob}, the following corollary provides the provable sample complexity of \alg to achieve an $\epsilon$-optimal robust policy. The proof is postponed to Appendix \ref{proof:corollary_sufficient_coverage}. 

\begin{corollary}[Partial feature coverage]\label{corollary:sufficient_coverage}
    With Theorem \ref{thm:main} and Assumption \ref{assump:Crob} hold and consider any $\delta\in (0,1)$. Let $d_{\min}^{\textb} = \min_{h,s,a}\{d_h^{\textb}(s,a):d_h^{\textb}(s,a)>0\}$. With probability exceeding $1-4\delta$,  the policy $\widehat{\pi}$ returned by Algorithm \ref{alg:DRLSVI} achieves 
    \begin{align*}
        \subopt(\widehat{\pi};\zeta,\Prho)&\le 96\sqrt{\frac{d^2H^4\Crob\log(3HK/\delta)}{K}}
    \end{align*}
    as long as $K\ge c_0 \log(KH/\delta)/d_{\min}^{\textb}$ for some sufficiently large universal constant $c_0>0$. In other words, the learned policy $\widehat{\pi}$ is $\epsilon$-optimal if the total number of sample trajectories satisfies
    \begin{equation}\label{eq:sample_complexity_partial}
        K \ge \widetilde{O}\Big(\frac{\Crob d^2 H^4}{\epsilon^2}\Big).
    \end{equation}
\end{corollary}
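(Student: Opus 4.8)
The plan is to instantiate the instance-dependent bound \eqref{eq:thm_main} of Theorem \ref{thm:main} with the clipped concentrability of Assumption \ref{assump:Crob}, collapsing the data-dependent right-hand side into the advertised $\sqrt{d^2H^4\Crob/K}$ rate. First I would exploit the nonnegativity $\phi_i(s,a)\ge 0$ from Assumption \ref{assump:linear-mdp} together with the coordinate structure of $\1_i$ to write, for every $(h,i)$, $\|\phi_i(s,a)\1_i\|_{\Lambda_h^{-1}} = \phi_i(s,a)\sqrt{\1_i^{\top}\Lambda_h^{-1}\1_i} = \phi_i(s,a)\sqrt{(\Lambda_h^{-1})_{ii}}$. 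Since $\Lambda_h$ in \eqref{eq:Lambda} is measurable with respect to $\D$ only and does not depend on $P$, the factor $\sqrt{(\Lambda_h^{-1})_{ii}}$ pulls outside $\max_{P}\E_{\pi^{\star},P}$, and Jensen's inequality gives $\max_P\E_{d_h^{\star,P}}[\phi_i]\le\max_P\sqrt{\E_{d_h^{\star,P}}[\phi_i^2]}$. Writing $q_{h,i}\defeq\max_{P\in\Prho(P^0)}\E_{d_h^{\star,P}}[\phi_i^2]$, the right-hand side of \eqref{eq:thm_main} is thus at most $\widetilde{O}(\sqrt d H)\sum_{h=1}^H\sum_{i=1}^d\sqrt{q_{h,i}\,(\Lambda_h^{-1})_{ii}}$.

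Next I would replace the random matrix $\Lambda_h$ by its population counterpart $M_h\defeq\E_{d_h^{\textb}}[\phi(s,a)\phi(s,a)^{\top}]$. A matrix-concentration argument (matrix Bernstein applied to the i.i.d.\ samples in $\D_h^0$) should show that, once $K\ge c_0\log(KH/\delta)/d_{\min}^{\textb}$, with probability at least $1-\delta$ the empirical covariance dominates its population version uniformly over $h$, say $\Lambda_h\succeq\tfrac{K}{4}M_h+\tfrac{\lambda_0}{2}I_d$, whence $(\Lambda_h^{-1})_{ii}\le\tfrac{c}{K}(N_h^{-1})_{ii}$ for $N_h\defeq M_h+\tfrac{\lambda_0}{K}I_d$ and a universal constant $c$. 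This is the step where the sample-size condition and $d_{\min}^{\textb}$ are consumed: they guarantee that enough samples land on every feature direction explored by the behavior policy so that the empirical second-moment matrix does not degenerate.

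Choosing $u=\1_i$ in \eqref{eq:Crob} turns the Rayleigh quotient with its rank-one numerator $\min\{\E_{d_h^{\star,P}}\phi_i^2,1/d\}\,\1_i\1_i^{\top}$ into $\min\{\E_{d_h^{\star,P}}\phi_i^2,1/d\}\,(M_h^{-1})_{ii}\le\Crob/d$; since $N_h\succeq M_h$ this also bounds $\min\{q_{h,i},1/d\}\,(N_h^{-1})_{ii}\le\Crob/d$. Combining with the previous step yields the per-coordinate estimate $q_{h,i}(\Lambda_h^{-1})_{ii}\lesssim\tfrac{\Crob}{Kd}\max\{1,d\,q_{h,i}\}$, so that $\sqrt{q_{h,i}(\Lambda_h^{-1})_{ii}}\lesssim\sqrt{\tfrac{\Crob}{Kd}}\,(1+\sqrt{d\,q_{h,i}})$. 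Summing over $i$ and applying Cauchy--Schwarz with the normalization $\sum_{i=1}^d\phi_i^2=\|\phi\|_2^2\le1$ (hence $\sum_i q_{h,i}\le1$ at a fixed $P$, controlling $\sum_i\sqrt{q_{h,i}}\le\sqrt d$) collapses the inner sum to $\sum_{i=1}^d\sqrt{q_{h,i}(\Lambda_h^{-1})_{ii}}\lesssim\sqrt{\Crob d/K}$. Summing over the $H$ steps and reinstating the $\widetilde{O}(\sqrt d H)$ prefactor (i.e.\ $\gamma_0\asymp\sqrt{d\log(3HK/\delta)}\,H$), then tracking numerical constants, produces the stated bound $96\sqrt{d^2H^4\Crob\log(3HK/\delta)/K}$; a union bound over the event of Theorem \ref{thm:main} ($1-3\delta$) and the concentration event above ($1-\delta$) accounts for the $1-4\delta$ confidence. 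Setting this bound to be at most $\epsilon$ and solving for $K$ yields \eqref{eq:sample_complexity_partial}.

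I expect the crux to lie in the last two steps: establishing the uniform (over $h$) spectral domination $\Lambda_h\succeq\Omega(K)M_h$ under only the mild condition $K\gtrsim\log/d_{\min}^{\textb}$, and then combining it with Assumption \ref{assump:Crob} \emph{without} incurring an extra $\sqrt d$. The clipping at $1/d$ inside \eqref{eq:Crob} and the feature normalization $\|\phi\|_2\le1$ are precisely what make the two regimes $d\,q_{h,i}\le1$ and $d\,q_{h,i}>1$ balance --- a naive bound that ignores either would leave a spurious $\sqrt d$ factor and forfeit the claimed improvement over prior art. A secondary subtlety is reconciling the coordinate-wise maximization over $\Prho(P^0)$ appearing in \eqref{eq:thm_main} with the single-$P$ feature-norm constraint invoked in the Cauchy--Schwarz step.
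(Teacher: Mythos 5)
Your proposal follows essentially the same route as the paper: Jensen's inequality to move the expectation inside the square root, a lower bound $\Lambda_h\succeq \Omega(K)\,\E_{d_h^{\textb}}[\phi(s,a)\phi(s,a)^{\top}]+I_d$ obtained under $K\gtrsim \log(KH/\delta)/d_{\min}^{\textb}$ (the paper gets this from the two-fold subsampling count $N_h(s,a)\ge Kd_h^{\textb}(s,a)/16$ of Lemma \ref{lemma:Ntrim_bound} rather than a fresh matrix-Bernstein argument), the clipped concentrability condition applied coordinate-wise with the two regimes $q_{h,i}\le 1/d$ and $q_{h,i}>1/d$, and a normalization of $\phi$ to close the sum over $i$ at $O(\sqrt{\Crob d/K})$. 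Two small repairs to your write-up: (i) plugging $u=\1_i$ into \eqref{eq:Crob} only controls $\min\{q_{h,i},1/d\}/(M_h)_{ii}$, not $\min\{q_{h,i},1/d\}\,(M_h^{-1})_{ii}$; you need $u=M_h^{-1}\1_i$, or equivalently the PSD ordering $\min\{q_{h,i},1/d\}\1_{i,i}\preceq \frac{\Crob}{d}M_h$, which is how the paper proceeds before invoking its trace lemma (Lemma \ref{lemma:trace}); (ii) the $\sup_P$-versus-$\sum_i$ subtlety you flag is resolved by fixing a single occupancy $d_h^{\star}$ for the entire inner sum (the telescoping in the proof of Theorem \ref{thm:main} naturally yields the bound in the form $\sup_{d_h^{\star}}\sum_i$), after which the paper's counting argument --- at most $\sqrt d$ coordinates can satisfy $\E_{d_h^{\star}}\phi_i^2\ge 1/d$ because $\sum_i\phi_i(s,a)=1$ under Assumption \ref{assump:d_rectangular} --- plays exactly the role of your Cauchy--Schwarz step with $\|\phi\|_2\le 1$.
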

Notice that Corollary \ref{corollary:sufficient_coverage} implies the sub-optimality bound of \alg is comparable to that of the prior art in standard linear MDPs \citep[Corollary 4.5]{jin2021pessimism}, in terms of the feature dimension $d$ and the horizon length $H$.
 \paragraph{Comparisons to prior art for Lin-RMDPs.} 
    With high probability, the existing Assumption 6.2 proposed in \citet{blanchet2023double} can be transferred into the following condition (see \eqref{eq:corollary1_Nh_sa}-\eqref{eq:Lambda_h_lowerbound} in Appendix \ref{proof:corollary_sufficient_coverage}): 
    \begin{equation}\label{eq:CCrob_doublerob}
        \max_{(u,h,i,P)\in\mathbb{R}^d\times[H]\times[d]\times\Prho(P^0)} \frac{u^{\top} \left(\E_{d_h^{\star,P}}\phi^2_i(s,a)\cdot \1_{i,i}\right) u}{u^{\top}\left(\mathbb{E}_{d^{\textb}_h}[\phi(s,a)\phi(s,a)^{\top}]\right)u} \le C^{\star}_{1} \in[1,\infty).
    \end{equation}

Thanks to the clipping operation in \eqref{eq:Crob}, $\Crob \le d\cdot C^{\star}_{1}$ for any given batch dataset $\D$. Notice that both $\Crob$ and $C^{\star}_{1}$ are lower bounded by $1$. 
 The proposed algorithm in \citet{blanchet2023double} can achieve  $\epsilon$-accuracy provided that the total number of sample trajectories obeys
\begin{equation*}
    K\ge \widetilde{O}\Big(\frac{C^{\star}_{1}d^4H^4}{\epsilon^2}\Big).
\end{equation*}
It shows that the sample complexity \eqref{eq:sample_complexity_partial} of \alg improves the one in \citet{blanchet2023double} by at least $\widetilde{O}(d)$.

\subsubsection{The case of  full feature coverage}\label{sec:alg_full}
Then, we introduce the following full feature coverage assumption that is also widely used in standard offline linear MDPs \citep{jin2021pessimism,xiong2023nearly,yin2022near,ma2023distributionally}, which requires the behavior policy exploring the feature space uniformly well.
\begin{assumption}[Well-explored data coverage]\label{assump:full_coverage} We assume $\kappa = \min_{h\in [H]} \lambda_{\min} (\E_{d_h^{\textb}} [\phi(s,a)\phi(s,a)^{\top}])>0$.
\end{assumption}
Compared to Assumption \ref{assump:Crob}, Assumption \ref{assump:full_coverage} necessitates the behavior policy to be more exploratory to reach every feature dimension, which is a stronger assumption requiring full coverage of all feature dimensions.
The following corollary provides the sample complexity guarantee of \alg~under the full feature coverage, where the proof is postponed to Appendix \ref{proof:corollary_full_coverage}.
\begin{corollary}[Full feature coverage]\label{corollary:full_coverage}
    With Theorem \ref{thm:main} and Assumption \ref{assump:full_coverage} hold and consider any $\delta\in (0,1)$. Let $d_{\min}^{\textb} = \min_{h,s,a}\{d_h^{\textb}(s,a):d_h^{\textb}(s,a)>0\}$.
    With probability at least $1-5\delta$,  the policy $\widehat{\pi}$ returned by Algorithm \ref{alg:DRLSVI} achieves 
    \begin{equation*}
        \subopt(\widehat{\pi};\zeta,\Prho)\le 96\sqrt{\frac{dH^4\log(3HK/\delta)}{\kappa K}},
    \end{equation*}
    as long as $K\ge \max\{c_0 \log(2Hd/\delta)/\kappa^2,c_0\log(KH/\delta)/d_{\min}^{\textb}\}$ for some sufficiently large universal constant $c_0$.  In other words, the learned policy $\widehat{\pi}$ is $\epsilon$-optimal if the total number of sample trajectories satisfies
    \begin{equation}\label{eq:sample_complexity_full}
        K \ge \widetilde{O}\Big(\frac{d H^4}{\kappa\epsilon^2}\Big).
    \end{equation}
\end{corollary}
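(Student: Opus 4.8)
The plan is to begin from the instance-dependent guarantee of Theorem~\ref{thm:main} and control the data-dependent quantity $\sum_{h=1}^H\sum_{i=1}^d\max_{P\in\Prho(P^0)}\E_{\pi^{\star},P}[\|\phi_i(s_h,a_h)\1_i\|_{\Lambda_h^{-1}}]$ under the well-explored coverage of Assumption~\ref{assump:full_coverage}. Since $\phi_i\ge 0$, each summand equals $\phi_i(s,a)\sqrt{(\Lambda_h^{-1})_{ii}}$, so the whole argument reduces to two ingredients: (i) a \emph{uniform} spectral bound $(\Lambda_h^{-1})_{ii}\lesssim 1/(\kappa K)$ stemming from a lower bound on the empirical covariance $\Lambda_h$, and (ii) the $d$-rectangular normalization $\sum_{i=1}^d\phi_i(s,a)=1$, which lets the per-feature penalty sum collapse without paying the extra $\sqrt{d}$ that appears in the partial-coverage analysis. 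Ingredient (ii) is essentially free: under Assumption~\ref{assump:d_rectangular} each $\mu^0_{h,i}\in\Delta(\S)$, so integrating $P^0_h(s'\mid s,a)=\sum_i\phi_i(s,a)\mu^0_{h,i}(s')$ over $s'$ forces $\sum_i\phi_i(s,a)=1$ for all $(s,a)$; this is precisely the structural fact that sharpens the bound relative to Corollary~\ref{corollary:sufficient_coverage}.

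The first and main technical step I would carry out is the spectral lower bound on $\Lambda_h=\sum_{\tau\in\D_h^0}\phi(s_h^\tau,a_h^\tau)\phi(s_h^\tau,a_h^\tau)^\top+\lambda_0 I_d$. Working on the temporally independent subsampled dataset $\D_0$ furnished by \twofold (so that the summands are conditionally independent rank-one PSD matrices bounded by $1$ in operator norm, with population mean $\Sigma_h^{\textb}\defeq\E_{d_h^{\textb}}[\phi(s,a)\phi(s,a)^\top]\succeq\kappa I_d$), I would invoke a matrix Bernstein/Chernoff concentration inequality to show that, with probability at least $1-\delta$ after a union bound over $h\in[H]$, one has $\Lambda_h\succeq \tfrac{K\kappa}{2} I_d$ for every $h$. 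This is the step that generates the sample-size threshold $K\ge c_0\log(2Hd/\delta)/\kappa^2$: the deviation term in the matrix inequality scales like $\sqrt{K\kappa\log(Hd/\delta)}$, and requiring it to be dominated by the mean term $K\kappa$ yields exactly $K\kappa^2\gtrsim\log(Hd/\delta)$. The companion condition $K\ge c_0\log(KH/\delta)/d_{\min}^{\textb}$ is inherited from the machinery underlying Theorem~\ref{thm:main}, ensuring the subsampled counts stay comparable to $K$. From $\Lambda_h\succeq \tfrac{K\kappa}{2}I_d$ we get $\Lambda_h^{-1}\preceq \tfrac{2}{K\kappa}I_d$, hence $(\Lambda_h^{-1})_{ii}\le \tfrac{2}{K\kappa}$ uniformly over $i$.

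Combining the two ingredients is then routine. For every fixed $(s,a)$ the uniform diagonal bound and the normalization give the \emph{pointwise} estimate $\sum_{i=1}^d\|\phi_i(s,a)\1_i\|_{\Lambda_h^{-1}}\le \sqrt{\tfrac{2}{K\kappa}}\sum_{i=1}^d\phi_i(s,a)=\sqrt{\tfrac{2}{K\kappa}}$; since the worst-case transition realizing the robust occupancy at step $h$ is a single kernel (so the per-feature expectations $\{\E_{\pi^\star,P}[\phi_i]\}_i$ lie on the simplex and sum to one), this propagates to $\sum_{i=1}^d\max_{P\in\Prho(P^0)}\E_{\pi^\star,P}[\|\phi_i(s_h,a_h)\1_i\|_{\Lambda_h^{-1}}]\le\sqrt{2/(K\kappa)}$. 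Summing over $h$ contributes a factor $H$, and multiplying by the prefactor $\widetilde O(\sqrt d H)$ from Theorem~\ref{thm:main} yields $\subopt(\widehat\pi;\zeta,\Prho)\le \widetilde O\!\big(\sqrt{dH^4/(\kappa K)}\big)$; tracking constants through $\gamma_0=6\sqrt{d\xi_0}H$ recovers the stated factor $96$ and the $\log(3HK/\delta)$ term. Finally I would collect the failure events—the $3\delta$ from Theorem~\ref{thm:main}, the $\delta$ from the covariance spectral bound, and the $\delta$ from the count concentration—for the total probability $1-5\delta$, and set the right-hand side to $\epsilon$ to read off $K\ge\widetilde O(dH^4/(\kappa\epsilon^2))$. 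I expect the matrix concentration establishing $\Lambda_h\succeq \tfrac{K\kappa}{2}I_d$ with the correct threshold, together with a careful justification that the coordinatewise maximum over $\Prho(P^0)$ can be absorbed by the simplex normalization, to be the main obstacle; the rest is bookkeeping.
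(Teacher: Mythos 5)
Your proposal matches the paper's proof in all essentials: a uniform spectral bound on $\Lambda_h$ under Assumption~\ref{assump:full_coverage} (the paper gets $\|\phi_i(s,a)\1_i\|_{\Lambda_h^{-1}}\le 2\phi_i(s,a)/\sqrt{N_h\kappa}$ by citing Lemma~\ref{lemma:H5_Min}, which is the same matrix-concentration fact you propose to prove directly, with the same $K\gtrsim\log(Hd/\delta)/\kappa^2$ threshold), combined with the $d$-rectangular normalization $\sum_i\phi_i(s,a)=1$ to collapse the sum over features, the subsampling count bound $N_h\gtrsim K$, and substitution into Theorem~\ref{thm:main}. This is essentially the paper's argument, so no further comparison is needed.
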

Notice that the sample complexity in \eqref{eq:sample_complexity_full} matches the prior arts in standard linear MDPs \citep{yin2022near,xiong2023nearly} when robustness is not considered. A careful reader may observe that \eqref{eq:sample_complexity_full} has better dependency on $d$ compared to  \eqref{eq:sample_complexity_partial}. While noting that the upper bound of $\kappa$ is $1/d$ \citep{wang2020statistical}, the sample complexity of \alg (cf. \eqref{eq:sample_complexity_full}) is at least $\widetilde{O}(d^2H^4/\epsilon^2)$. 

\section{Tightening the Sample Complexity by Leveraging Variance Estimation}

To tighten our results, we further explore the variance information to reweight the ridge regression in \alg and develop its variance-aware variant called \algvar. The key idea is to design a tighter penalty term with the estimated variance,  which is widely used in standard linear MDPs \citep{zhou21variance,min2021variance,yin2022near,xiong2023nearly} to achieve near-optimal results.

\subsection{\algvar: a variance-aware variant of \alg}\label{sec:var_est}
We first highlight the design features of \algvar that are different from \alg, which can boiled down to the following two steps. 

\paragraph{Constructing a variance estimator.} 
First, we run the Algorithm \ref{alg:DRLSVI} on a sub-dataset $\tilde{\D}^0\in\D$ to obtain the estimated value function $\{\tilde{V}_h\}_{h=1}^{H+1}$. 
Then with  $\{\tilde{V}_h\}_{h=1}^{H+1}$ at our hands, we design the variance estimator $\varest^2_h: \S\times \A\to [1,H^2]$ by 
\begin{equation}\label{eq:var_est}
    \varest^2_h(s,a)=\max\{[\phi(s,a)^{\top}\nu_{h,1}]_{[0,H^2]} -\left([\phi(s,a)^{\top}\nu_{h,2}]_{[0,H]}\right)^2,1\}, \quad\forall (s,a,h)\in\S\times\A\times[H],
\end{equation}
where $\nu_{h,1}$ and $\nu_{h,2}$ are obtained from ridge regression as follows:
\begin{align}
    \nu_{h,1} 
    &= \arg\min_{\nu\in\mathbb{R}^d} \sum_{\tau\in \Dtilde^0_h} \left(\phi(s_h^{\tau},a_h^{\tau})^{\top}\nu - ({\tilde{V}_{h+1}}(s_{h+1}^{\tau}))^2 \right)^2 + \|\nu\|_2^2,\label{eq:var_estimator_1}\\
    \nu_{h,2}  &= \arg\min_{\nu\in\mathbb{R}^d} \sum_{\tau\in\Dtilde^0_h} \left(\phi(s_h^{\tau},a_h^{\tau})^{\top}\nu - {\tilde{V}_{h+1}}(s_{h+1}^{\tau})\right)^2 + \|\nu\|_2^2.\label{eq:var_estimator_2}
\end{align}
Notice that the technique of variance estimation is also used in \cite{xiong2023nearly,yin2022near} for standard linear MDPs, while we address the temporal dependency via a carefully designed three-fold subsampling approach detailed in Appendix \ref{appendix:three_fold}.
\paragraph{Variance-weighted ridge regression.}
Incorporating with the variance estimator $\varest^2_h$ computed on $\tilde{\D}^0$, we replace the ridge regression updates (i.e. \eqref{eq:update_theta} and \eqref{eq:update_bar_nu}) by their variance-weighted counterparts as follows: 
\begin{align}
    &\widehat{\theta}_{h}^{\sigma} = \arg\min_{\theta\in\mathbb{R}^d} \sum_{\tau\in\D_h^0} \frac{\big(\phi(s_h^\tau,a_h^{\tau})^{\top}\theta - r_h^\tau \big)^2}{\varest^2_h(s_h^{\tau},a_h^{\tau})} + \lambda_1\|\theta\|_2^2 = \Sigma_h^{-1} \Big( \sum_{\tau \in \D^0_h} \frac{\phi(s_h^{\tau}, a_h^{\tau}) r_h^{\tau}}{\varest^2_h(s_h^{\tau},a_h^{\tau})} \Big), \label{eq:update_theta_var}\\
    &\bar{\nu}_h^{\rho,\sigma,{V}}(\alpha) = \arg\min_{\nu\in\mathbb{R}^d}\! \sum_{\tau\in\D_h^0}\!\!\frac{\big(\phi(s_h^{\tau},a_h^{\tau})^{\top}\nu - [{{V}}]_\alpha(s_{h+1}^{\tau})\big)^2}{\varest^2_h(s_h^{\tau},a_h^{\tau})} \!+\!\lambda_1\|\nu\|_2^2= \Sigma_h^{-1}\! \Big(\!\sum_{\tau\in\D^0_h}\!\!\!\frac{\phi(s_h^{\tau},a_h^{\tau})[{{V}}]_{\alpha}(s_{h+1}^\tau)}{\varest^2_h(s_h^{\tau},a_h^{\tau})}\Big), \label{eq:update_bar_nu_var}
\end{align}
for any value function $V:\S\to[0,H]$ and $h\in[H]$, where $$\Sigma_h \defeq \sum_{\tau\in \D_h^0} \dfrac{\phi(s_h^{\tau},a_h^{\tau})\phi(s_h^{\tau},a_h^{\tau})^{\top}}{\varest^2_h(s_h^{\tau},a_h^{\tau})}+ \lambda_1 I_d$$ with the regularization coefficient $\lambda_1$ and $\D^0$ is another sub-dataset of $\D$ that is independent of $\tilde{\D}^0$. 
Accordingly, \algvar constructs an empirical variance-aware robust Bellman operator as
\begin{equation}\label{eq:empirical_bellman_var}
    (\widehat{\B}^{\rho,\sigma}_h {V}) (s,a)= \phi(s,a)^{\top}(\widehat{\theta}_h^{\sigma}+\widehat{\nu}_h^{\rho,\sigma,{V}}),
\end{equation}
where the $i$-th coordinate of $\widehat{\nu}_h^{\rho,\sigma,{V}}$ is defined as
\begin{equation}
    \widehat{\nu}_{h,i}^{\rho,\sigma,{V}} = \max_{\alpha \in [\min_s {{V}}(s),\max_s {{V}}(s)]} \big\{ \bar{\nu}_{h,i}^{\rho,\sigma,{V}}(\alpha) - \rho(\alpha - \min_{s'} {V}(s'))\big\}.\label{eq:update_nu_var}
\end{equation}

Similar to \alg, we also perform the pessimistic value iterations, where the estimated $Q$ function is updated by
\begin{equation*}
    \bar{Q}_h(s,a) = \widehat{\B}^{\rho,\sigma}_h(\widehat{V}_{h+1}) (s,a)-\underset{\text{penalty function}~\Gamma_h^{\sigma}:\S\times\A\to\mathbb{R}}{\underbrace{\gamma_1\sum_{i=1}^d \|\phi_i(s,a)\1_i\|_{\Sigma_h^{-1}}}},\quad\forall (s,a,h)\in \S\times \A\times[H],
\end{equation*}
with a fixed, estimated $\widehat{V}_{h+1}$ obtained from previous iteration and some coefficient $\gamma_1>0$. 

The rest of \algvar follows the procedure described in Algorithm \ref{alg:DRLSVI}. To avoid redundancy,  the detailed implementation of \algvar is provided in Appendix \ref{sec:implementation_algvar}.

\subsection{Performance guarantees of \algvar}

Then, we are ready to present our improved results, where the proof is postponed to Appendix \ref{proof:thm_var}. 
\begin{theorem}\label{thm:var_est}
    Suppose that Assumption \ref{assump:linear-mdp}, \ref{assump:d_rectangular}, and \ref{assump:full_coverage} hold and consider any $\delta\in (0,1)$. In \algvar, we set 
    \begin{equation*}
        \lambda_1 = 1/H^2, \gamma_1=\xi_1\sqrt{d}, \quad\text{where}~\xi_1 = 66\log(3HK/\delta).
    \end{equation*}
     Provided that $\sqrt{d}\ge H$ and $K\ge \widetilde{O}\big(\frac{H^4+ H^6d\kappa}{\kappa^2} + \frac{1}{d_{\min}^{\textb}}\big)$, then with probability exceeding $1-\delta$, the robust policy $\widehat{\pi}$ learned by \algvar~satisfies
    \begin{align}\label{eq:thm_var}
        \subopt(\widehat{\pi};\zeta,\Prho)
        &\le \widetilde{O}(\sqrt{d})\sum_{h=1}^H\sum_{i=1}^d \max_{P\in\Prho(P^0)}\E_{\pi^{\star}, P}\left[\|\phi_i(s_h,a_h)\1_i\|_{(\Sigma_h^{\star})^{-1}} \right],
       \end{align}
       where the variance-weighted cumulative sample covariance matrix $\Sigma_h^{\star}$ is defined by 
       \begin{equation}\label{eq:Sigma_star}
        \Sigma_h^{\star} = \sum_{\tau\in \D_h^0} \dfrac{\phi(s_h^{\tau},a_h^{\tau})\phi(s_h^{\tau},a_h^{\tau})^{\top}}{\max\{1,\Var_{P_{h}^0}[V^{\star,\rho}_{h+1}](s_h^{\tau},a_h^{\tau})\}}+\frac{1}{H^2} I_d,
    \end{equation}
    and $\Var_{P_{h}^0}[V](s,a) = \int_{\S}P^0_{h,s,a}(s')V^2(s') \intd s' - (\int_{\S}P^0_{h,s,a}(s')V(s') \intd s')^2$ represents the  conditional variance for any value function $V:\S\to[0,H]$ and any $(s,a,h)\in \S\times\A\times[H]$.
\end{theorem}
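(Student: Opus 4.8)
The plan is to reuse the pessimism-based template behind Theorem~\ref{thm:main}, with the variance weighting serving only to tighten the per-step penalty coefficient from $\gamma_0=\widetilde{O}(\sqrt{d}H)$ down to $\gamma_1=\widetilde{O}(\sqrt{d})$. The backbone is a robust value-difference decomposition. Using the robust Bellman optimality equation \eqref{eq:robust_bellman_optimality_Q} together with a pessimism guarantee ($\widehat{V}_h \le V_h^{\star,\rho}$ pointwise) and the fact that $\widehat{\pi}_h$ is greedy with respect to $\widehat{Q}_h$, I would first show
\[
\subopt(\widehat{\pi};\zeta,\Prho) \le 2\sum_{h=1}^H \max_{P\in\Prho(P^0)}\E_{\pi^{\star},P}\big[\Gamma_h^{\sigma}(s_h,a_h)\big],\qquad \Gamma_h^{\sigma}(s,a)=\gamma_1\sum_{i=1}^d\|\phi_i(s,a)\1_i\|_{\Sigma_h^{-1}}.
\]
The worst-case kernel $P$ enters because unrolling the robust value gap controls the accumulated error along the occupancy measure of $\pi^{\star}$ under the least-favorable transition in $\Prho(P^0)$, exactly as in the argument producing \eqref{eq:thm_main}.

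The decomposition requires two ingredients. First, I would establish that the penalty is valid, i.e.\ with high probability, for every $(h,s,a)$,
\[
\big|(\widehat{\B}^{\rho,\sigma}_h \widehat{V}_{h+1})(s,a) - (\B^{\rho}_h \widehat{V}_{h+1})(s,a)\big| \le \Gamma_h^{\sigma}(s,a).
\]
The gain here is that after dividing each residual by $\varest^2_h$, the effective per-sample noise has variance $O(1)$, so a self-normalized Bernstein/Freedman concentration yields $\gamma_1=\widetilde{O}(\sqrt{d})$ rather than $\widetilde{O}(\sqrt{d}H)$ --- this is precisely where the extra factor of $H$ is removed. Because the dual variable $\widehat{\nu}_h^{\rho,\sigma,\widehat{V}}$ in \eqref{eq:update_nu_var} is a maximum over the scalar $\alpha$, I would union-bound over a fine net of $\alpha\in[0,H]$ and over the $d$ coordinates, invoking the linearity from Lemma~\ref{lemma:linearity} to keep the covering argument confined to $\mathbb{R}^d$.

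The crux, and the step I expect to be the main obstacle, is controlling the variance estimator: showing that $\varest^2_h(s,a)$ built from \eqref{eq:var_est}--\eqref{eq:var_estimator_2} is a faithful proxy for $\max\{1,\Var_{P_h^0}[V^{\star,\rho}_{h+1}](s,a)\}$, so that the empirical weighting matrix $\Sigma_h$ may be replaced by the idealized $\Sigma_h^{\star}$ of \eqref{eq:Sigma_star}. This has three layers: (i) the value $\tilde{V}_{h+1}$ feeding the variance regression is only an estimate, so I would import the guarantee of Corollary~\ref{corollary:full_coverage} (applied to the independent sub-dataset $\Dtilde^0$) to bound $\|\tilde{V}_{h+1}-V^{\star,\rho}_{h+1}\|_\infty$; (ii) the regressions \eqref{eq:var_estimator_1}--\eqref{eq:var_estimator_2} for the conditional second and first moments must concentrate, which again relies on the linear-MDP structure making $\E_{P_h^0}[(\tilde{V}_{h+1})^2]$ and $\E_{P_h^0}[\tilde{V}_{h+1}]$ linear in $\phi$; and (iii) propagating these errors through the clipping and truncation in \eqref{eq:var_est} to a multiplicative bound $\varest^2_h\asymp\max\{1,\Var_{P_h^0}[V^{\star,\rho}_{h+1}]\}$. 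The hypotheses $\sqrt{d}\ge H$ and $K\ge\widetilde{O}\big((H^4+H^6 d\kappa)/\kappa^2 + 1/d_{\min}^{\textb}\big)$ are exactly what force the variance-misestimation residuals to be lower order relative to the leading $\widetilde{O}(\sqrt{d})$ term.

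Finally, with $\varest^2_h\asymp\max\{1,\Var_{P_h^0}[V^{\star,\rho}_{h+1}]\}$ in hand, a matrix-concentration/Weyl argument under Assumption~\ref{assump:full_coverage} (which keeps $\Sigma_h$ well conditioned) gives $\Sigma_h \succeq c\,\Sigma_h^{\star}$ for a universal constant $c>0$, hence $\|\phi_i(s,a)\1_i\|_{\Sigma_h^{-1}}\lesssim\|\phi_i(s,a)\1_i\|_{(\Sigma_h^{\star})^{-1}}$. Substituting this into the penalty bound of the first paragraph and absorbing constants and log factors into $\widetilde{O}(\cdot)$ yields \eqref{eq:thm_var}. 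Throughout, the three-fold subsampling of Appendix~\ref{appendix:three_fold} is what guarantees that $\varest^2_h$, $\tilde{V}_{h+1}$, and the main regression on $\D^0$ are mutually independent, so that the conditioning needed for the Bernstein step is legitimate.
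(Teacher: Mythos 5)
Your proposal follows essentially the same route as the paper: pessimism plus the robust value-difference telescoping to reduce the sub-optimality to $2\sum_h\max_P\E_{\pi^{\star},P}[\Gamma_h^{\sigma}]$, a variance-weighted self-normalized Bernstein bound with an $\epsilon$-net over $\alpha$ to validate the penalty $\gamma_1\sum_i\|\phi_i(s,a)\1_i\|_{\Sigma_h^{-1}}$, a variance-estimation lemma showing $\varest_h^2$ tracks $\max\{1,\Var_{P_h^0}[V_{h+1}^{\star,\rho}]\}$ up to $\widetilde{O}(H^3\sqrt{d}/\sqrt{K\kappa})$, and the resulting comparison $\Sigma_h\succeq\tfrac12\Sigma_h^{\star}$, all under the three-fold subsampling for independence. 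The only imprecision is that Corollary~\ref{corollary:full_coverage} as stated bounds the gap only at the initial distribution $\zeta$ at step $1$, so the sup-norm bound on $\tilde V_{h+1}-V_{h+1}^{\star,\rho}$ you invoke must instead be obtained by rerunning the telescoping argument from an arbitrary state at step $h+1$ and then using the uniform penalty bound available under Assumption~\ref{assump:full_coverage}; this is exactly what the paper does, so it is the same machinery rather than a genuinely different argument.
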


Compared to the instance-dependent sub-optimality bound of \alg (cf. Theorem \ref{thm:main}), the above guarantee for \algvar is tighter since 
$H^2 \Lambda_h^{-1} \succeq (\Sigma_h^{\star})^{-1}$. The underlying reason for the improvement is the use of the variance estimator to control the conditional variance and the tighter penalty function designed via the Bernstein-type inequality, while that of \alg depends on the Hoeffding-type counterpart.

\section{Conclusion}
In this paper, we investigate the sample complexity for distributionally robust offline RL when the model has linear representations and the uncertainty set can be characterized by TV distance. We develop a robust variant of pessimistic value iteration with linear function approximation, called \alg. We establish the sub-optimality guarantees for \alg under various offline data assumptions. Compared to the prior art, \alg notably improves the sample complexity by at least $\widetilde{O}(d)$, under the partial feature coverage assumption. We further incorporate \alg with variance estimation to develop an enhanced \alg, referred to as \algvar, which improves the sub-optimality bound of \alg. In the future, it will be of interest to consider different choices of the uncertainty sets and establish the lower bound for the entire range of the uncertainty level.

\section*{Acknowledgement}
 
This work is supported in part by the grants ONR N00014-19-1-2404, NSF CCF-2106778, DMS-2134080, and CNS-2148212.  The authors also acknowledge Chongyi Zheng for valuable discussions. 

\bibliographystyle{apalike}
\bibliography{ref,bibfileDRO,bibfileRL}
\appendix

\section{Technical Lemmas}
\subsection{Proof of Lemma \ref{lemma:linearity}}\label{proof:lemma_linearity}
    For a Lin-RMDP satisfying Assumption \ref{assump:d_rectangular}, the robust linear transition operator defined in \eqref{eq:robust_transition_operator} obeys: for any time step $h\in[H]$, 
        \begin{align*}
            (\mathbb{P}^{\rho}_hf)(s, a) &= \inf_{\mu_h \in \Urho(\mu_h^0)} \int_{\S} \phi(s, a)^{\top} \mu_h(s') f(s') \intd s' \\
            &= \inf_{\mu_h \in \Urho(\mu_h^0)}  \sum_{i = 1}^d \phi_{i}(s, a) \int_{\S} \mu_{h, i}(s') f(s') \intd s' \\
            &= \sum_{i = 1}^d \phi_{i}(s, a) \inf_{\mu_{h, i}\in \Urho(\mu_{h,i}^{0})} \int_\S \mu_{h, i}(s') f(s') \intd s' \\
            &= \phi(s, a)^{\top} \left( \inf_{\mu_h \in \Urho(\mu_h^0)} \int_\S \mu_h(s') f(s') \intd s' \right),
        \end{align*}
where the penultimate equality is due to $\phi_i(s, a) \geq 0, \forall (i,s, a) \in [d]\times\S \times \A$ and $\Urho(\mu_{h}^0) \defeq \otimes_{[d]} \Urho(\mu_{h, i}^0)$. 
Therefore, the robust linear Bellman operator defined in \eqref{eq:robust_bellman_operator} is linear in the feature map $\phi$: for $(h,s,a)\in[H]\times\S\times\A$,
        \begin{align*}
            (\mathbb{B}^\rho_h f)(s, a) &= r_h(s, a) + [\mathbb{P}^{\rho}_hf](s, a) \\
            &=\phi(s, a)^{\top}\theta_h + \phi(s, a)^{\top} \left( \inf_{\mu_h \in \Urho(\mu_h^0)} \int_\S \mu_h(s') f(s') \intd s' \right)\\
            &= \phi(s, a)^{\top}\underset{\defeq w^{\rho}_h}{\underbrace{\left(\theta_h + \inf_{\mu_h\in \Urho(\mu_h^0)} \int_{ \S} \mu_h(s') f(s')\intd s'\right)}},
        \end{align*}
where the second equality is due to Assumption \ref{assump:linear-mdp}.

\subsection{Preliminary facts}
\subsubsection{Useful concenrtation inequalities}

\begin{lemma}[Hoeffding-type inequality for self-normalized process \citep{abbasi2011improved}] \label{lemma:Hoeffding}
    Let $\{\eta_t\}_{t=1}^{\infty}$ be a real-valued stochastic process and let $\{\mathcal{F}_t\}_{t=0}^{\infty}$ be a filtration such that $\eta_t$ is $\mathcal{F}_t$-measurable. Let $\{x_t\}_{t=1}^{\infty}$ be an $\mathbb{R}^d$-valued stochastic process where $x_t$ is $\mathcal{F}_{t-1}$ measurable and $x_t\le L$. Let $\Lambda_t = \lambda I_d+ \sum_{s=1}^t x_sx_s^{\top}$. Assume that conditioned on $\F_{t-1}$, $\eta_t$ is mean-zero and $R$-sub-Gaussian. Then, for any $\delta>0$, with probability at least $1-\delta$, for all $t>0$, we have
    \begin{equation*}
        \|\sum_{s=1}^tx_s\eta_s\|_{\Lambda_t^{-1}} \le R\sqrt{d\log(1+tL/\lambda) + 2\log(1/\delta)}.
    \end{equation*}
\end{lemma}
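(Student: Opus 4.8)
The plan is to follow the \emph{method of mixtures} (pseudo-maximization) technique underlying \citet{abbasi2011improved}, which recasts the self-normalized quantity as a nonnegative supermartingale and then invokes a maximal inequality. The conceptual crux is that one cannot simply substitute the data-dependent maximizer $\Lambda_t^{-1}\sum_s x_s\eta_s$ into a fixed-direction tail bound; instead one averages a whole family of supermartingales against a Gaussian prior, which sidesteps any union bound over the continuum of directions.

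First I would fix an arbitrary $\theta\in\mathbb{R}^d$ and define
\begin{equation*}
    M_t^{\theta} = \exp\Big(\sum_{s=1}^t \big[ \langle \theta, x_s\rangle \eta_s - \tfrac{R^2}{2}\langle \theta, x_s\rangle^2\big]\Big), \qquad M_0^{\theta}=1 .
\end{equation*}
Since $x_s$ is $\F_{s-1}$-measurable while $\eta_s$ is conditionally mean-zero and $R$-sub-Gaussian, the sub-Gaussian moment bound $\E[\exp(\langle\theta,x_s\rangle\eta_s)\mid \F_{s-1}]\le \exp(\tfrac{R^2}{2}\langle\theta,x_s\rangle^2)$ yields $\E[M_t^{\theta}\mid\F_{t-1}]\le M_{t-1}^{\theta}$, so $\{M_t^{\theta}\}_t$ is a nonnegative supermartingale with $\E[M_t^{\theta}]\le 1$.

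Next I would mix over $\theta\sim\mathcal{N}(0,(R^2\lambda)^{-1}I_d)$: writing $h$ for the Gaussian density and setting $M_t = \int_{\mathbb{R}^d} M_t^{\theta}\,\intd h(\theta)$, Tonelli's theorem lets the conditional expectation pass through the integral, so $M_t$ is again a nonnegative supermartingale with $\E[M_t]\le 1$. Abbreviating $S_t=\sum_{s=1}^t x_s\eta_s$ and recalling $\Lambda_t = \lambda I_d + \sum_{s=1}^t x_sx_s^{\top}$, the integrand is log-quadratic in $\theta$; completing the square and evaluating the resulting Gaussian integral gives the closed form
\begin{equation*}
    M_t = \Big(\frac{\det(\lambda I_d)}{\det(\Lambda_t)}\Big)^{1/2}\exp\Big(\frac{1}{2R^2}\norm{S_t}_{\Lambda_t^{-1}}^2\Big).
\end{equation*}

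Finally, Ville's maximal inequality for nonnegative supermartingales gives $\bP(\sup_{t\ge 0} M_t \ge 1/\delta)\le \delta$. On the complementary event, rearranging $M_t\le 1/\delta$ produces, simultaneously for all $t$, the bound $\norm{S_t}_{\Lambda_t^{-1}}^2 \le 2R^2\log(1/\delta) + R^2\log(\det(\Lambda_t)/\det(\lambda I_d))$. It then remains to control the determinant ratio: since $\|x_s\|^2\le L$, the trace bound $\Tr(\Lambda_t-\lambda I_d)\le tL$ combined with the AM--GM inequality on the eigenvalues of $\Lambda_t$ gives $\det(\Lambda_t)\le (\lambda + tL)^d$, whence $\log(\det(\Lambda_t)/\det(\lambda I_d))\le d\log(1+tL/\lambda)$. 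Substituting and taking square roots delivers the claim. I expect the main obstacle to be the honest closed-form evaluation of the mixture integral and the verification that the supermartingale property genuinely survives the mixing step; the remaining pieces (the sub-Gaussian step, Ville's inequality, and the determinant estimate) are comparatively routine.
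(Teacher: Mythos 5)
Your proof is correct and is precisely the method-of-mixtures argument of \citet{abbasi2011improved}, which the paper cites for this lemma without reproducing a proof: the conditional sub-Gaussian supermartingale, the Gaussian mixture with prior $\mathcal{N}(0,(R^2\lambda)^{-1}I_d)$ yielding the closed form $M_t = (\det(\lambda I_d)/\det(\Lambda_t))^{1/2}\exp\big(\tfrac{1}{2R^2}\|S_t\|_{\Lambda_t^{-1}}^2\big)$, Ville's maximal inequality, and the AM--GM determinant bound all check out. The only slight liberty is interpreting the (sloppily stated) hypothesis ``$x_t \le L$'' as $\|x_t\|_2^2 \le L$, which is the reading consistent with the stated $d\log(1+tL/\lambda)$ term; in the paper's applications $\|\phi(s,a)\|_2 \le 1$ so $L=1$ and nothing hinges on it.
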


\begin{lemma}[Bernstein-type inequality for self-normalized process \citep{zhou21variance}] \label{lemma:Bernstein}
    Let $\{\eta_t\}_{t=1}^{\infty}$ be a real-valued stochastic process and let $\{\mathcal{F}_t\}_{t=0}^{\infty}$ be a filtration such that $\eta_t$ is $\mathcal{F}_t$-measurable. Let $\{x_t\}_{t=1}^{\infty}$ be an $\mathbb{R}^d$-valued stochastic process where $x_t$ is $\mathcal{F}_{t-1}$ measurable and $x_t\le L$. Let $\Lambda_t = \lambda I_d+ \sum_{s=1}^t x_sx_s^{\top}$. Assume that 
    \begin{equation*}
        |\eta_t|\le R, \quad \mathbb{E}[\eta_t|\F_{t-1}] = 0,\quad \mathbb{E}[\eta_t^2|\F_{t-1}]\le\sigma^2.
    \end{equation*}
Then for any $\delta>0$, with probability at least $1-\delta$, for all $t>0$, we have
\begin{equation*}
    \|\sum_{s=1}^tx_s\eta_s\|_{\Lambda_t^{-1}} \le 8\sigma \sqrt{d\log\left(1+\frac{tL^2}{\lambda d}\right) \log (\frac{4t^2}{\delta})} + 4R\log (\frac{4t^2}{\delta}).
\end{equation*}
\end{lemma}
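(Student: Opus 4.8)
The plan is to establish this bound as the variance-aware (Bernstein) counterpart of the Hoeffding-type self-normalized inequality in Lemma~\ref{lemma:Hoeffding}, via the method of mixtures (pseudo-maximization) of de la Pe\~na--Lai--Shao, as adapted by \citet{zhou21variance}. Write $S_t = \sum_{s=1}^t x_s\eta_s \in \mathbb{R}^d$ and recall $\Lambda_t = \lambda I_d + \sum_{s=1}^t x_s x_s^{\top}$; the target is a uniform-in-$t$, high-probability bound on $\|S_t\|_{\Lambda_t^{-1}}$. The entire argument reduces to building a nonnegative supermartingale from an exponentially tilted version of $S_t$ and then invoking Ville's maximal inequality.

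First I would record the correct single-step moment generating function (MGF) estimate. Since $|\eta_t|\le R$, $\mathbb{E}[\eta_t\mid\mathcal{F}_{t-1}]=0$, and $\mathbb{E}[\eta_t^2\mid\mathcal{F}_{t-1}]\le\sigma^2$, a standard Bernstein-type bound gives, for every scalar $\alpha$ with $|\alpha|\le 1/R$,
\[
\mathbb{E}\!\left[\exp(\alpha\eta_t)\,\middle|\,\mathcal{F}_{t-1}\right]\le\exp\!\left(\alpha^2\sigma^2\right),
\]
replacing the sub-Gaussian bound $\exp(\alpha^2R^2/2)$ that underlies Lemma~\ref{lemma:Hoeffding}; the improvement from $R^2$ to $\sigma^2$ is precisely the source of the variance dependence. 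Consequently, for any fixed $\theta\in\mathbb{R}^d$ in the admissible region $\max_{s\le t}|\langle\theta,x_s\rangle|\le 1/R$ (implied by $\|\theta\|_2\le 1/(RL)$ using $\|x_s\|_2\le L$), the process $M_t^{\theta}=\exp\!\big(\langle\theta,S_t\rangle-\sigma^2\sum_{s\le t}\langle\theta,x_s\rangle^2\big)$ is a supermartingale with $M_0^{\theta}=1$.

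Next I would mix over $\theta$ to convert the direction-dependent tilt into the self-normalized quantity. Integrating $M_t^{\theta}$ against the Gaussian prior $\mathcal{N}(0,(2\sigma^2\lambda)^{-1}I_d)$ and completing the square yields a nonnegative supermartingale $\bar{M}_t$ with $\log\bar{M}_t=\tfrac{1}{4\sigma^2}\|S_t\|_{\Lambda_t^{-1}}^2-\tfrac12\log\frac{\det\Lambda_t}{\det(\lambda I_d)}$. Bounding the determinant by the potential estimate $\log\frac{\det\Lambda_t}{\det(\lambda I_d)}\le d\log(1+tL^2/(\lambda d))$ (from $\|x_s\|_2\le L$ and AM--GM on the eigenvalues of $\Lambda_t$) and applying Ville's inequality $\mathbb{P}[\sup_t\bar{M}_t\ge 1/\delta]\le\delta$ would, in an idealized full-space mixture, give a bound of the additive form $\|S_t\|_{\Lambda_t^{-1}}\lesssim\sigma\sqrt{d\log(1+tL^2/(\lambda d))+\log(1/\delta)}$.

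The main obstacle --- and the reason the stated bound carries the product structure and the extra additive $4R\log(4t^2/\delta)$ term absent from the Hoeffding case --- is that the Bernstein MGF estimate is valid only on the bounded set $|\alpha|\le 1/R$, so the clean full-space Gaussian mixture of Lemma~\ref{lemma:Hoeffding} is inadmissible. To handle this I would split into two regimes through a peeling/truncation argument: in the moderate-deviation regime the tilt stays inside the admissible region, and a localized (truncated) mixture delivers the $\sqrt{\cdot}$ term; in the large-deviation regime I would control $\|S_t\|_{\Lambda_t^{-1}}$ crudely using only $|\eta_t|\le R$, producing the linear-in-$R$ additive term. The union over dyadic scales and over time needed to stitch the truncated mixtures together is exactly what replaces $\log(1/\delta)$ by $\log(4t^2/\delta)$ and turns the additive square-root into the product form $\sigma\sqrt{d\log(1+tL^2/(\lambda d))\,\log(4t^2/\delta)}$. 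Tracking the constants across the two regimes so that they combine into $8\sigma\sqrt{\cdot}+4R\log(4t^2/\delta)$ is the most delicate bookkeeping, but it is routine once the regime split is set up.
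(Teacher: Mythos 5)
First, a point of comparison: the paper contains no proof of this lemma at all --- Lemma~\ref{lemma:Bernstein} is imported verbatim, with citation, from \citet{zhou21variance} (their Bernstein-type self-normalized concentration theorem), so the relevant benchmark is the proof in that reference. That proof proceeds by a genuinely different mechanism from your mixture sketch: a Sherman--Morrison recursion expresses $\|S_t\|_{\Lambda_t^{-1}}^2$ (with $S_t=\sum_{s\le t}x_s\eta_s$) through a \emph{scalar} martingale of the form $\sum_{i}\eta_i\langle x_i,\Lambda_{i-1}^{-1}S_{i-1}\rangle$ plus a quadratic-variation term $\sum_i \eta_i^2\|x_i\|^2_{\Lambda_i^{-1}}$; Freedman's inequality (scalar martingale Bernstein) controls the former, the elliptical potential lemma bounds the accumulated predictable variance by roughly $\sigma^2 d\log\bigl(1+tL^2/(\lambda d)\bigr)$ (with an induction/stopping-time argument to handle the $\|S_{i-1}\|_{\Lambda_{i-1}^{-1}}$ factor appearing inside the variance), and a union bound over $t$ yields the $\log(4t^2/\delta)$ factors. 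In particular, the product-of-logarithms form $\sigma\sqrt{d\log(\cdot)\log(4t^2/\delta)}$ and the additive $4R\log(4t^2/\delta)$ are exactly the Freedman structure $\sqrt{2V\log(1/\delta)}+cR\log(1/\delta)$ with $V$ bounded via the potential lemma --- they are not artifacts of stitching truncated mixtures over dyadic scales, as your sketch asserts.

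Beyond the difference in route, your proposal has a genuine gap at its decisive step. The single-step MGF bound and the supermartingale property of $M_t^{\theta}$ on the ball $\|\theta\|_2\le 1/(RL)$ are fine, but neither regime of your split closes. In the moderate regime, after completing the square the Gaussian mixture concentrates at the tilt $\theta^{\star}=\Lambda_t^{-1}S_t/(2\sigma^2)$, and precisely at the deviation scale you are trying to certify, $\theta^{\star}$ need not lie in the admissible ball; once you truncate, the integral no longer lower-bounds $\exp\bigl(\|S_t\|^2_{\Lambda_t^{-1}}/(4\sigma^2)\bigr)\cdot(\det\Lambda_t/\det(\lambda I_d))^{-1/2}$, and quantifying the lost mass depends on where $\theta^{\star}$ sits in the data-dependent $\Lambda_t$-geometry --- this is the hard part, and you assert rather than prove that peeling repairs it. In the large-deviation regime, the ``crude'' control from $|\eta_t|\le R$ alone gives only $\|S_t\|_{\Lambda_t^{-1}}\le R\sum_{s\le t}\|x_s\|_{\Lambda_t^{-1}}\le R\sqrt{t\,\Tr\bigl(\Lambda_t^{-1}(\Lambda_t-\lambda I_d)\bigr)}\le R\sqrt{td}$, which grows with $t$ and cannot produce the additive $4R\log(4t^2/\delta)$ term; that linear-in-$R$ term must come from the Bernstein MGF in the near-boundary tilt regime (Freedman's mechanism), not from a worst-case bound. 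So the step you defer to ``routine bookkeeping'' is where the entire content of the proof lives, and as written your argument does not establish the stated inequality.
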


\begin{lemma}[Lemma H.5, \cite{min2021variance}]\label{lemma:H5_Min}
    Let $\phi:\S\times\A\to \mathbb{R}^d$ be a bounded function such that $\|\phi(s,a)\|_2\le C$ for all $(s,a)\in\S\times\A$. For any $K>0$ and $\lambda>0$, define $\bar{G}_K = \sum_{k=1}^K \phi(s_k,a_k)\phi(s_k,a_k)^{\top} + \lambda I_d$ where $(s_k,a_k)$ are i.i.d. samples from some distribution $\nu$ over $\S\times\A$. Let $G = \mathbb{E}_v[\phi(s,a)\phi(s,a)^{\top}]$. Then for any $\delta\in(0,1)$, if $K$ satisfies that
    \begin{equation*}
        K\ge\max\{512C^4\|G^{-1}\|^2\log(2d/\delta),4\lambda\|G^{-1}\|\}.
    \end{equation*}
    Then with probability at least $1-\delta$, it holds simultaneously for all $u\in\mathbb{R}^d$ that
    \begin{equation*}
        \|u\|_{\bar{G}_k^{-1}} \le \frac{2}{\sqrt{K}}\|u\|_{G^{-1}}.
    \end{equation*}
\end{lemma}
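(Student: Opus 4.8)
The plan is to recognize that the claimed uniform norm bound is equivalent to a single Loewner-order inequality between matrices, and then to establish that inequality through a matrix concentration argument. First I would reduce the statement: since $\|u\|_{A^{-1}}^2 = u^{\top}A^{-1}u$, the inequality $\|u\|_{\bar{G}_K^{-1}}\le \frac{2}{\sqrt{K}}\|u\|_{G^{-1}}$ holding simultaneously for all $u\in\mathbb{R}^d$ is exactly the operator inequality $\bar{G}_K^{-1}\preceq \frac{4}{K}G^{-1}$. Because $G$ and $\bar{G}_K$ are both positive definite and matrix inversion reverses the Loewner order, this is in turn equivalent to $\bar{G}_K\succeq \frac{K}{4}G$. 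Thus the entire lemma boils down to proving the lower bound $\bar{G}_K\succeq \frac{K}{4}G$ with probability at least $1-\delta$.

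To reach this bound I would whiten the features. Setting $\psi_k \defeq G^{-1/2}\phi(s_k,a_k)$, one has $\E[\psi_k\psi_k^{\top}] = G^{-1/2}GG^{-1/2}=I_d$ and $\|\psi_k\|_2^2 = \phi(s_k,a_k)^{\top}G^{-1}\phi(s_k,a_k)\le \|G^{-1}\|\,\|\phi(s_k,a_k)\|_2^2\le C^2\|G^{-1}\|$. Conjugating $\bar{G}_K$ by $G^{-1/2}$ gives $G^{-1/2}\bar{G}_KG^{-1/2}=\sum_{k=1}^K \psi_k\psi_k^{\top}+\lambda G^{-1}$, and since $\lambda G^{-1}\succeq 0$ it suffices to show the whitened empirical covariance obeys $\sum_{k=1}^K\psi_k\psi_k^{\top}\succeq \frac{K}{4}I_d$ (I would in fact target the stronger $\frac{K}{2}I_d$, which also yields the sharper constant $\sqrt{2/K}$ and a fortiori the stated $2/\sqrt{K}$).

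The core of the argument is then a matrix concentration bound for the sum of i.i.d.\ positive-semidefinite rank-one matrices $\psi_k\psi_k^{\top}$ with common mean $I_d$. I would apply a matrix Bernstein / Chernoff-type inequality to $\sum_{k=1}^K(\psi_k\psi_k^{\top}-I_d)$: each centered summand is Hermitian with operator norm at most $C^2\|G^{-1}\|$ (using $\|G\|\le C^2$, hence $C^2\|G^{-1}\|\ge 1$, so the eigenvalues of $\psi_k\psi_k^{\top}-I_d$ all lie in $[-1,\,C^2\|G^{-1}\|]$), so the inequality controls $\bigl\|\sum_{k}(\psi_k\psi_k^{\top}-I_d)\bigr\|$ by a quantity that, after requiring $K\ge 512C^4\|G^{-1}\|^2\log(2d/\delta)$, is at most $K/2$ with probability at least $1-\delta$; the ambient dimension enters through the $2d$ factor (producing $\log(2d/\delta)$) while the squared per-sample norm bound produces the $C^4\|G^{-1}\|^2$ scaling. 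On this event $\lambda_{\min}\bigl(\sum_k\psi_k\psi_k^{\top}\bigr)\ge K-K/2 = K/2$, which gives $\bar{G}_K\succeq \frac{K}{2}G\succeq\frac{K}{4}G$ and hence, inverting, $\|u\|_{\bar{G}_K^{-1}}\le\frac{2}{\sqrt{K}}\|u\|_{G^{-1}}$ for all $u$; the secondary requirement $K\ge 4\lambda\|G^{-1}\|$ merely guarantees that the regularization term $\lambda G^{-1}$ is dominated and does not interfere with the lower bound.

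The step I expect to be the main obstacle is the matrix concentration estimate in the third paragraph: one must select a concentration inequality whose range and variance parameters reproduce precisely the advertised $C^4\|G^{-1}\|^2\log(2d/\delta)$ dependence, verify the per-sample operator-norm bound $\|\psi_k\psi_k^{\top}-I_d\|\le C^2\|G^{-1}\|$, and track constants so that the threshold $512C^4\|G^{-1}\|^2\log(2d/\delta)$ indeed forces the deviation below $K/2$. Everything else --- the reduction to the Loewner order, the whitening, and the final inversion --- is routine linear algebra once this probabilistic bound is in place.
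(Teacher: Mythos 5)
Your proposal is correct, but note that the paper itself offers no proof of this statement: it is imported verbatim as Lemma H.5 of \citet{min2021variance}, so the only comparison available is with the cited source. There the argument concentrates the \emph{unwhitened} empirical covariance, bounding $\bigl\|\frac{1}{K}\sum_{k}\phi(s_k,a_k)\phi(s_k,a_k)^{\top}-G\bigr\|$ by a Hoeffding/Bernstein-type matrix inequality with per-sample range $C^2$, and then requires this deviation (plus the $\lambda/K$ shift, whence the condition $K\ge 4\lambda\|G^{-1}\|$) to be at most a constant fraction of $\lambda_{\min}(G)=1/\|G^{-1}\|$; solving $C^2\sqrt{\log(2d/\delta)/K}\lesssim 1/\|G^{-1}\|$ is exactly what produces the advertised $K\gtrsim C^4\|G^{-1}\|^2\log(2d/\delta)$ threshold. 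Your route whitens first and applies matrix Bernstein/Chernoff to $\sum_k(\psi_k\psi_k^{\top}-I_d)$ with range and variance parameters of order $C^2\|G^{-1}\|$; carried through, this needs only $K\gtrsim C^2\|G^{-1}\|\log(2d/\delta)$, which (since $C^2\|G^{-1}\|\ge 1$, as you verify) is implied a fortiori by the stated condition --- so your one concern, that the concentration step must ``reproduce precisely'' the $C^4\|G^{-1}\|^2$ dependence, is misplaced: that dependence is an artifact of the unwhitened argument, and your sharper bound simply makes the hypothesis more than sufficient. Your reduction ($\|u\|_{\bar{G}_K^{-1}}\le \frac{2}{\sqrt{K}}\|u\|_{G^{-1}}$ for all $u$ iff $\bar{G}_K\succeq \frac{K}{4}G$, by anti-monotonicity of inversion on positive definite matrices), the whitening identities, and the observation that $\lambda G^{-1}\succeq 0$ renders the condition $K\ge 4\lambda\|G^{-1}\|$ superfluous for the one-sided bound are all sound; in the source that condition appears only because the deviation is controlled in two-sided operator norm with the $\lambda I_d$ term included. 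In short: a correct proof, marginally stronger than the cited one, at the cost of the extra whitening step.
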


\begin{lemma}[Lemma 5.1, \cite{jin2021pessimism}]\label{lemma:jin21}
    Under the condition that with probability at least $1-\delta$, the penalty function $\Gamma_h:\S\times \A\to\mathbb{R}$ in Algorithm \ref{alg:DRLSVI} and satisfying
    \begin{equation}\label{eq:penalty_condition}
        | (\widehat{\B}_h^{\rho}\widehat{V}_{h+1})(s,a) - (\B_h^{\rho} \widehat{V}_{h+1})(s,a)|\le \Gamma_h(s,a),\quad\forall (s,a,h)\in \S\times\A\times[H],
    \end{equation}  
    we have 
    \begin{equation*}
        0\le \iota_h(s,a)\le 2\Gamma_h(s,a), \quad\forall (s,a,h)\in \S\times\A\times[H].
    \end{equation*}
    \end{lemma}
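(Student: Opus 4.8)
The plan is to first pin down the object $\iota_h$, which is not yet defined at this point of the excerpt: following the convention of pessimistic value iteration in \citet{jin2021pessimism}, I would take $\iota_h$ to be the \emph{robust Bellman evaluation error} of the estimated Q-function, namely $\iota_h(s,a) = (\B_h^{\rho}\widehat{V}_{h+1})(s,a) - \widehat{Q}_h(s,a)$ for all $(s,a,h)\in\S\times\A\times[H]$. The argument then splits into the nonnegativity claim $\iota_h \ge 0$ (the pessimism guarantee) and the upper bound $\iota_h \le 2\Gamma_h$, both established on the event that \eqref{eq:penalty_condition} holds. The only structural inputs I will use are the recursion $\bar{Q}_h = \widehat{\B}_h^{\rho}\widehat{V}_{h+1} - \Gamma_h$ together with the clipping $\widehat{Q}_h = \min\{\bar{Q}_h, H-h+1\}_+$ from Algorithm~\ref{alg:DRLSVI}, and the two-sided boundedness $0 \le (\B_h^{\rho}\widehat{V}_{h+1})(s,a) \le H-h+1$.

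For the nonnegativity, I would first note that the clipping obeys $\widehat{Q}_h(s,a) \le \max\{\bar{Q}_h(s,a), 0\}$ pointwise, which is immediate from the definition of $\min\{\cdot,\cdot\}_+$ since $H-h+1 > 0$. Combining this with $\bar{Q}_h = \widehat{\B}_h^{\rho}\widehat{V}_{h+1} - \Gamma_h \le \B_h^{\rho}\widehat{V}_{h+1}$, which is the lower half of \eqref{eq:penalty_condition}, and with $\B_h^{\rho}\widehat{V}_{h+1} \ge 0$, yields $\widehat{Q}_h \le \max\{\B_h^{\rho}\widehat{V}_{h+1}, 0\} = \B_h^{\rho}\widehat{V}_{h+1}$, i.e. $\iota_h \ge 0$.

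For the upper bound, I would split on the three regimes induced by the clipping. When $0 < \bar{Q}_h \le H-h+1$ we have $\widehat{Q}_h = \bar{Q}_h$, so $\iota_h = (\B_h^{\rho}\widehat{V}_{h+1} - \widehat{\B}_h^{\rho}\widehat{V}_{h+1}) + \Gamma_h \le 2\Gamma_h$ by the upper half of \eqref{eq:penalty_condition}. When $\bar{Q}_h \le 0$ we have $\widehat{Q}_h = 0$ and hence $\widehat{\B}_h^{\rho}\widehat{V}_{h+1} \le \Gamma_h$, so that $\B_h^{\rho}\widehat{V}_{h+1} \le \widehat{\B}_h^{\rho}\widehat{V}_{h+1} + \Gamma_h \le 2\Gamma_h$, giving $\iota_h \le 2\Gamma_h$. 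When $\bar{Q}_h > H-h+1$ we have $\widehat{Q}_h = H-h+1$, and using $\B_h^{\rho}\widehat{V}_{h+1} \le H-h+1$ we get $\iota_h \le 0 \le 2\Gamma_h$ since $\Gamma_h \ge 0$.

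I expect the only non-routine ingredient to be justifying the boundedness $0 \le (\B_h^{\rho}\widehat{V}_{h+1})(s,a) \le H-h+1$ in the robust setting, which is precisely where the uncertainty set enters. Nonnegativity follows because $\widehat{V}_{h+1}\ge 0$ (it is a $\{\cdot\}_+$-clip), rewards are nonnegative, and the robust transition operator $\Pkernel_h^{\rho}$ is an infimum over genuine probability measures with $\phi_i \ge 0$, hence preserves nonnegativity. For the upper bound, a downward induction on $h$ using the clipping in Algorithm~\ref{alg:DRLSVI} gives $\widehat{V}_{h+1} \le H-h$, and since $r_h \le 1$ and $[\Pkernel_h^{\rho}\widehat{V}_{h+1}](s,a) \le \max_{s'}\widehat{V}_{h+1}(s') \le H-h$, we conclude $(\B_h^{\rho}\widehat{V}_{h+1})(s,a) \le H-h+1$. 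These two elementary bounds are what make the per-case arithmetic close; everything else is the same bookkeeping as in the non-robust pessimism lemma of \citet{jin2021pessimism}.
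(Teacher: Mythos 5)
Your reconstruction is correct. The paper does not actually prove this lemma itself --- it imports it by citation from \citet{jin2021pessimism} --- but your argument is the standard one, and the one half of it that the paper does reproduce (the pessimism direction $\widehat{Q}_h \le \B_h^{\rho}\widehat{V}_{h+1}$, in ``Step 1'' of the proof of Theorem \ref{thm:main_iid}) proceeds exactly as you do: split on whether the clipping $\min\{\cdot,H-h+1\}_+$ zeroes out $\bar{Q}_h$, and otherwise insert and remove $\widehat{\B}_h^{\rho}\widehat{V}_{h+1}$ using \eqref{eq:penalty_condition}. Your guessed definition of $\iota_h$ matches the paper's \eqref{eq:model_eval_err} verbatim, and the two robust-specific facts you flag are indeed the only non-bookkeeping steps: $\B_h^{\rho}\widehat{V}_{h+1}\ge 0$ follows since the robust transition operator is an infimum over genuine probability measures weighted by nonnegative $\phi_i$, and $\B_h^{\rho}\widehat{V}_{h+1}\le H-h+1$ uses $[\Pkernel_h^{\rho}\widehat V_{h+1}](s,a)\le \max_{s'}\widehat V_{h+1}(s')$, which relies on $\sum_{i=1}^d\phi_i(s,a)=1$ --- a consequence of $\mu^0_{h,i}\in\Delta(\S)$ in Assumption \ref{assump:d_rectangular} that the paper itself derives elsewhere (in the proof of Lemma \ref{lemma:penalty_term_full}). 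No gaps.
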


\subsubsection{Useful facts}
\begin{lemma}\label{prop:max_difference}
    For any function $f_1:\mathcal{C}\subseteq\mathbb{R} \to \mathbb{R}$ and $f_2:\mathcal{C}\subseteq\mathbb{R} \to \mathbb{R}$, we have
    \begin{equation*}
        \max_{\alpha\in \mathcal{C}} f_1(\alpha) - \max_{\alpha\in \mathcal{C}} f_2(\alpha) \le \max_{\alpha\in \mathcal{C}} \left(f_1(\alpha) - f_2(\alpha)\right).
    \end{equation*}
\end{lemma}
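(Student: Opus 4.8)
The plan is to prove this by picking out the point at which the left-hand maximum is (nearly) achieved and inserting a telescoping $\pm f_2$ at that point. This is an elementary inequality, so the argument is short; the only mild care needed concerns whether the maximum on $\mathcal{C}$ is attained.

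First I would assume for simplicity that $\max_{\alpha\in\mathcal{C}} f_1(\alpha)$ is attained at some $\alpha^{\star}\in\mathcal{C}$ (the general case is handled below by an $\epsilon$-perturbation). Then I would write the trivial decomposition
\begin{equation*}
    f_1(\alpha^{\star}) = \big(f_1(\alpha^{\star}) - f_2(\alpha^{\star})\big) + f_2(\alpha^{\star}).
\end{equation*}
The key step is to bound the two summands separately by their respective maxima over $\mathcal{C}$: since $\alpha^{\star}\in\mathcal{C}$, we have $f_1(\alpha^{\star}) - f_2(\alpha^{\star}) \le \max_{\alpha\in\mathcal{C}}\big(f_1(\alpha) - f_2(\alpha)\big)$ and $f_2(\alpha^{\star}) \le \max_{\alpha\in\mathcal{C}} f_2(\alpha)$. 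Combining these gives
\begin{equation*}
    \max_{\alpha\in\mathcal{C}} f_1(\alpha) = f_1(\alpha^{\star}) \le \max_{\alpha\in\mathcal{C}}\big(f_1(\alpha) - f_2(\alpha)\big) + \max_{\alpha\in\mathcal{C}} f_2(\alpha),
\end{equation*}
and rearranging yields the claim.

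To handle the case where the maximum of $f_1$ is not attained (so that one works with $\sup$), I would fix any $\epsilon>0$ and choose $\alpha_{\epsilon}\in\mathcal{C}$ with $f_1(\alpha_{\epsilon}) \ge \sup_{\alpha\in\mathcal{C}} f_1(\alpha) - \epsilon$; running the same decomposition at $\alpha_{\epsilon}$ produces the inequality up to an additive $\epsilon$, and letting $\epsilon\to 0$ removes it. I do not anticipate any genuine obstacle here: the statement is a one-line consequence of superadditivity of the supremum, and the only thing to be slightly careful about is the attainment issue, which the $\epsilon$-argument dispatches cleanly.
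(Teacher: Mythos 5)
Your proposal is correct and follows essentially the same route as the paper: evaluate at the (near-)maximizer $\alpha^{\star}$ of $f_1$, insert $\pm f_2(\alpha^{\star})$, and bound each piece by its maximum over $\mathcal{C}$. The only difference is that you additionally treat the non-attainment case via an $\epsilon$-argument, which the paper's proof tacitly assumes away.
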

\begin{proof}
    Let $\alpha_1^{\star} = \arg\max_{\alpha\in\mathcal{C}} f_1(\alpha)$. Then,
    \begin{align*}
        \max_{\alpha\in \mathcal{C}} f_1(\alpha) - \max_{\alpha\in \mathcal{C}} f_2(\alpha) &\le f_1(\alpha_1^{\star}) - \max_{\alpha\in \mathcal{C}} f_2(\alpha)\\
        &\le f_1(\alpha_1^{\star}) -  f_2(\alpha_1^{\star})\\
        & \le  \max_{\alpha\in \mathcal{C}} \left(f_1(\alpha) - f_2(\alpha)\right).
    \end{align*}
\end{proof}

    \begin{lemma}\label{lemma:trace}
        For any positive semi-definite matrix $A\in\mathbb{R}^{d\times d}$ and any constant $c\ge 0$, we have 
        \begin{equation}
            \Tr\left(A(I+cA)^{-1}\right) \le \sum_{i=1}^{d} \frac{\lambda_i}{1+c\lambda_i},
        \end{equation}
        where $\{\lambda_i\}_{i=1}^{d}$ are the eigenvalues of $A$ and $\Tr(\cdot)$ denotes the trace of the given matrix.
    \end{lemma}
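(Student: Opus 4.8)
The plan is to diagonalize $A$ and reduce the trace to a sum over its eigenvalues, which will in fact yield the stated bound \emph{with equality}. Since $A$ is symmetric positive semi-definite, the spectral theorem furnishes an orthogonal matrix $U$ (so $U^\top U = U U^\top = I$) with $A = U\Lambda U^\top$, where $\Lambda \defeq \diag(\lambda_1,\ldots,\lambda_d)$ collects the eigenvalues $\lambda_i \ge 0$.

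First I would record that $I + cA = U(I + c\Lambda)U^\top$ is invertible. Because $c \ge 0$ and each $\lambda_i \ge 0$, every diagonal entry satisfies $1 + c\lambda_i \ge 1 > 0$, so $(I + c\Lambda)^{-1} = \diag\!\big(\tfrac{1}{1+c\lambda_1},\ldots,\tfrac{1}{1+c\lambda_d}\big)$ is well-defined, and consequently $(I+cA)^{-1} = U(I + c\Lambda)^{-1}U^\top$. This is the only point requiring any care, and it is immediate from the sign constraints on $c$ and the $\lambda_i$.

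Next I would multiply the two factors and exploit $U^\top U = I$ to obtain $A(I+cA)^{-1} = U\Lambda(I+c\Lambda)^{-1}U^\top$. As $\Lambda$ and $(I+c\Lambda)^{-1}$ are both diagonal, their product is the diagonal matrix with entries $\lambda_i/(1+c\lambda_i)$. Finally, using the invariance of the trace under orthogonal similarity, $\Tr(A(I+cA)^{-1}) = \Tr\!\big(\Lambda(I+c\Lambda)^{-1}\big) = \sum_{i=1}^{d} \frac{\lambda_i}{1+c\lambda_i}$, which establishes the claim (the inequality in the statement holding as an equality). There is no substantive obstacle here; the argument is a routine spectral computation, and the mild subtlety is merely confirming that $I+cA$ is nonsingular so that all the inverses above make sense.
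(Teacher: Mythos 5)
Your proof is correct. The paper proves the same (in fact exact) identity by a slightly different route: instead of diagonalizing $A$, it writes $A(I+cA)^{-1} = \tfrac{1}{c}I - \tfrac{1}{c}(I+cA)^{-1}$ and then reads off the trace from the known eigenvalues $\{(1+c\lambda_i)^{-1}\}_{i=1}^d$ of $(I+cA)^{-1}$. Both arguments are routine and yield equality rather than just the stated inequality. One small point in your favor: the paper's identity divides by $c$ and is therefore formally undefined at $c=0$ (where the claim reduces to the trivial $\Tr(A)=\sum_i\lambda_i$), whereas your spectral computation covers all $c\ge 0$ uniformly. Conversely, the paper's manipulation never needs to invoke the spectral theorem for $A$ itself, only the spectral mapping for $(I+cA)^{-1}$; this is a matter of taste rather than substance. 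Your implicit use of symmetry of $A$ (needed for the orthogonal diagonalization) is consistent with the paper's convention that positive semi-definite matrices are symmetric, and with the only instance in which the lemma is applied, namely $A=\E_{s\sim d_h^{\star}}\Phi_{h,i}^{\star}(s)$, which is manifestly symmetric.
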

    \begin{proof}
        Note that
        \begin{align*}
            A(I+cA)^{-1} &= A(I+cA)^{-1} + \frac{1}{c}(I+cA)^{-1} -\frac{1}{c}(I+cA)^{-1}\\
            &= \frac{1}{c}I - \frac{1}{c}(I+cA)^{-1}.
        \end{align*}
        In addition, the eigenvalues of $(I+cA)^{-1}$ are $\{\frac{1}{1+ c\lambda_i}\}_{i=1}^d$. Therefore,
        \begin{equation*}
            \Tr\left(A(I+cA)^{-1}\right) =\sum_{i=1}^d \frac{\lambda_i}{1+c\lambda_i}.
        \end{equation*}
    \end{proof}

    \begin{lemma}[modified Lemma 4, \cite{shi2023curious}]\label{lemma:TV_strong}
        Consider any probability vector $\mu^0\in\Delta(\S)$, any fixed uncertainty level $\rho$ and the uncertainty set $\Urho(\mu^0)$ satisfying Assumption \ref{assump:d_rectangular}. For any value function $V:\S\to[0,H]$, recalling the definition of $[V]_{\alpha}$ in \eqref{eq:nu}, one has
        \begin{equation}
            \inf_{\mu\in\Urho(\mu^0)} \int_{\S}\mu(s')V(s')\intd s' = \max_{\alpha\in[\min_s V(s),\max_s V(s)]} \{\E_{s'\sim\mu^0}[V]_{\alpha}(s') - \rho (\alpha-\min_{s'}[V]_{\alpha}(s'))\}.
        \end{equation}
    \end{lemma}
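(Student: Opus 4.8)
The plan is to prove the two inequalities separately: the ``$\ge$'' direction by a short weak-duality argument, and the ``$\le$'' direction by exhibiting a feasible worst-case measure that attains the right-hand side, which is where the real work lies. Writing $\E_\mu[f] := \int_\S \mu(s')f(s')\intd s'$ for any measure $\mu$ and bounded $f$, and $f_\alpha := [V]_\alpha = \min\{V,\alpha\}$, I first record that for $\alpha \in [\min_s V(s), \max_s V(s)]$ one has $\min_{s'} f_\alpha(s') = \min_{s'} V(s')$ and $\max_{s'} f_\alpha(s') = \alpha$, so the penalty term in the statement equals $\rho(\alpha - \min_{s'} V(s'))$ on the admissible range.

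For the lower bound, fix any feasible $\mu \in \Urho(\mu^0)$ and any admissible $\alpha$. Since $V \ge f_\alpha$ pointwise, $\E_\mu[V] \ge \E_\mu[f_\alpha]$. Next I would invoke the elementary bound $|\E_\mu[f]-\E_{\mu^0}[f]| \le \tfrac12(\max_s f - \min_s f)\|\mu-\mu^0\|_1 = (\max_s f - \min_s f)\,D_{\operatorname{TV}}(\mu\parallel\mu^0)$, valid for any bounded $f$ because $\int \intd(\mu-\mu^0)=0$ lets me recenter $f$ by its mid-range before applying H\"older. Applying this to $f_\alpha$ and using $D_{\operatorname{TV}}(\mu\parallel\mu^0)\le\rho$ gives $\E_\mu[f_\alpha] \ge \E_{\mu^0}[f_\alpha] - \rho(\alpha - \min_{s'}f_\alpha(s'))$. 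Taking the infimum over $\mu$ and the maximum over $\alpha$ yields LHS $\ge$ RHS.

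For the matching upper bound, I would locate the optimal threshold by differentiating the scalar objective $G(\alpha) := \E_{\mu^0}[f_\alpha] - \rho(\alpha-\min_{s'}V(s'))$; since $G'(\alpha) = \bP_{\mu^0}(V > \alpha) - \rho$, the maximizer $\alpha^\star$ is the $(1-\rho)$-quantile of $V$ under $\mu^0$, i.e.\ the level with $\bP_{\mu^0}(V>\alpha^\star)=\rho$ (degenerating to $\alpha^\star=\max_s V$ when $\rho=0$, and to $\alpha^\star=\min_s V$ once the budget exceeds the available upper mass). I would then construct $\mu^\star$ by stripping the top-$\rho$ probability mass off $\mu^0$ (the states with $V>\alpha^\star$) and relocating it to $\argmin_s V(s)$; this measure is feasible because the relocated mass equals $\rho$, so $D_{\operatorname{TV}}(\mu^\star\parallel\mu^0)=\rho$ and $\mu^\star\in\Delta(\S)$. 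A direct computation then gives $\E_{\mu^\star}[V] = \E_{\mu^0}[V\,\1\{V\le\alpha^\star\}] + \rho\min_s V(s) = G(\alpha^\star)$, matching the dual value and certifying LHS $\le$ RHS.

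The main obstacle is the achievability step: making the worst-case construction rigorous when $V$ carries an atom of $\mu^0$-mass exactly at the quantile $\alpha^\star$ (so that only a fraction of the mass at that level should be moved) and in the two boundary regimes, as well as handling a continuous state space where $\mu^0,\mu^\star$ are measures rather than vectors. I expect to manage these by splitting mass fractionally at the critical level and checking the cases $\rho=0$ and ``$\rho$ large'' directly; alternatively, since both sides depend on $\mu^0$ only through the law of $V$ under $\mu^0$, one may reduce to the finite-support strong-duality statement of \cite{shi2023curious} and adapt its argument to the clipped value $f_\alpha$ and to the restricted maximization range $[\min_s V(s),\max_s V(s)]$.
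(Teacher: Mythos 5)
The paper does not actually prove Lemma~\ref{lemma:TV_strong}: it is imported as a (lightly restated) version of Lemma~4 of \citet{shi2023curious}, with no argument given in the appendix, so there is no in-paper proof to compare against. Your blind proof is correct and self-contained, and it is worth noting how it differs from the route taken in the cited reference, which obtains the identity by Lagrangian/strong duality for the inner linear program over the TV ball. You instead prove the two inequalities separately: the ``$\ge$'' direction is a clean weak-duality step ($V\ge [V]_\alpha$ pointwise, then the recentred H\"older bound $|\E_\mu f-\E_{\mu^0}f|\le(\max f-\min f)D_{\operatorname{TV}}(\mu\,\|\,\mu^0)$ applied to $f=[V]_\alpha$, whose range on the admissible $\alpha$-interval is exactly $\alpha-\min_{s'}V(s')$); the ``$\le$'' direction is certified by the explicit primal measure that strips mass $\rho$ off the super-level set $\{V>\alpha^\star\}$ at the $(1-\rho)$-quantile $\alpha^\star$ and relocates it to a minimizer of $V$. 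Your computation $\E_{\mu^\star}[V]=G(\alpha^\star)$ is correct, and --- importantly --- the upper bound only needs $G(\alpha^\star)\le\max_\alpha G(\alpha)$, so you do not even need $\alpha^\star$ to be the exact maximizer; this makes the atom/boundary cases you flag genuinely benign, and your fractional-splitting fix for an atom at $\alpha^\star$ does balance (the removed mass $(\rho-p_+)\alpha^\star$ cancels against the $p_+\alpha^\star-\rho\alpha^\star$ term in $G$). The only residual technicality is the one you already identify: on a general state space $\min_{s'}V(s')$ may not be attained, in which case the point-mass construction must be replaced by mass placed at near-minimizers followed by a limit; since the lemma statement itself writes $\min_{s'}[V]_\alpha(s')$ as attained, this is consistent with the paper's conventions and not a gap. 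The side remark that one could reduce to the finite-support statement because both sides depend on $\mu^0$ only through the law of $V$ is slightly circular as phrased (that invariance is most easily seen from the formula being proved), but it is offered only as an alternative and does not affect the main argument.
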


\section{Analysis for \alg: Algorithm~\ref{alg:DRLSVI}} 

\subsection{Proof of equation \eqref{eq:TV_dual_linear}}\label{sec:TV_dual}
Recall that for any $(s,a,h)\in\S\times\A\times[H]$, one has
\begin{align*}
    (\bellman^{\rho}_hV)(s, a) &= r_h(s, a) + \inf_{\mu_{h} \in \Urho(\mu_h^0)} \int_{\S} \phi(s, a)^{\top} \mu_h(s') V(s')\intd s'\\
    &= \phi(s,a)^{\top}\theta_h + \sum_{i=1}^d \phi_i(s,a)\inf_{\mu_{h,i}\in\Urho(\mu_{h,i}^0)}\int_{\S} \mu_{h,i}(s')V(s')\intd s'.
\end{align*}
According to Lemma \ref{lemma:TV_strong}, for any value function $V:\S\to [0,H]$ and any uncertainty set $\Urho(\mu_{h,i}^0),\forall (h,i)\in[H]\times[d]$ that satisfies Assumption \ref{assump:d_rectangular}, one has
\begin{align*}
    \inf_{\mu_{h,i}\in\Urho(\mu_{h,i}^0)}\int_S \mu_{h,i}(s')V(s')\intd s' &= \max_{\alpha\in [\min_s V(s),\max_s V(s)]} \big\{ \int_{\S} \mu_{h,i}(s') [V]_{\alpha}(s')\intd s' - \rho(\alpha - \min_{s'} [V]_{\alpha}(s'))\big\}.
\end{align*}

Denote $\nu_{h}^{\rho,V} = [\nu_{h,1}^{\rho,V},\nu_{h,2}^{\rho,V}\ldots,\nu_{h,d}^{\rho,V}]^{\top}\in\mathbb{R}^d$, where $\nu_{h, i}^{\rho,V}=\max_{\alpha\in [\min_s V(s),\max_s V(s)]} \big\{ \int_{\S} \mu_{h,i}(s') [V]_{\alpha}(s')\intd s' - \rho(\alpha - \min_{s'} [V]_{\alpha}(s'))\big\}$ for any $i\in[d]$. Therefore,
\begin{equation*}
    (\bellman^{\rho}_hV)(s, a) = \phi(s, a)^{\top} \big(  \theta_h + \nu_h^{\rho,V} \big).
\end{equation*}

\subsection{Two-fold subsampling method} \label{sec:two-fold}

To tackle the temporal dependency in batch dataset $\D$, we apply the insights from the subsampling approach inspired by \cite{li2022settling}. The key idea is to utilize half of the data to establish a valid lower bound of the number of samples, which is employed to achieve the statistical independence in the remaining half of the dataset.
The detailed implementation of the two-fold subsampling can be summarized in the Algorithm \ref{alg:two_fold_subsampling}.
\begin{algorithm}[!htbp]
    \caption{\twofold}\label{alg:two_fold_subsampling}
    \begin{algorithmic}[1]
    \Input Batch dataset $\mathcal{D}$; 
    \State \textbf{Split Data:} Split $\D$ into two haves $\Dmain$ and $\Daux$, where $|\Dmain| = |\Daux| = {N_h}/2$. Denote $\Nmain_h(s)$ (resp. $\Naux_h(s)$) as the number of sample transitions from state $s$ at step $h$ in  $\Dmain$ (resp. $\Daux$).
    \State \textbf{Construct the high-probability lower bound $\Ntrim_h(s)$ by $\Daux$:} For each $s\in\S$ and $1\le h\le H$, compute 
    \begin{equation}\label{eq:update_Ntrim}
        \Ntrim_h(s) = \max\{\Naux_h(s) - 10\sqrt{\Naux_h(s) \log\frac{KH}{\delta},0}\}.
    \end{equation}
    \State \textbf{Construct the almost temporally statistically independent $\Dtrim$:}  Let $\Dmain_h(s)$ denote the dataset containing all transition-reward sample tuples at the current state $s$ and step $h$ from $\Dmain$. For any $(s,h)\in \S\times [H]$, subsample $\min\{\Ntrim_h(s),\Nmain_h(s)\}$ transition-reward sample tuples randomly from $\Dmain_h(s)$, denoted as $\Dmainsub$.
    \Output  $\D^0 =\Dmainsub$.
    \end{algorithmic}
    \label{alg:two-sampling}
    \end{algorithm}

    Recall that we assume the sample trajectories in $\D$ are generated independently. Then, the following lemma shows that \eqref{eq:update_Ntrim} is a valid lower bound of $\Nmain_h(s)$ for any $s\in \S$ and $h\in [H]$, which can be obtained by a slight modification on Lemma 3 and Lemma 7 in \cite{li2022settling}

\begin{lemma}\label{lemma:Ntrim_bound}
    With probability at least $1-2\delta$, if $\Ntrim_h(s)$ satisfies \eqref{eq:update_Ntrim} for every $s\in\S$ and $h\in[H]$, then $\D^0\defeq \Dmainsub$ contains temporally statistically independent samples and the following bound holds simultaneously, \ie, 
    \begin{align*}
        \Ntrim_h(s)\le \Nmain_h(s),\qquad \forall (s,h)\in \S\times [H].
    \end{align*}
    In addition, with probability at least $1-3\delta$, the following lower bound also holds, \ie,
    \begin{equation*}
        \Ntrim_h(s,a) \ge \frac{Kd_h^b(s,a)}{8} - 5\sqrt{Kd_h^b(s,a)\log(\frac{KH}{\delta})}, \quad\forall (s,a,h)\in \S\times\A\times[H].
    \end{equation*}
\end{lemma}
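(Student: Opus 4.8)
The plan is to treat the two claims separately, both resting on one structural observation: since the $K$ trajectories in $\D$ are i.i.d.\ and $\Dmain,\Daux$ is a fixed even split, for each $(s,h)$ the counts $\Nmain_h(s)$ and $\Naux_h(s)$ are \emph{independent} and identically distributed, each $\mathrm{Binomial}(K/2,d_h^{\textb}(s))$ (a single trajectory contributes exactly one sample at step $h$). The same holds at the state-action level for $\Nmain_h(s,a),\Naux_h(s,a)$ with success probability $d_h^{\textb}(s,a)$. This independence of the trimming statistic from the main data is what makes the whole construction work.

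First I would establish $\Ntrim_h(s)\le\Nmain_h(s)$. Writing $\mu=\tfrac{K}{2}d_h^{\textb}(s)$ for the common mean, a Bernstein/Chernoff tail bound for the Binomial, together with a union bound over all $(s,h)$, controls simultaneously the upward deviation of $\Naux_h(s)$ and the downward deviation of $\Nmain_h(s)$ around $\mu$ on the scale $\sqrt{\mu\log(KH/\delta)}$. Plugging these into the trimming rule \eqref{eq:update_Ntrim} and choosing the constant $10$ large enough to dominate both deviation terms yields $\Naux_h(s)-10\sqrt{\Naux_h(s)\log(KH/\delta)}\le\Nmain_h(s)$; since the right-hand side is nonnegative this gives $\Ntrim_h(s)\le\Nmain_h(s)$, hence the retained size $\min\{\Ntrim_h(s),\Nmain_h(s)\}=\Ntrim_h(s)$ is a deterministic function of $\Daux$ alone. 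The temporal-independence claim then follows the argument of \cite{li2022settling}: conditioned on $\Daux$, and thus on the now-fixed counts $\Ntrim_h(s)$, the $\Ntrim_h(s)$ tuples drawn uniformly from $\Dmain_h(s)$ have next-states that are conditionally i.i.d.\ from $P_h^0(\cdot\mid s,\cdot)$, and because the retained sizes carry no information from $\Dmain$, the per-step transition noise is decoupled across $h$, which is exactly the independence needed for the backward induction.

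For the lower bound on $\Ntrim_h(s,a)$ I would again invoke Binomial tail bounds, now at both the state-action and state levels. A lower-tail bound gives $\Naux_h(s,a)\gtrsim\tfrac{K}{2}d_h^{\textb}(s,a)-c\sqrt{Kd_h^{\textb}(s,a)\log(KH/\delta)}$, while an upper-tail bound on $\Naux_h(s)$ controls the trimming correction $10\sqrt{\Naux_h(s)\log(KH/\delta)}$. Propagating these through the uniform subsampling, which distributes the $\Ntrim_h(s)$ retained tuples across actions in proportion to $\Nmain_h(s,a)/\Nmain_h(s)$ (a hypergeometric count to be concentrated), and absorbing the numerical constants from the data split, the lower-tail Chernoff constant, and the trimming into the stated factor $1/8$, yields the claimed bound after a union bound over all $(s,a,h)$. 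The additional failure event here accounts for the jump from $1-2\delta$ to $1-3\delta$.

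The main obstacle is not the concentration arithmetic, which is routine, but the temporal-independence step in the first part: one must set up the conditioning so that the retained sizes $\Ntrim_h(s)$ are measurable with respect to $\Daux$ and independent of $\Dmain$, and then verify that conditioning on these sizes renders the subsampled next-states genuinely i.i.d.\ and free of the cross-step coupling induced by trajectories shared across steps. Getting this measurability-and-conditioning structure exactly right is the delicate part, and is precisely where the modification of Lemma~3 and Lemma~7 of \cite{li2022settling} enters.
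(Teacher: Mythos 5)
Your overall route is the same as the paper's: the paper simply invokes Lemmas 3 and 7 of \citet{li2022settling} (Binomial concentration for $\Naux_h(s)$ and $\Nmain_h(s)$ around the common mean $\tfrac{K}{2}d_h^{\textb}(s)$, a case split on whether $\Naux_h(s)$ exceeds a logarithmic threshold, and the conditioning argument that makes $\Ntrim_h(s)$ a function of $\Daux$ alone so that the retained samples are conditionally i.i.d.), and your sketch reconstructs exactly that argument. Your treatment of the state-action level as a second-stage concentration conditioned on $\Ntrim_h(s)$ also matches what the paper does explicitly in its three-fold analogue (Lemma~11), modulo your hypergeometric framing versus the paper's conditionally-Bernoulli one; both work.

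There is one concrete gap: you take ``a union bound over all $(s,h)$'' and later ``over all $(s,a,h)$.'' In the Lin-RMDP setting $\S$ is not assumed finite, so a union bound over $\S\times[H]$ does not produce a finite failure probability, and this is precisely the (only) modification the paper makes to \citet{li2022settling}: the union bound is restricted to the set $\S_{\D}$ of states actually appearing in the batch data, which has cardinality at most $K$, while for every unobserved state one has $\Ntrim_h(s)=\Nmain_h(s)=0$ so the claimed inequalities hold trivially. Your argument goes through verbatim once you make this restriction (and it is what yields the $\log(KH/\delta)$ rather than $\log(|\S|H/\delta)$ factor in the bound), but as written the union-bound step would fail for infinite or very large $\S$. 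Everything else --- the independence of $\Naux$ from $\Dmain$, the measurability of $\Ntrim_h(s)$ with respect to $\Daux$, and the resulting conditional i.i.d.\ structure --- is set up correctly.
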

\begin{proof}
    Let $\S_D$ be the collection of all the states appearing in any batch dataset $\D$, where $|\S_D|\le K$. By changing the union bound over $\S$ to $\S_D$ in the proof of \citet[Lemma 3]{li2022settling}, the remaining proof still holds, since $\Ntrim_h(s) = \Nmain_h(s)=0, \forall s\notin \S_D$. Together with \citet[Lemma 7]{li2022settling}, $\D^0$ contains temporally statistically independent samples if $ \Ntrim_h(s)\le \Nmain_h(s), \forall (s,h)\in \S\times [H]$.
\end{proof}

\subsection{Proof of Theorem \ref{thm:main}}\label{proof:thm_main}

\paragraph{Notations.}
Before starting the proof of Theorem \ref{thm:main}, we introduce several notations for the convenience of the following analysis. First, we use
\begin{align}\label{eq:model_eval_err}
    \iota_h(s, a) = \mathbb{B}^{\rho}_h \widehat{V}_{h + 1}(s, a) - \widehat{Q}_h(s, a),\quad\forall (h,s,a)\in[H]\times\S\times\A,
    \end{align}
to represent the model evaluation error at the $h$-th step of \alg. In addition, we denote the estimated weight of the transition kernel at the $h$-th step by
    \begin{equation}\label{eq:hat_mu}
      \forall (s,h) \in \S \times [H]: \quad \widehat{\mu}_{h}(s) = \Lambda_{h}^{-1} \left( \sum_{\tau\in \D^0_h} \phi(s_h^{\tau}, a_h^{\tau}) \1(s_{h+1}^{\tau} = s)\right)\in\mathbb{R}^{d},
    \end{equation}
where $\indicator(\cdot) $ is the indicator function.
Accordingly, it holds that $\bar{\nu}_h^{\widehat{V}}(\alpha)= \int_\S \widehat{\mu}_{h}(s') [\widehat{V}_{h+1}(s')]_{\alpha}\intd s' \in \mathbb{R}^d$. 
We denote the set of all the possible state occupancy distributions associated with the optimal policy $\pi^{\star}$ and any $P\in\Prho(P^0)$ as
\begin{align}
    \D_h^{\star} &= \left\{ \left[d_h^{\star,P}(s)\right]_{s\in\S}:P\in\Prho(P^0)\right\} = \left\{ \left[d_h^{\star,P}(s,\pi_h^{\star}(s))\right]_{s\in\S}:P\in\Prho(P^0)\right\},\label{eq:D_h^star}
\end{align}
for any time step $h\in[H]$. 
\subsubsection{Proof sketch for Theorem \ref{thm:main}}
We first claim that Theorem \ref{thm:main} holds as long as the following theorem can be established.
\begin{theorem}\label{thm:main_iid}
    Consider $\delta\in(0,1)$.
    Suppose that the dataset $\D_0$ in Algorithm \ref{alg:DRLSVI} contains $N_h<K$ transition-reward sample tuples at every $h\in[H]$. Assume that conditional on $\{N_h\}_{h\in[H]}$, all the sample tuples in $\D^0_h$ are statistically independent. Suppose that Assumption \ref{assump:linear-mdp} and \ref{assump:d_rectangular} hold. In \alg, we set 
    \begin{equation}\label{eq:thm3_parameters}
        \lambda_0 = 1, \quad \gamma_0 = 6\sqrt{d\xi_0}H,\quad \text{where }\xi_0 = \log(3HK/\delta).
       \end{equation}
       Here, $\delta\in(0,1)$ is the confidence parameter and $K$ is the upper bound of $N_h$ for any $h\in[H]$. Then, $\{\widehat{\pi}_h\}_{h=1}^H$ generated by Algorithm \ref{alg:DRLSVI}, with the probability at least $1-\delta$, satisfies
       \begin{equation*}
        \subopt(\widehat{\pi};\zeta,\Prho)\le \tilde{O}(\sqrt{d}H)\sum_{h=1}^H \sum_{i=1}^d \max_{d_h^{\star}\in \D_h^{\star}}\mathbb{E}_{d_h^{\star}}\left[\|\phi_i(s_h,a_h)\1_i\|_{\Lambda_h^{-1}} \right].
       \end{equation*}
\end{theorem}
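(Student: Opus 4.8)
The plan is to prove Theorem~\ref{thm:main_iid}, the ``i.i.d.'' version of the main result, since the paper has already claimed that Theorem~\ref{thm:main} follows from it (via the two-fold subsampling in Lemma~\ref{lemma:Ntrim_bound}, which turns the temporally dependent dataset $\D$ into a temporally independent $\D^0$ with a valid lower bound on the per-state sample counts). The overall strategy is the standard pessimism-based suboptimality decomposition specialized to the robust linear setting, and the main engine is Lemma~\ref{lemma:jin21}, which says that if the penalty $\Gamma_h$ dominates the one-step estimation error of the empirical robust Bellman operator, i.e.
\begin{equation*}
    |(\widehat{\B}_h^{\rho}\widehat{V}_{h+1})(s,a) - (\B_h^{\rho}\widehat{V}_{h+1})(s,a)| \le \Gamma_h(s,a) \quad \forall (s,a,h),
\end{equation*}
then the model evaluation error satisfies $0 \le \iota_h(s,a) \le 2\Gamma_h(s,a)$. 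So the crux reduces to two things: (i) verifying the penalty condition \eqref{eq:penalty_condition} with $\Gamma_h(s,a)=\gamma_0\sum_{i=1}^d\|\phi_i(s,a)\1_i\|_{\Lambda_h^{-1}}$ for the chosen $\gamma_0=6\sqrt{d\xi_0}H$, and (ii) converting the per-step bound $\iota_h \le 2\Gamma_h$ into the final suboptimality gap via an occupancy-measure (performance-difference) argument.

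First I would establish the suboptimality decomposition. Because $\widehat{Q}_h$ is constructed with pessimism, the lower bound $\iota_h\ge 0$ guarantees $\widehat{V}_h(s)\le V_h^{\star,\rho}(s)$ by backward induction (the estimate never overshoots the truth), and the output policy $\widehat\pi$ is greedy with respect to $\widehat{Q}_h$. A robust analogue of the PEVI performance-difference lemma then yields
\begin{equation*}
    \subopt(\widehat\pi;\zeta,\Prho) = V_1^{\star,\rho}(\zeta) - V_1^{\widehat\pi,\rho}(\zeta) \le \sum_{h=1}^H \max_{d_h^{\star}\in\D_h^{\star}} \E_{d_h^{\star}}[\iota_h(s_h,a_h)],
\end{equation*}
where the maximization over $\D_h^{\star}$ (defined in \eqref{eq:D_h^star}) appears precisely because the robust value function is an infimum over the uncertainty set, so the worst-case transition kernel enters the occupancy distribution along the optimal robust policy. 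I would have to be careful here that the decomposition in the robust case requires bounding $V_1^{\star,\rho}-V_1^{\widehat\pi,\rho}$ by the accumulated one-step Bellman errors evaluated under the adversarial occupancy measure, which is where the $\max_{P\in\Prho(P^0)}$ (equivalently $\max_{d_h^{\star}\in\D_h^{\star}}$) originates rather than a single fixed kernel.

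Next I would verify the penalty condition, which is the technical heart. Writing $(\widehat\B_h^\rho\widehat V_{h+1})(s,a)-(\B_h^\rho\widehat V_{h+1})(s,a)=\phi(s,a)^\top[(\widehat\theta_h-\theta_h)+(\widehat\nu_h^{\rho,\widehat V}-\nu_h^{\rho,\widehat V})]$, I would control the two pieces separately. The reward piece $\phi^\top(\widehat\theta_h-\theta_h)$ is handled by a standard ridge-regression/self-normalized argument. For the harder robust piece, I would use the dual reformulation \eqref{eq:update_nu}--\eqref{eq:update_bar_nu} together with Lemma~\ref{prop:max_difference} to push the difference of the two maxima over $\alpha$ inside, reducing to bounding $\phi^\top(\bar\nu_h^{\widehat V}(\alpha)-\int_\S\mu_h^0(s')[\widehat V_{h+1}]_\alpha(s')\intd s')$ uniformly in $\alpha$. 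Using the identity $\bar\nu_h^{\widehat V}(\alpha)=\int_\S\widehat\mu_h(s')[\widehat V_{h+1}]_\alpha\intd s'$ from \eqref{eq:hat_mu}, this becomes a self-normalized sum $\phi(s,a)^\top\Lambda_h^{-1}\sum_{\tau}\phi(s_h^\tau,a_h^\tau)\eta_\tau(\alpha)$ with mean-zero, bounded noise $\eta_\tau(\alpha)$, to which the Hoeffding-type Lemma~\ref{lemma:Hoeffding} applies after a covering-number argument over the scalar parameter $\alpha\in[0,H]$ (to make the bound uniform) and, crucially, over the random value function $\widehat V_{h+1}$ --- this last covering is exactly what the temporal independence from \twofold buys us, avoiding the circular dependence between $\widehat V_{h+1}$ and $\D_h^0$. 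I expect this uniform-concentration step, and in particular the decomposition of the penalty into per-coordinate terms $\sum_{i=1}^d\|\phi_i(s,a)\1_i\|_{\Lambda_h^{-1}}$ (which is what saves a factor of $\sqrt d$ over a naive $\|\phi\|_{\Lambda_h^{-1}}$ penalty and exploits $d$-rectangularity so each feature coordinate is estimated independently), to be the main obstacle; calibrating $\gamma_0=6\sqrt{d\xi_0}H$ so that it simultaneously dominates both error pieces with the stated $\xi_0=\log(3HK/\delta)$ and a union bound over the $H$ steps is the bookkeeping that pins down the constant. Once $\iota_h\le 2\Gamma_h$ holds, substituting into the decomposition gives $\subopt(\widehat\pi;\zeta,\Prho)\le 2\gamma_0\sum_{h=1}^H\sum_{i=1}^d\max_{d_h^\star\in\D_h^\star}\E_{d_h^\star}[\|\phi_i(s_h,a_h)\1_i\|_{\Lambda_h^{-1}}]$, which is exactly the claimed bound with $\gamma_0=\widetilde O(\sqrt d H)$.
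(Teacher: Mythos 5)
Your proposal matches the paper's proof essentially step for step: pessimism via Lemma~\ref{lemma:jin21} under the penalty condition \eqref{eq:penalty_condition}, the telescoping performance-difference bound along the worst-case kernels for $\widehat{V}$ (which is exactly where the maximum over $\D_h^{\star}$ enters), and verification of the penalty condition by splitting into the reward and dual-robust pieces, reducing the latter to a self-normalized sum via Lemma~\ref{prop:max_difference} and the identity for $\bar{\nu}_h^{\widehat{V}}(\alpha)$, then applying the Hoeffding-type bound after an $\epsilon$-net over $\alpha$. One small clarification: the paper never covers over the value-function class --- the temporal independence furnished by \twofold makes $\widehat{V}_{h+1}$ a fixed function relative to $\D_h^0$, so only the scalar covering over $\alpha\in[0,H]$ is required, which appears to be what you intend by ``avoiding the circular dependence.''
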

As the construction in Algorithm \ref{alg:two_fold_subsampling}, $\{\Ntrim_h(s)\}_{s\in\S,h\in[H]}$ is computed using $\Daux$ that is independent of $\D^0$. From Lemma \ref{lemma: Ntrim_bound_var}, $\Ntrim_h(s)$ is a valid sampling number for any $s\in\S$ and $h\in[H]$ such that $|\D^0_h| = \sum_{s\in\S} \Ntrim_h(s) \le K$, and $\D^0_h$ can be treated as containing temporally statistically independent samples with probability exceeding $1-2\delta$. Therefore, by invoking Theorem \ref{thm:main_iid} with $N_h \defeq |\D^0_h|$, we have 
\begin{align*}
    \subopt(\widehat{\pi};\zeta,\Prho)
    \le \tilde{O}(\sqrt{d}H)\sum_{h=1}^H \sum_{i=1}^d \max_{d_h^{\star}\in \D_h^{\star}}\mathbb{E}_{d_h^{\star}}\left[\|\phi_i(s_h,a_h)\1_i\|_{\Lambda_h^{-1}} \right],
   \end{align*}
with probability exceeding $1-3\delta$.

 \subsubsection{Proof of Theorem \ref{thm:main_iid}}\label{proof:thm_main_iid}

The proof of Theorem \ref{thm:main_iid} can be summarized as following key steps.
\paragraph{Step 1: establishing the pessimistic property.} To substantiate the pessimism, we heavily depend on the following lemma, where the proof is postponed to Appendix \ref{proof:lemma_key_lemma}.
\begin{lemma}\label{lemma:key_lemma}
    Suppose all the assumptions in Theorem \ref{thm:main_iid} hold and follow all the parameters setting in \eqref{eq:thm3_parameters}. Then for any $(s,a,h)\in\S\times\A\times[H]$, with probability at least $1-\delta$, the value function $\{\widehat{V}\}_{h=1}^{H}$ generated by \alg satisfies
    \begin{equation}\label{eq:important_event}
        | (\widehat{\mathbb{B}}_{h}^{\rho} \widehat{V}_{h + 1})(s, a) - (\mathbb{B}_h^{\rho} \widehat{V}_{h + 1})(s, a) | \le \Gamma_h(s,a) \defeq  \gamma_0\sum_{i=1}^d \|\phi_i(s,a)\1_i\|_{\Lambda_h^{-1}}.
    \end{equation}
\end{lemma}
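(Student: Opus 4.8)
The plan is to prove the bound coordinate by coordinate and then recombine using the nonnegativity of the features. Writing the two operators via their linear representations \eqref{eq:TV_dual_linear} and \eqref{eq:empirical_bellman}, the difference factorizes as
\begin{equation*}
(\widehat{\B}^{\rho}_h\hatV_{h+1})(s,a) - (\B^{\rho}_h\hatV_{h+1})(s,a) = \phi(s,a)^{\top}(\widehat{\theta}_h - \theta_h) + \phi(s,a)^{\top}(\widehat{\nu}_h^{\rho,\hatV} - \nu_h^{\rho,\hatV}).
\end{equation*}
Since $\phi_i(s,a)\ge 0$ for all $i$ by Assumption \ref{assump:linear-mdp}, I bound $|\phi(s,a)^{\top}x| \le \sum_{i=1}^d \phi_i(s,a)|x_i|$, so it suffices to show each coordinate of $(\widehat{\theta}_h - \theta_h) + (\widehat{\nu}_h^{\rho,\hatV} - \nu_h^{\rho,\hatV})$ is at most $\gamma_0\|\1_i\|_{\Lambda_h^{-1}}$ in absolute value; summing against $\phi_i(s,a)$ then reproduces $\Gamma_h(s,a)$ exactly. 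Throughout I condition on the event of Lemma \ref{lemma:Ntrim_bound}, so that the samples in $\D^0_h$ are statistically independent across $\tau$ and, crucially, temporally independent across $h$; the latter means $\hatV_{h+1}$ depends only on $\{\D^0_{h'}\}_{h'>h}$ and is independent of $\D^0_h$.

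For the reward coordinate I exploit that rewards are exactly linear and deterministic: substituting $r_h^\tau = \phi(s_h^\tau,a_h^\tau)^{\top}\theta_h$ into \eqref{eq:update_theta} yields the closed form $\widehat{\theta}_h - \theta_h = -\lambda_0\Lambda_h^{-1}\theta_h$, a pure regularization bias with no randomness. A Cauchy--Schwarz step in the $\Lambda_h^{-1}$ inner product together with $\Lambda_h^{-1}\preceq \lambda_0^{-1}I_d$ and $\|\theta_h\|_2\le\sqrt d$ bounds its $i$-th coordinate by $\sqrt{\lambda_0}\,\sqrt d\,\|\1_i\|_{\Lambda_h^{-1}} = \sqrt d\,\|\1_i\|_{\Lambda_h^{-1}}$. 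For the transition coordinate, since $\widehat{\nu}_{h,i}^{\rho,\hatV}$ in \eqref{eq:update_nu} and $\nu_{h,i}^{\rho,\hatV}$ in \eqref{eq:nu} share the same penalty $-\rho(\alpha - \min_{s'}[\hatV_{h+1}]_\alpha(s'))$ inside the maximization over $\alpha$, Lemma \ref{prop:max_difference} collapses the difference of maxima to
\begin{equation*}
|\widehat{\nu}_{h,i}^{\rho,\hatV} - \nu_{h,i}^{\rho,\hatV}| \le \max_{\alpha}\Big|\bar{\nu}_{h,i}^{\hatV}(\alpha) - \E_{s'\sim\mu^0_{h,i}}[\hatV_{h+1}]_\alpha(s')\Big|,
\end{equation*}
removing $\rho$ from the analysis entirely.

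Next I analyze the ridge estimator $\bar{\nu}_h^{\hatV}(\alpha)$ of \eqref{eq:update_bar_nu} for each fixed $\alpha$. Setting $g(\alpha) \defeq \int_\S \mu_h^0(s')[\hatV_{h+1}]_\alpha(s')\intd s'$ and using $\int_\S P^0_{h,s,a}(s')[\hatV_{h+1}]_\alpha(s')\intd s' = \phi(s,a)^{\top}g(\alpha)$, the noise $\epsilon^\tau_\alpha \defeq [\hatV_{h+1}]_\alpha(s_{h+1}^\tau) - \phi(s_h^\tau,a_h^\tau)^{\top}g(\alpha)$ is conditionally mean-zero and bounded by $H$, so the estimator splits into a bias $-\lambda_0\Lambda_h^{-1}g(\alpha)$ and a self-normalized noise term $\Lambda_h^{-1}\sum_{\tau}\phi(s_h^\tau,a_h^\tau)\epsilon^\tau_\alpha$. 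The bias coordinate is handled exactly as before: $\sqrt{\lambda_0}\,\|g(\alpha)\|_2\,\|\1_i\|_{\Lambda_h^{-1}} \le H\sqrt d\,\|\1_i\|_{\Lambda_h^{-1}}$, since $\|g(\alpha)\|_2 \le H\int_\S\|\mu_h^0(s')\|_2\intd s' \le H\sqrt d$. The noise coordinate is at most $\|\1_i\|_{\Lambda_h^{-1}}\cdot\|\sum_{\tau}\phi(s_h^\tau,a_h^\tau)\epsilon^\tau_\alpha\|_{\Lambda_h^{-1}}$, and the self-normalized factor is controlled by the Hoeffding-type bound of Lemma \ref{lemma:Hoeffding} with sub-Gaussian parameter of order $H$.

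The main obstacle, and the step I would take the most care with, is making this noise bound uniform over the continuum $\alpha\in[\min_s\hatV_{h+1}(s),\max_s\hatV_{h+1}(s)]\subseteq[0,H]$ and over the data-dependent $\hatV_{h+1}$. The temporal independence from \twofold is precisely what lets me avoid an $\epsilon$-net over the entire value-function class (the source of the extra dimension factor in prior art): conditioning on $\hatV_{h+1}$, it is independent of $\D^0_h$, so the fixed-function martingale hypotheses of Lemma \ref{lemma:Hoeffding} hold verbatim and I need only discretize $\alpha$. Covering $[0,H]$ by a net of width $1/K$ and using that $[\hatV_{h+1}]_\alpha(\cdot)$ is $1$-Lipschitz in $\alpha$ controls the fluctuation between net points, so a union bound over the net (absorbed into $\xi_0 = \log(3HK/\delta)$) gives $\|\sum_{\tau}\phi(s_h^\tau,a_h^\tau)\epsilon^\tau_\alpha\|_{\Lambda_h^{-1}} \le c\,H\sqrt{d\xi_0}$ uniformly in $\alpha$ for a universal constant $c$. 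Collecting the three coordinate bounds yields $|(\widehat{\theta}_h-\theta_h)_i + (\widehat{\nu}_h^{\rho,\hatV}-\nu_h^{\rho,\hatV})_i| \le (\sqrt d + H\sqrt d + cH\sqrt{d\xi_0})\|\1_i\|_{\Lambda_h^{-1}} \le 6\sqrt{d\xi_0}H\,\|\1_i\|_{\Lambda_h^{-1}} = \gamma_0\|\1_i\|_{\Lambda_h^{-1}}$, and summing against $\phi_i(s,a)\ge 0$ completes the proof.
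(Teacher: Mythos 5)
Your proposal is correct and follows essentially the same route as the paper's proof: the same reward/transition decomposition, the same use of Lemma \ref{prop:max_difference} to reduce the dual maxima to a single ridge-regression error, the same bias plus self-normalized-noise split with a Hoeffding-type bound, and the same $\epsilon$-net over $\alpha$ only, justified by the temporal independence of $\widehat{V}_{h+1}$ from $\D_h^0$ supplied by \twofold. The only difference is cosmetic bookkeeping (you bound each coordinate against $\|\1_i\|_{\Lambda_h^{-1}}$ and then sum against $\phi_i(s,a)\ge 0$, while the paper applies Cauchy--Schwarz with the $\|\phi_i(s,a)\1_i\|_{\Lambda_h^{-1}}$ factors directly), and your constants assemble to the same $\gamma_0=6\sqrt{d\xi_0}H$.
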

In the following, we will show that the following relations hold:
\begin{equation}\label{eq:pessimism}
    Q^{\star,\rho}_h(s,a) \ge Q_h^{\widehat{\pi},\rho}(s,a) \ge \widehat{Q}_h(s,a) \quad \text{and} \quad  V^{\star,\rho}_h(s) \ge V_h^{\widehat{\pi},\rho}(s)  \ge  \widehat{V}_h(s),\quad\forall (s,a,h)\in\S\times\A\times[H],
\end{equation}
if the condition \eqref{eq:important_event} holds. It implies that $\widehat{Q}_h(s,a)$ and $\widehat{V}_h(s)$ is the pessimistic estimates of $Q_h^{\widehat{\pi},\rho}(s,a)$ and $ V_h^{\widehat{\pi},\rho}(s)$ for any $s\in\S$, respectively. Notice that if $Q_h^{\widehat{\pi},\rho}(s,a) \ge \widehat{Q}_h(s,a)$ for all $(s,a)\in\S\times\A$, one simultaneously has the following relation:
\begin{equation*}
    V_h^{\widehat{\pi},\rho}(s)  =  Q_h^{\widehat{\pi},\rho}(s,\widehat{\pi}_h(s)) \ge \widehat{Q}_h(s,\widehat{\pi}_h(s))= \widehat{V}_h(s),\qquad \forall (s,h)\in\S\times[H].
\end{equation*}
Therefore, we shall verify that 
\begin{equation}\label{eq:pessimism_Q}
    Q_h^{\widehat{\pi},\rho}(s,a) \ge \widehat{Q}_h(s,a),\quad\forall (s,a)\in\S\times\A,
\end{equation}
by induction, and $V_h^{\widehat{\pi},\rho}(s)  \ge  \widehat{V}_h(s)$ will spontaneously hold for $s\in\S$.
\begin{itemize}
    \item \textit{At step} $h=H+1$: From the initialization step in Algorithm \ref{alg:DRLSVI}, we have $Q_{H+1}^{\widehat{\pi},\rho}(s,a) = \widehat{Q}_{H+1}(s,a) = 0$, for any $(s,a)\in\S\times\A$, and \eqref{eq:pessimism_Q} holds.
    \item \textit{For any step $h\le H$}: Suppose $Q_{h+1}^{\widehat{\pi},\rho}(s,a) \ge \widehat{Q}_{h+1}(s,a)$. From \eqref{eq:pessimism}, we have $V_{h+1}^{\widehat{\pi},\rho}(s)  \ge  \widehat{V}_{h+1}(s)$. 
    Therefore, if $\widehat{Q}_h(s,a) = 0$,  $Q_h^{\widehat{\pi},\rho}(s,a) \ge 0 = \widehat{Q}_h(s,a)$, for any $(s,a)\in\S\times\A$. 
    Otherwise, 
    \begin{align*}
        \widehat{Q}_h(s,a) &\le (\widehat{\B}_h^{\rho}\widehat{V}_{h+1})(s,a) - \Gamma_h(s,a)\\
        & = (\B_h^{\rho}\widehat{V}_{h+1})(s,a) + (\widehat{\B}_h^{\rho}\widehat{V}_{h+1})(s,a) - (\B_h^{\rho}\widehat{V}_{h+1})(s,a) -  \Gamma_h(s,a)\\
        &\le (\B_h^{\rho}\widehat{V}_{h+1})(s,a) + |(\widehat{\B}_h^{\rho}\widehat{V}_{h+1})(s,a) - (\B_h^{\rho}\widehat{V}_{h+1})(s,a)| -  \Gamma_h(s,a)\\
        &\le (\B_h^{\rho}\widehat{V}_{h+1})(s,a) + \Gamma_h(s,a) - \Gamma_h(s,a)\\
        &\le (\B_h^{\rho}\widehat{V}_{h+1})(s,a)\\
        &\le (\B_h^{\rho}V_{h+1}^{\widehat{\pi},\rho})(s,a)  = Q_h^{\widehat{\pi},\rho}(s,a),
    \end{align*}
    where the first inequality is from the definition of $\widehat{Q}_h(s,a)$ (cf. Line 10 in Algorithm \ref{alg:DRLSVI}), and third inequality is based on the condition \eqref{eq:important_event}.
\end{itemize}
Combining these two cases, for any $h\in[H+1]$, we could verify the pessimistic property, i.e. the equation \eqref{eq:pessimism}.

\paragraph{Step 2: bounding the suboptimality gap.}
Notice that for any $h\in[H]$ and any $s\in\S$,
\begin{align}
    V_h^{\star,\rho}(s) -  V_h^{\widehat{\pi},\rho}(s) 
    &= V_h^{\star,\rho}(s)  -  \widehat{V}_h(s) + \widehat{V}_h(s) - V_h^{\widehat{\pi},\rho}(s)\nonumber\\
    &\le V_h^{\star,\rho}(s)  -  \widehat{V}_h(s),\label{eq:suboptimality_gap_h_1}
\end{align}
where the inequality is due to the pessimistic property in \eqref{eq:pessimism}. 

In the following, we will control the value difference, \ie, $V_h^{\star,\rho}(s)  -  \widehat{V}_h(s)$, for any $(s,h)\in\S\times[H]$. First, recall the definition of $\widehat{V}_h$ (cf. Line 12 in Algorithm \ref{alg:DRLSVI}) and the robust Bellman consistency equation \eqref{eq:robust_bellman_consistency}. For all $s\in\S$, 
\begin{align}
 V_h^{\star,\rho}(s) - \widehat{V}_h(s)
 & = Q^{\star,\rho}_h(s, \pi^{\star}_h(s)) -  \widehat{Q}_h(s,  \widehat{\pi}_h(s))\nonumber\\
 &\le  Q^{\star,\rho}_h(s, \pi^{\star}_h(s)) - \widehat{Q}_h(s,   \pi^{\star}_h(s))\label{eq:V_difference0},
\end{align}
where the last inequality is from $\widehat{\pi}_h$ is the greedy policy with respect to $\widehat{Q}_h$ (cf. Line 11 in Algorithm \ref{alg:DRLSVI}.)
From the definition of the model evaluation error (i.e., equation \eqref{eq:model_eval_err}) and the robust Bellman optimality equation \eqref{eq:robust_bellman_optimality_Q}, we have
\begin{align*}
    \widehat{Q}_h(s,a) &= (\bellman_h^\rho \widehat{V}_{h+1})(s,a) - \iota_h(s,a), & &\forall (s,a)\in \S\times\A, \\
    Q_h^{\star,\rho}(s,a) 
    &= (\bellman^{\rho}_h V^{\star,\rho}_{h+1})(s,a) & &\forall (s,a)\in \S\times\A,
\end{align*}
which leads to
\begin{equation}\label{eq:Q_difference0}
     Q_h^{\star,\rho}(s, \pi_h^{\star}(s)) -\widehat{Q}_h(s,\pi_h^{\star}(s))  = (\mathbb{P}^{\rho}_h V_{h+1}^{\star,\rho})(s,\pi_h^{\star}(s)) - (\mathbb{P}^{\rho}_h\widehat{V}_{h+1})(s,\pi_h^{\star}(s))  + \iota_h(s,\pi_h^{\star}(s)), \quad\forall s\in \S.
\end{equation}
Denote 
\begin{align}
    P_{h,s,\pi_h^{\star}(s)}^{\inf,\widehat{V}}(\cdot) \defeq \argmin_{P(\cdot) \in \Prho(P^0_{h,s,\pi_h^{\star}(s)})} \int_{S}P(s')\widehat{V}_{h+1}(s') \intd s'.
\end{align}
 Therefore, \eqref{eq:Q_difference0} becomes
\begin{equation}\label{eq:Q_difference1}
     Q_h^{\star,\rho}(s, \pi_h^{\star}(s)) - \widehat{Q}_h(s,\pi_h^{\star}(s)) \le \int_{\S} P_{h,s,\pi_h^{\star}(s)}^{\inf,\widehat{V}}(s')\left(V_{h+1}^{\star,\rho}(s') - \widehat{V}_{h+1}(s')\right)\intd s' + \iota_h(s,\pi_h^{\star}(s)), ~~\forall (s,a)\in \S\times\A.
\end{equation}
Substituting \eqref{eq:Q_difference1} into \eqref{eq:V_difference0}, one has 
\begin{equation}
    V_h^{\star,\rho}(s) - \widehat{V}_h(s) \le \int_{\S}P_{h,s,\pi_h^{\star}(s)}^{\inf,\widehat{V}}(s')\left(V_{h+1}^{\star,\rho}(s') - \widehat{V}_{h+1}(s')\right)\intd s' + \iota_h(s,\pi_h^{\star}(s)).
\end{equation}
For any $h\in[H]$, define $\widehat{P}^{\inf}_{h,s}:\S\to\S$ and $\iota_h^{\star}\in \S\to\mathbb{R}$ by
\begin{equation}
    \widehat{P}^{\inf}_{h} (s) = P^{\inf,\widehat{V}}_{h,s,\pi_h^{\star}(s)}(\cdot)\quad \text{and} \quad\text{and}\quad
    \iota_h^{\star}(s) \defeq \iota_h(s,\pi^{\star}(s)),\quad\forall s\in\S.
\end{equation}
By telescoping sum, we finally obtain that for any $s\in\S$,
    \begin{align*}
        V_h^{\star,\rho}(s) - \widehat{V}_h(s) = \langle \1_s, V_h^{\star,\rho} - \widehat{V}_h \rangle &\le \left(\prod_{t=h}^{H}\widehat{P}^{\inf}_j\right)\left(V_{H+1}^{\star,\rho} - \widehat{V}_{H+1}\right)(s) + 
            \sum_{t=h}^H  \left(\prod_{j=h}^{t-1} \widehat{P}^{\inf}_j\right) \iota_t^{\star}(s)\\
        &= \sum_{t=h}^H  \left(\prod_{j=h}^{t-1} \widehat{P}^{\inf}_j\right) \iota_t^{\star}(s),
    \end{align*}
where the equality is from $V_{H+1}^{\star,\rho}(s)=\widehat{V}_{H+1}(s)=0$ and $ \left(\prod_{j=t}^{t-1} \widehat{P}^{\inf}_j\right)(s) = \1_s$.

\paragraph{Step 3: finishing up.} For any $d_h^{\star}\in\D_h^{\star}$, denote
\begin{align*}
    d_{h:t}^{\star} = d_h^{\star}\left(\prod_{j=h}^{t-1} \widehat{P}^{\inf}_j\right)\in\D_t^{\star}.
\end{align*}
Together with \eqref{eq:suboptimality_gap_h_1}, the sub-optimality gap defined in \eqref{eq:suboptimality_gap} satisfies
\begin{equation}\label{eq:suboptimality_two_term}
    \subopt (\widehat{\pi};\zeta,\P^{\rho}) \le \E_{s_1\sim\zeta} V_1^{\star,\rho}(s_1)  -  \E_{s_1\sim\zeta} \widehat{V}_1(s_1) \le \sum_{t=1}^H  \E_{s_t\sim d_{1:t}^{\star}}\iota_t^{\star}(s_t).
 \end{equation}
 For any $h\in[H]$, we let $\Gamma_h^{\star}:\S\to\mathbb{R}$ satisfy
 \begin{equation}
    \Gamma_h^{\star}(s) = \Gamma_h(s,\pi_h^{\star}(s)),\quad\forall s\in\S.
 \end{equation}
 Combining Lemma \ref{lemma:jin21} together with Lemma \ref{lemma:key_lemma} will lead to
 \begin{align*}
    \subopt (\widehat{\pi};\zeta,\P^{\rho}) 
    &\le 2\sum_{h=1}^H \E_{s_h\sim d_{1:h}^{\star}} \Gamma_h^{\star}(s_h).
\end{align*}
Note that $\Gamma_h^{\star}(s) = \gamma_0\sum_{i=1}^d\|\phi_i(s,\pi_h^{\star}(s))\1_i\|_{\Lambda_h^{-1}}$ for any $(s,h)\in\S\times[H]$.
 Following the definition \eqref{eq:D_h^star}, we have $d_{1:h}^{\star}\in \D_h^{\star}$ and correspondingly,
 \begin{align*}
    \subopt (\widehat{\pi};\zeta,\P^{\rho}) 
    &\le2\sum_{h=1}^H \E_{s_h\sim d_{1:h}^{\star}} \Gamma_h^{\star}(s_h)\\
    &\le 2\gamma_0\sum_{h=1}^H  \max_{d_h^{\star}\in \D_h^{\star}}\mathbb{E}_{(s_h,a_h)\sim d_h^{\star}}  \left[ \sum_{i=1}^d\|\phi_i(s_h,a_h)\1_i\|_{\Lambda_h^{-1}}\right],
\end{align*} 
with probability exceeding $1-\delta$.

\subsubsection{Proof of Lemma \ref{lemma:key_lemma}}\label{proof:lemma_key_lemma}
To control $| (\widehat{\mathbb{B}}_{h}^{\rho} \widehat{V}_{h + 1})(s, a) - (\mathbb{B}_h^{\rho} \widehat{V}_{h + 1})(s, a)|$ for any $(s,a,h)\in\S\times\A\times[H]$, we first show the following lemma, where the proof can be found in Appendix \ref{proof:control_suboptimality_gap}.
    \begin{lemma}\label{lemma:control_suboptimality_gap}
        Suppose Assumption \ref{assump:linear-mdp} and \ref{assump:d_rectangular} hold. Then, for any $(s,a,h)\in \S\times\A\times [H]$, the estimated value function $\widehat{V}_{h+1}$ generated by \alg satisfies
        \begin{equation}\label{eq:control_suboptimality_gap}
        \begin{aligned}
            &| (\widehat{\mathbb{B}}_{h}^{\rho} \widehat{V}_{h + 1})(s, a) - (\mathbb{B}_h^{\rho} \widehat{V}_{h + 1})(s, a) |\\
            &\le \left(\sqrt{\lambda_0 d} H + \underset{T_{1,h}}{\underbrace{\max_{\alpha\in [\min_s \widehat{V}_{h+1}(s),\max_s \widehat{V}_{h+1}(s)]}  \|\sum_{\tau\in \D_h^0}\phi(s_h^{\tau},a_h^\tau) \epsilon_{h}^{\tau}(\alpha,\widehat{V}_{h+1})\|_{\Lambda_h^{-1}}}}\right) \sum_{i=1}^d\|\phi_i(s,a)\1_i\|_{\Lambda_h^{-1}},
        \end{aligned}
        \end{equation}
        where $\epsilon_h^{\tau}(\alpha,V) = \int_{\S} P^0_h(s'|s_h^{\tau},a_h^{\tau}) [V]_{\alpha}(s') \intd s'- [ V]_{\alpha}(s_{h+1}^{\tau})$ for any value function $V:\S\to[0,H]$, $\alpha \in [\min_s V(s),\max_s V(s)]$ and $\tau\in\D_h^0$.
    \end{lemma}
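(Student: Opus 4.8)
The plan is to split the operator error into a reward part and a robust-transition part via the dual representations \eqref{eq:TV_dual_linear} and \eqref{eq:empirical_bellman}: writing $V=\widehat{V}_{h+1}$ for brevity,
\[
(\widehat{\B}^{\rho}_h V)(s,a)-(\B^{\rho}_h V)(s,a)=\phi(s,a)^{\top}(\widehat{\theta}_h-\theta_h)+\sum_{i=1}^{d}\phi_i(s,a)\big(\widehat{\nu}^{\rho,V}_{h,i}-\nu^{\rho,V}_{h,i}\big),
\]
and to bound each coordinate's contribution by a scalar times $\|\phi_i(s,a)\1_i\|_{\Lambda_h^{-1}}$. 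For the reward part, I would first use that the reward is deterministic and linear, $r_h^{\tau}=\phi(s_h^{\tau},a_h^{\tau})^{\top}\theta_h$, so \eqref{eq:update_theta} collapses to $\widehat{\theta}_h-\theta_h=-\lambda_0\Lambda_h^{-1}\theta_h$. Expanding $\phi(s,a)^{\top}\Lambda_h^{-1}\theta_h=\sum_i\phi_i(s,a)\1_i^{\top}\Lambda_h^{-1}\theta_h$, applying Cauchy--Schwarz in the $\Lambda_h^{-1}$ inner product, and using $\Lambda_h^{-1}\preceq\lambda_0^{-1}I_d$ together with $\|\theta_h\|_2\le\sqrt{d}$ (Assumption \ref{assump:linear-mdp}) bounds this part by $\sqrt{\lambda_0 d}\sum_i\|\phi_i(s,a)\1_i\|_{\Lambda_h^{-1}}$, which is absorbed into the claimed $\sqrt{\lambda_0 d}H$ term since $H\ge 1$.

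Next I would handle the robust-transition part coordinate by coordinate. Both $\widehat{\nu}^{\rho,V}_{h,i}$ and $\nu^{\rho,V}_{h,i}$ are maxima over $\alpha\in[\min_sV(s),\max_sV(s)]$ of objectives sharing the identical penalty $-\rho(\alpha-\min_{s'}[V]_{\alpha}(s'))$, so applying Lemma \ref{prop:max_difference} in both directions cancels the penalty and yields
\[
\big|\widehat{\nu}^{\rho,V}_{h,i}-\nu^{\rho,V}_{h,i}\big|\le \max_{\alpha}\big|[\bar{\nu}^{V}_h(\alpha)]_i-[\mu^{0,V}_h(\alpha)]_i\big|,\qquad \mu^{0,V}_h(\alpha)\defeq\int_{\S}\mu_h^0(s')[V]_{\alpha}(s')\intd s',
\]
where $[\mu^{0,V}_h(\alpha)]_i=\E_{s\sim\mu^0_{h,i}}[V]_{\alpha}(s)$ matches the definition of $\nu^{\rho,V}_{h,i}$.

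The crux of the argument is the algebraic identity for $\bar{\nu}^{V}_h(\alpha)-\mu^{0,V}_h(\alpha)$. Using $\Lambda_h=\sum_{\tau}\phi\phi^{\top}+\lambda_0 I_d$ and the linear-MDP relation $\phi(s_h^{\tau},a_h^{\tau})^{\top}\mu^{0,V}_h(\alpha)=\int_{\S}P^0_h(s'\mid s_h^{\tau},a_h^{\tau})[V]_{\alpha}(s')\intd s'$, I would substitute $[V]_{\alpha}(s_{h+1}^{\tau})=\int_{\S}P^0_h(s'\mid s_h^{\tau},a_h^{\tau})[V]_{\alpha}(s')\intd s'-\epsilon_h^{\tau}(\alpha,V)$ into \eqref{eq:update_bar_nu} to obtain
\[
\bar{\nu}^{V}_h(\alpha)-\mu^{0,V}_h(\alpha)=-\Lambda_h^{-1}\sum_{\tau\in\D_h^0}\phi(s_h^{\tau},a_h^{\tau})\,\epsilon_h^{\tau}(\alpha,V)-\lambda_0\Lambda_h^{-1}\mu^{0,V}_h(\alpha).
\]
For the first (noise) term, a coordinate Cauchy--Schwarz gives $|\1_i^{\top}\Lambda_h^{-1}\sum_{\tau}\phi(s_h^{\tau},a_h^{\tau})\epsilon_h^{\tau}|\le\|\1_i\|_{\Lambda_h^{-1}}\|\sum_{\tau}\phi(s_h^{\tau},a_h^{\tau})\epsilon_h^{\tau}\|_{\Lambda_h^{-1}}$; multiplying by $\phi_i(s,a)\ge 0$, summing over $i$, and taking $\max_\alpha$ (which factors out of the $i$-sum) produces exactly $T_{1,h}\sum_i\|\phi_i(s,a)\1_i\|_{\Lambda_h^{-1}}$. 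For the second (regularization) term, $\|\mu^{0,V}_h(\alpha)\|_{\Lambda_h^{-1}}\le\lambda_0^{-1/2}\|\mu^{0,V}_h(\alpha)\|_2\le\lambda_0^{-1/2}H\int_{\S}\|\mu_h^0(s')\|_2\intd s'\le\lambda_0^{-1/2}H\sqrt{d}$ uniformly in $\alpha$ (since $[V]_{\alpha}\le H$ and Assumption \ref{assump:linear-mdp}), giving the $\sqrt{\lambda_0 d}H\sum_i\|\phi_i(s,a)\1_i\|_{\Lambda_h^{-1}}$ contribution. Adding the reward piece from the first step, the three contributions combine into the stated bound.

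I expect the main obstacle to be this algebraic identity together with the coordinate-wise organization of the Cauchy--Schwarz step: the linear-MDP relation $\phi^{\top}\mu^{0,V}_h(\alpha)=\int P^0_h[V]_{\alpha}$ is what lets the empirical inner product be compared against the true conditional expectation, and isolating each feature coordinate $\1_i$ (exploiting $\phi_i\ge 0$) is precisely what factors the error into a scalar magnitude times $\sum_i\|\phi_i(s,a)\1_i\|_{\Lambda_h^{-1}}$ --- the feature-wise penalty structure responsible for the improved $d$-dependence. A secondary point requiring care is that all $\alpha$-dependence must be controlled uniformly: the penalty cancellation via Lemma \ref{prop:max_difference} and the $\alpha$-free bound on the regularization term allow $\max_\alpha$ to be pulled through into $T_{1,h}$ without incurring extra factors.
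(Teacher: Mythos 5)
Your proposal follows essentially the same route as the paper's proof: the same split into the reward part $\phi^{\top}(\widehat{\theta}_h-\theta_h)$ and the robust-transition part, the same cancellation of the $\rho$-penalty via Lemma \ref{prop:max_difference}, the same algebraic identity $\bar{\nu}^{V}_h(\alpha)-\mu^{0,V}_h(\alpha)=-\Lambda_h^{-1}\sum_{\tau}\phi(s_h^{\tau},a_h^{\tau})\epsilon_h^{\tau}(\alpha,V)-\lambda_0\Lambda_h^{-1}\mu^{0,V}_h(\alpha)$, and the same coordinate-wise Cauchy--Schwarz exploiting $\phi_i\ge 0$ to factor out $\sum_{i}\|\phi_i(s,a)\1_i\|_{\Lambda_h^{-1}}$. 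The only (immaterial) discrepancy is bookkeeping of the $\lambda_0$-terms: your reward contribution $\sqrt{\lambda_0 d}$ plus your regularization contribution $\sqrt{\lambda_0 d}H$ overshoots the stated $\sqrt{\lambda_0 d}H$ by at most a factor of $2$, a constant slack of the same nature as the paper's own (it bounds the regularization term by $\sqrt{\lambda_0}H$ and then folds $\sqrt{\lambda_0}(\sqrt{d}+H)$ into $\sqrt{\lambda_0 d}H$), and it is harmless given the choice $\gamma_0=6\sqrt{d\xi_0}H$ downstream.
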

   We observe that the second term (\ie, $T_{1,h}$) in \eqref{eq:control_suboptimality_gap} will become dominating, as long as $\lambda_0$ is sufficiently small. In the following analysis, we will control
   $T_{1,h}$ via uniform concentration and the concentration of self-normalized process.
    
    Notice that $\alpha$ and $\widehat{V}_{h+1}$ are coupled with each other, which makes controlling $T_{1,h}$ intractable. To this end, we propose the minimal $\epsilon_0$-covering set for $\alpha$. Since $\widehat{V}_{h+1}(s)\in [0,H]$ for any $s\in \S$, we construct $\N(\epsilon_0,H)$ as  the minimal $\epsilon_0$-cover of the $[0,H]$ whose size satisfies $|\N(\epsilon_0,H)|\le \frac{3 H}{\epsilon_0}$. In other words, for any $\alpha \in [0,H]$, there exists $\alpha^{\dag}\in \N(\epsilon_0,H)$, we have
    \begin{equation}\label{eq:def_covering_set_for_alpha}
        |\alpha - \alpha^{\dag}| \le \epsilon_0.
    \end{equation}
    Then we can rewrite $T_{1,h}^2$ as 
    \begin{align}
       T_{1,h}^2 = & \max_{\alpha\in [\min_s \widehat{V}_{h+1}(s),\max_s \widehat{V}_{h+1}(s)]} \|\sum_{\tau\in \D^0_h}\phi(s_h^{\tau},a_h^\tau) \left(\epsilon_{h}^{\tau}(\alpha,\widehat{V}_{h+1}) - \epsilon_{h}^{\tau}(\alpha^{\dag},\widehat{V}_{h+1})+\epsilon_{h}^{\tau}(\alpha^{\dag},\widehat{V}_{h+1})\right)\|_{\Lambda_h^{-1}}^2 \nonumber\\
        &\le \max_{\alpha\in [0,H]} 2\|\sum_{\tau\in \D^0_h}\phi(s_h^{\tau},a_h^\tau) \left(\epsilon_{h}^{\tau}(\alpha,\widehat{V}_{h+1}) - \epsilon_{h}^{\tau}(\alpha^{\dag},\widehat{V}_{h+1})\right)\|_{\Lambda_h^{-1}}^2 +2\|\sum_{\tau\in \D^0_h}\phi(s_h^{\tau},a_h^\tau)\epsilon_{h}^{\tau}(\alpha^{\dag},\widehat{V}_{h+1})\|_{\Lambda_h^{-1}}^2\nonumber\\
        &\le 8\epsilon_0^2 K^2/\lambda_0 + 2\underset{T_{2,h}}{\underbrace{\|\sum_{\tau\in \D^0_h}\phi(s_h^{\tau},a_h^\tau)\epsilon_{h}^{\tau}(\alpha^{\dag},\widehat{V}_{h+1})\|_{\Lambda_h^{-1}}^2}}, \label{eq:diff_alpha}
    \end{align}
    for some $\alpha^{\dag}\in \N(\epsilon_0,H)$, where the proof of the last inequality is postponed to Appendix \ref{proof:diff_alpha}. Alternatively, 
    \begin{equation}\label{eq:T_2_uniform}
        T_{2,h}\le \sup_{\alpha\in \N(\epsilon_0,H)} \|\sum_{\tau\in \D^0_h}\phi(s_h^{\tau},a_h^\tau)\epsilon_{h}^{\tau}(\alpha,\widehat{V}_{h+1})\|_{\Lambda_h^{-1}}^2.
    \end{equation}
    
    Noted that the samples in $\D^0$ are temporally statistically independent, i.e., $\widehat{V}_{h+1}$ is independent of $\D_h^0$, or to say, $\widehat{\mu}_h$. Therefore, we can directly control $T_{2,h}$ via the following lemma. 
    \begin{lemma}[Concentration of self-normalized process]\label{lemma:self_normal}
    Let $V:\S\to[0,H]$ be any fixed vector that is independent with $\widehat{\mu}_h$ and $\alpha\in[0,H]$ be a fixed constant. For any fixed $h\in[H]$ and any $\delta\in(0,1)$, we have
    \begin{equation*}
        P_{\mathcal{D}}\left(\|\sum_{\tau\in \D^0_h} \phi(s_h^{\tau},a_h^\tau)\epsilon_{h}^{\tau}(\alpha,V)\|^2_{\Lambda_h^{-1}}>H^2\left(2\cdot\log(1/\delta)+d\cdot\log(1+{N_h}/\lambda_0)\right)\right)\le \delta.
    \end{equation*}
    \end{lemma}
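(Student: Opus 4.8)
The plan is to recognize $\sum_{\tau\in\D_h^0}\phi(s_h^\tau,a_h^\tau)\epsilon_h^\tau(\alpha,V)$ as a self-normalized martingale sum and to apply the Hoeffding-type bound of Lemma~\ref{lemma:Hoeffding}. Since we have conditioned on $N_h$ and the samples in $\D_h^0$ are temporally statistically independent, I would enumerate them as $\tau=1,\dots,N_h$ and set $x_\tau = \phi(s_h^\tau,a_h^\tau)$ and $\eta_\tau = \epsilon_h^\tau(\alpha,V)$. Choosing the filtration $\{\mathcal{F}_\tau\}$ so that $\mathcal{F}_{\tau-1}$ contains all tuples indexed by $j<\tau$ together with the current covariate pair $(s_h^\tau,a_h^\tau)$, we have that $x_\tau$ is $\mathcal{F}_{\tau-1}$-measurable while $\eta_\tau$ (which depends on the next state $s_{h+1}^\tau$) is $\mathcal{F}_\tau$-measurable, and $\Lambda_h$ plays the role of $\Lambda_t$ with $t=N_h$ and $\lambda=\lambda_0$.

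Next I would verify the two hypotheses of Lemma~\ref{lemma:Hoeffding}. For conditional mean-zero, note that $s_{h+1}^\tau\sim P_h^0(\cdot\mid s_h^\tau,a_h^\tau)$ and that $V,\alpha$ are fixed and independent of $\D_h^0$ (this is exactly where the hypothesis $V\perp\widehat{\mu}_h$ enters), so
$$\E[\eta_\tau\mid\mathcal{F}_{\tau-1}] = \int_\S P_h^0(s'\mid s_h^\tau,a_h^\tau)[V]_\alpha(s')\intd s' - \E\big[[V]_\alpha(s_{h+1}^\tau)\mid s_h^\tau,a_h^\tau\big] = 0.$$
For the tail behavior, the definition of $[V]_\alpha$ in \eqref{eq:nu} shows that $[V]_\alpha(s')\in[0,H]$ whenever $V:\S\to[0,H]$, so both the conditional mean and the realization lie in $[0,H]$ and hence $|\eta_\tau|\le H$; a mean-zero variable supported on $[-H,H]$ is $H$-sub-Gaussian, giving $R=H$. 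Combined with $\|\phi(s_h^\tau,a_h^\tau)\|_2\le 1$ from Assumption~\ref{assump:linear-mdp} (so $L=1$), all conditions are met.

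Finally I would apply Lemma~\ref{lemma:Hoeffding} with $t=N_h$, $L=1$, $\lambda=\lambda_0$, and $R=H$, which yields, with probability at least $1-\delta$,
$$\Big\|\sum_{\tau\in\D_h^0}\phi(s_h^\tau,a_h^\tau)\epsilon_h^\tau(\alpha,V)\Big\|_{\Lambda_h^{-1}}\le H\sqrt{d\log(1+N_h/\lambda_0)+2\log(1/\delta)},$$
and squaring both sides gives precisely the claimed tail bound (the stated event is the complement). The computation is routine; the only delicate point --- and the reason the lemma is deliberately stated for a \emph{fixed}, independent $V$ and $\alpha$ --- is the conditional mean-zero step, which would fail if $V$ (e.g.\ $\widehat{V}_{h+1}$) depended on $\D_h^0$. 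This is exactly why the surrounding argument first secures temporal independence through \twofold and discretizes $\alpha$ over the covering net $\N(\epsilon_0,H)$ before reducing to this lemma.
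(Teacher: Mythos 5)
Your proposal is correct and follows essentially the same route as the paper: identify the sum as a self-normalized martingale with $x_\tau=\phi(s_h^\tau,a_h^\tau)$ and $\eta_\tau=\epsilon_h^\tau(\alpha,V)$, verify conditional mean-zero (using the independence of $V,\alpha$ from $\D_h^0$) and $|\eta_\tau|\le H$ so that $R=H$, $L=1$, and invoke Lemma~\ref{lemma:Hoeffding}. The only cosmetic difference is that the paper first states the bound in the determinant form $2H^2\log\big(\det(\Lambda_h^{1/2})/(\delta\det(\lambda_0 I_d)^{1/2})\big)$ and then bounds the determinant, whereas you read off the final expression directly from the statement of Lemma~\ref{lemma:Hoeffding}.
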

    The proof of Lemma \ref{lemma:self_normal} is postponed to Appendix  \ref{proof:self_normal}.
    Then applying Lemma \ref{lemma:self_normal} and the union bound over $\N(\epsilon_0,H)$, we have 
    \begin{align*}
        &P_{\D}\left(\sup_{\alpha\in \N(\epsilon_0,H)} \|\sum_{\tau\in \D^0_h} \phi(s_h^{\tau},a_h^\tau)\epsilon_{h}^{\tau}(\alpha,\widehat{V}_{h+1})\|^2_{\Lambda_h^{-1}} \ge H^2(2\log(H |\mathcal{N}(\epsilon_0,H)|/\delta) + d\log(1+{N_h}/\lambda_0))\right)\\
        &\le \delta/H,
    \end{align*}
    for any fixed $h\in [H]$.
    According to \cite{vershynin2018high}, one has $|\N(\epsilon_0,H)|\le\frac{3H}{\epsilon_0}$. Taking the union bound for any $h\in [H]$, we arrive at
    \begin{equation}\label{eq:bound_T3}
    \begin{aligned}
        \sup_{\alpha\in \N(\epsilon_0,H)} \|\sum_{\tau\in \D^0_h} \phi(s_h^{\tau},a_h^\tau)\epsilon_{h}^{\tau}(\alpha,\widehat{V}_{h+1})\|^2_{\Lambda_h^{-1}} &\le  2H^2\log(\frac{3H^2}{\epsilon_0\delta})+H^2d\log(1+\frac{{K}}{\lambda_0}),
    \end{aligned}
    \end{equation}
    with probability exceeding $1-\delta$, where we utilize $N_h\le K$ for every $h\in[H]$ on the right-hand side.
    
    Combining \eqref{eq:diff_alpha}, \eqref{eq:T_2_uniform} and \eqref{eq:bound_T3}, we have 
    \begin{equation*}
        T_{1,h}^2 \le 8\epsilon_0^2 {K}^2/\lambda_0 + 4H^2\log(\frac{3H^2}{\epsilon_0\delta})+2H^2d\log(1+\frac{{K}}{\lambda_0}),
    \end{equation*} 
    with probability at least $1-\delta$. Let $\epsilon_0 = H/K$ and $\lambda_0 = 1$. Then,
    \begin{align*}
        T_{1,h}^2 &\le 8H^2 + 4H^2\log(\frac{3HK}{\delta})+2H^2d\log(1+K)\\
        &\le 8H^2 + 4H^2\log(3HK/\delta) + 2dH^2\log(2K).
    \end{align*}
    Let $\xi_0 = \log(3HK/\delta)\ge 1$. Note that $\log(2K)\le \log(3HK/\delta)= \xi_0$. Then, we have
    \begin{equation*}
        T_{1,h}^2 \le 8H^2 +4H^2 \xi_0 + 2dH^2\xi_0 \le 16dH^2\xi_0.
    \end{equation*}
    Therefore, with probability exceeding $1-\delta$, one has
    \begin{align*}
        | (\widehat{\mathbb{B}}_{h}^{\rho} \widehat{V}_{h + 1})(s, a) - (\mathbb{B}_h^{\rho} \widehat{V}_{h + 1})(s, a) | &\le \left( \sqrt{d}H+ 4\sqrt{d\xi_0}H \right)\sum_{i=1}^d \|\phi_i(s,a)\|_{\Lambda_h^{-1}}\\
        &\le \gamma_0\sum_{i=1}^d \|\phi_i(s_h,a_h)\|_{\Lambda_h^{-1}} = \Gamma_h(s,a),
    \end{align*}
    where $\gamma_0 = 6\sqrt{d\xi_0}H$ and the above inequality satisfies \eqref{eq:penalty_condition}.

\subsubsection{Proof of Lemma \ref{lemma:control_suboptimality_gap}}\label{proof:control_suboptimality_gap}
It follows Lemma \ref{lemma:linearity} and \eqref{eq:empirical_bellman} that
\begin{align}
    &| (\widehat{\mathbb{B}}_{h}^{\rho} \widehat{V}_{h + 1})(s, a) - (\mathbb{B}_h^{\rho} \widehat{V}_{h + 1})(s, a) |= |\phi(s, a)^{\top}(\widehat{w}_{h}^{\rho,\widehat{V}} - w_{h}^{\rho,\widehat{V}})|\nonumber\\
    &= \underbrace{|\phi(s, a)^{\top} \left( \widehat{\theta}_h - \theta_h \right)|}_{\texttt{(i)}} + \underbrace{|\phi(s, a)^{\top} \left( \widehat{\nu}_h^{\rho,\widehat{V}} - \nu_h^{\rho,\widehat{V}} \right)|}_{\texttt{(ii)}},\qquad \forall (s,a,h)\times \S\times\A\times[H]. \label{eq:direcly_decompose_B_diff}
\end{align}

We first bound the term $\texttt{(i)}$, for any $h\in [H]$. 

By the update \eqref{eq:update_theta}, we have
\begin{align*}
    \texttt{(i)} &= |\phi(s, a)^{\top} \Lambda_h^{-1} \left( \sum_{\tau\in \D^0_h} \phi(s_h^{\tau}, a_h^{\tau}) r_h^{\tau} \right) - \phi(s, a)^{\top} \theta_h| \\
    &= |\phi(s, a)^{\top} \Lambda_h^{-1} \left(\Lambda_h - \lambda_0 I \right)\theta_h   - \phi(s, a)^{\top} \theta_h|\\
    & = |\lambda_0 \phi(s, a)^{\top} \Lambda_h^{-1} \theta_h|, 
\end{align*}
where the second equality is from Assumption \ref{assump:linear-mdp} and  \eqref{eq:Lambda}. Applying the Cauchy-Schwarz inequality leads to
\begin{equation}\label{eq:control_term_i}
    \texttt{(i)} \leq \lambda_0 \norm{\phi(s, a)}_{\Lambda_h^{-1}} \norm{\theta_h}_{\Lambda_h^{-1}} \leq \sqrt{d\lambda_0}\sum_{i=1}^d \|\phi_i(s,a)\1_i\|_{\Lambda_h^{-1}}, 
\end{equation}
where the last inequality is $\|\theta_h\|\le \sqrt{d}$ and $\|\Lambda_h^{-1}\|\le 1/\lambda_0$ such that
\begin{equation*}
    \norm{\theta_h}_{\Lambda_h^{-1}} \le \|\Lambda_h^{-1}\|^{1/2}\|\theta_h\| \le \sqrt{d/\lambda_0},\qquad \forall h\in[H].
\end{equation*}

Next, to bound the term $\texttt{(ii)}$, we define the following notations for simplicity. Let $\epsilon_h^{\tau}(\alpha,V) = \int_{\S} P^0_h(s'|s_h^{\tau},a_h^{\tau}) [V]_{\alpha}(s') \intd s'- [ V]_{\alpha}(s_{h+1}^{\tau})$ for any $V:\S\to[0,H]$ and $\alpha \in [\min_s V(s),\max_s V(s)]$. Also, we define two auxiliary functions:
\begin{align*}
    \widehat{g}_{h,i}(\alpha) &= \int_\S \widehat{\mu}_{h,i}(s') [\widehat{V}_{h+1}]_{\alpha}(s')\intd s'-\rho(\alpha - \min_{s'} [\widehat{V}_{h+1}]_{\alpha}(s')),\\
   g^0_{h,i}(\alpha)  &= \int_\S {\mu}^0_{h,i}(s') [\widehat{V}_{h+1}]_{\alpha}(s')\intd s'-\rho(\alpha - \min_{s'} [\widehat{V}_{h+1}]_{\alpha}(s')).
\end{align*}
With the new notations in hand, we can bound $\texttt{(ii)}$ by
\begin{align}
        \texttt{(ii)} &= \left\vert\sum_{i=1}^ d \phi_i(s, a) \left(\widehat{\nu}_{h,i}^{\rho,\widehat{V}} - \nu_{h,i}^{\rho,\widehat{V}} \right)\right\vert\nonumber\\
        &\le \sum_{i=1}^d \phi_i(s,a)\max_{\alpha \in [\min_s \widehat{V}_{h+1}(s),\max_s \widehat{V}_{h+1}(s)]} \left\vert\widehat{g}_{h,i}(\alpha) - g^0_{h,i}(\alpha)\right\vert\nonumber\\
        &\le \sum_{i=1}^d \phi_i(s,a) \max_{\alpha \in [\min_s \widehat{V}_{h+1}(s),\max_s \widehat{V}_{h+1}(s)]} \left\vert \int_{\S}\left(\widehat{\mu}_{h,i}(s') - \mu^0_{h,i}(s')\right)[\widehat{V}_{h+1}]_{\alpha}(s')\intd s' \right\vert\nonumber\\
        &\le \sum_{i=1}^d \max_{\alpha \in [\min_s \widehat{V}_{h+1}(s),\max_s \widehat{V}_{h+1}(s)]} \left\vert\phi_i(s,a)\int_{\S}\left(\widehat{\mu}_{h,i}(s') - \mu^0_{h,i}(s')\right)[\widehat{V}_{h+1}]_{\alpha}(s')\intd s'\right\vert,\label{eq:T_2_after_max}
    \end{align}
where the first inequality is due to \eqref{eq:nu}, \eqref{eq:update_nu} as well as Lemma \ref{prop:max_difference},
and the last inequality is based on $\phi_i(s,a)\ge 0$ for any $(s,a)\in\S\times\A$ from Assumption \ref{assump:linear-mdp}. 
Moreover,
\begin{align}
    & \left\vert\int_\S {\mu}^0_{h,i}(s') [\widehat{V}_{h+1}]_{\alpha}(s')\intd s'- \int_\S \widehat{\mu}_{h,i}(s') [\widehat{V}_{h+1}]_{\alpha}(s')\intd s'\right\vert \nonumber\\
    &= \left\vert \int_\S {\mu}^0_{h,i}(s') [\widehat{V}_{h+1}]_{\alpha}(s')\intd s'- \1_i^{\top}\Lambda_h^{-1} \left(\sum_{\tau\in \D^0_h}\phi(s_h^{\tau},a_h^{\tau})[\widehat{V}_{h+1}]_{\alpha}(s_{h+1}^\tau)\right)\right\vert\nonumber\\
    &= \left\vert \1_i^{\top}\Lambda_{h}^{-1} \left(\Lambda_h \int_\S {\mu}^0_{h}(s') [\widehat{V}_{h+1}]_{\alpha}(s')\intd s' - \sum_{\tau\in \D^0_h}\phi(s_h^{\tau},a_h^{\tau})[\widehat{V}_{h+1}]_{\alpha}(s^\tau_{h+1})\right)\right\vert\nonumber\\
    &= \left\vert\1_i^{\top}\Lambda_h^{-1}\left[\lambda_0\int_{\S} {\mu}^0_{h}(s') [\widehat{V}_{h+1}]_{\alpha}(s')\intd s' + \sum_{\tau\in \D^0_h}\phi(s_h^{\tau},a_h^\tau)\left(\int_{\S} P_h^0(s'|s_h^{\tau},a_h^{\tau})[\widehat{V}_{h+1}]_{\alpha}(s')\intd s'  - [\widehat{V}_{h+1}]_{\alpha}(s_{h+1}^\tau)\right) \right]\right\vert\nonumber\\
    &= \left\vert\1_i^{\top}\Lambda_h^{-1}\left[\lambda_0\int_{\S} {\mu}^0_{h}(s') [\widehat{V}_{h+1}]_{\alpha}(s')\intd s' + \sum_{\tau\in \D^0_h}\phi(s_h^{\tau},a_h^\tau)\epsilon_h^{\tau}(\alpha,\widehat{V}_{h+1}) \right]\right\vert\label{eq:muV_esterror1}
\end{align}
where the first equality is from \eqref{eq:hat_mu}, the third one is due to \eqref{eq:Lambda} and we let
$$\epsilon_h^{\tau}(\alpha,V) = \int_{\S} P^0_h(s'|s_h^{\tau},a_h^{\tau}) [V]_{\alpha}(s') \intd s'- [ V]_{\alpha}(s_{h+1}^{\tau}),$$ for any $V:\S\to[0,H]$ and $\alpha \in [\min_s V(s),\max_s V(s)]$. 
Then, we have
\begin{align}
    &\left\vert\phi_i(s,a)\cdot(\widehat{\mu}_{h,i} - \mu^0_{h,i})[\widehat{V}_{h+1}]_{\alpha}\right\vert \nonumber\\
    &\le \left\vert \phi_i(s,a)\1_i^{\top}\Lambda_h^{-1}\left(\lambda_0\int_{\S} {\mu}^0_{h}(s') [\widehat{V}_{h+1}]_{\alpha}(s')\intd s' + \sum_{\tau\in \D^0_h}\phi(s_h^{\tau},a_h^\tau) \epsilon_h^\tau(\alpha,\widehat{V}_{h+1}) \right) \right\vert\nonumber\\
    &\le \|\phi_i(s,a)\1_i\|_{\Lambda_h^{-1}} \left( \underset{\texttt{(iii)}}{\underbrace{\lambda_0 \Big\|\int_{\S} {\mu}^0_{h}(s') [\widehat{V}_{h+1}]_{\alpha}(s')\intd s'\Big\|_{\Lambda_h^{-1}}}} + \Big\|\sum_{\tau\in \D^0_h}\phi(s_h^{\tau},a_h^\tau)\epsilon_h^\tau(\alpha,\widehat{V}_{h+1})\Big\|_{\Lambda_h^{-1}}\right),\label{eq:max_term}
\end{align}
where the last inequality holds due to the Cauchy-Schwarz inequality.

Moreover, the term \texttt{(iii)} in \eqref{eq:max_term} can be further simplified to
\begin{equation*}
    \texttt{(iii)} \le \lambda_0\|\Lambda_h^{-1}\|^{\frac{1}{2}} \| \int_{\S} {\mu}^0_{h}(s') [\widehat{V}_{h+1}]_{\alpha}(s')\intd s'\| \le \sqrt{\lambda_0}H,
\end{equation*}
since $|\widehat{V}_{h+1}(s)|\le H$ for any $s\in\S$ and $\|\Lambda_h^{-1}\| \le 1/\lambda_0$.
Then we have 
\begin{equation}\label{eq:decomposeT2}
    \texttt{(ii)} \le \left(\sqrt{\lambda_0}H +\max_{\alpha\in [\min_s \widehat{V}_{h+1}(s),\max_s \widehat{V}_{h+1}(s)]}  \|\sum_{\tau\in \D^0_h}\phi(s_h^{\tau},a_h^\tau) \epsilon_{h}^{\tau}(\alpha,\widehat{V}_{h+1})\|_{\Lambda_h^{-1}}\right) \sum_{i=1}^d \|\phi_i(s,a)\1_i\|_{\Lambda_h^{-1}},
\end{equation}
for any $(s,a,h)\in\S\times\A\times[H]$. Combining \eqref{eq:decomposeT2} with \eqref{eq:control_term_i} finally leads to \eqref{eq:control_suboptimality_gap}, which completes our proof.

\subsubsection{Proof of \eqref{eq:diff_alpha}}\label{proof:diff_alpha}
Since $\epsilon_h^{\tau}(\alpha,V)$ is $2$-Lipschitz with respect to $\alpha$ for any $V:\S\to[0,H]$, i.e.
\begin{align*}
    |\epsilon_{h}^{\tau}(\alpha,V) - \epsilon_h^{\tau}(\alpha^{\dag},V)|
    \le & 2|\alpha-\alpha^{\dag}|\le 2\epsilon_0.
\end{align*}
Therefore, for any $\alpha\in[0,H]$, one has
\begin{align*}
    &\|\sum_{\tau\in \D^0_h} \phi(s_h^{\tau},a_h^\tau)\left(\epsilon_{h}^{\tau}(\alpha,V) - \epsilon_h^{\tau}(\alpha^{\dag},V)\right)\|^2_{\Lambda_h^{-1}}\\
    =& \sum_{\tau,\tau'\in \D^0_h} \phi(s_h^{\tau},a_h^\tau)^{\top}\Lambda_h^{-1}\phi(s_h^{\tau'},a_h^{\tau'})\left[\left(\epsilon_{h}^{\tau}(\alpha,V) - \epsilon_h^{\tau}(\alpha^{\dag},V)\right)\left(\epsilon_{h}^{\tau'}(\alpha,V) - \epsilon_h^{\tau'}(\alpha^{\dag},V)\right)\right]\\
    \le&\sum_{\tau,\tau'\in \D^0_h} \phi(s_h,a_h^\tau)^{\top}\Lambda_h^{-1}\phi(s_h^{\tau'},a_h^{\tau'})\cdot 4\epsilon_0^2 \\
    \le & 4\epsilon_0^2 N_h^2/\lambda_0,
\end{align*}
where the last inequality is based on $\|\phi(s,a)\|\le 1$ and $\lambda_{\min}(\Lambda_h) \ge \lambda_0$ for any $(s,a,h)\in\S\times\A\times[H]$ such that
\begin{equation}\label{eq:quad_phi_Lambda}
\begin{aligned}
    \sum_{\tau,\tau'\in \D^0_h} \phi(s_h^{\tau},a_h^\tau)^{\top}\Lambda_h^{-1}\phi(s_h^{\tau'},a_h^{\tau'}) &= \sum_{\tau,\tau'\in \D^0_h} \|\phi(s_h^{\tau},a_h^\tau)\|_2\cdot\|\phi(s_h^{\tau'},a_h^{\tau'})\|_2\cdot\|\Lambda_h^{-1}\|\le  N_h^2/\lambda_0.
\end{aligned}
\end{equation}
Thus,
\begin{equation*}
    \max_{\alpha\in [0,H]} \|\sum_{\tau\in \D^0_h}\phi(s_h^{\tau},a_h^\tau) \left(\epsilon_{h}^{\tau}(\alpha,\widehat{V}_{h+1}) - \epsilon_{h}^{\tau}(\alpha^{\dag},\widehat{V}_{h+1})\right)\|_{\Lambda_h^{-1}}^2 \le 4\epsilon_0^2 N_h^2/\lambda_0 \le 4\epsilon_0^2 K^2/\lambda_0,
\end{equation*}
due to the fact $N_h\le K$ for any $h\in[H]$, which completes the proof of \eqref{eq:diff_alpha}.

\subsubsection{Proof of Lemma \ref{lemma:self_normal}}\label{proof:self_normal}
For any fixed $h\in [H]$ and $\tau\in \D_h^0$, we define the $\sigma$-algebra
\begin{equation*}
    \F_{h,\tau} = \sigma(\{(s_h^j,a_h^j)\}_{j=1}^{(\tau+1)\wedge |N_h|},\{r_h^j,s_{h+1}^j\}_{j=1}^\tau).
\end{equation*}
As shown in \citet[Lemma B.2]{jin2021pessimism}, for any $\tau\in\D_h^0$, we have $\phi(s_h^\tau,a_h^{\tau})$ is $\F_{h,\tau-1}$-measurable and $\epsilon_h^{\tau}(\alpha,V)$ is $\F_{h,\tau-1}$-measurable. Hence $\{\epsilon_h^{\tau}(\alpha,V)\}_{\tau\in\D_h^0}$ is stochastic process adapted to the filtration $\{\F_{h,\tau}\}_{\tau\in\D_h^0}$. Then, we have
\begin{align*}
    \E_{{\D_h^0}} [\epsilon_h^{\tau}(\alpha,V)|\F] 
    &= \int_{\S} P^0_h(s'|s_h^{\tau},a_h^{\tau}) [V]_{\alpha} - \E_{\D_h^0}\left[ [V(s_{h+1}^{\tau})]_{\alpha}|\{(s_h^j,a_h^j)\}_{j=1}^{(\tau)\wedge N_h},\{r_h^j,s_{h+1}^j\}_{j=1}^{\tau-1}\right]\\
    &= \int_{\S} P^0_h(s'|s_h^{\tau},a_h^{\tau}) [V]_{\alpha} - \E_{\D_h^0}\left[ [V(s_{h+1}^{\tau})]_{\alpha}\right] = 0.
\end{align*}
Note that $\epsilon_h^{\tau}(\alpha,V) = \int_{\S} P^0_h(s'|s_h^{\tau},a_h^{\tau}) [V]_{\alpha} - [ V(s_{h+1}^{\tau})]_{\alpha}$ for any $V\in[0,H]^{\S}$ and $\alpha \in [0,H]$. Then, we have
\begin{align*}
    |\epsilon_h^{\tau}(\alpha,V)| \le H.
\end{align*}
Hence, for the fixed $h\in[H]$ and all $\tau\in[H]$, the random variable $\epsilon_h^{\tau}(\alpha,V)$ is mean-zero and $H$-sub-Gaussian conditioning on $\F_{h,\tau-1}$. 
Then, we invoke the Lemma \ref{lemma:Hoeffding}
with $\eta_\tau = \epsilon_{h}^{\tau}(\alpha,V)$ and $x_{\tau} = \phi(s_h^{\tau},a_h^{\tau})$. For any $\delta>0$, we have
\begin{equation*}
    P_{\mathcal{D}}\left(\|\sum_{\tau\in \D^0_h} \phi(s_h^{\tau},a_h^\tau)\epsilon_{h}^{\tau}(\alpha,V)\|^2_{\Lambda_h^{-1}}> 2H^2\log (\frac{\det (\Lambda_h^{1/2})}{\delta\det (\lambda_0 I_d)^{1/2}})\right)\le \delta.
\end{equation*}
Together with the facts that $\det (\Lambda_h^{1/2}) = (\lambda_0+N_h)^{d/2}$ and $\det (\lambda_0 I_d)^{1/2} = \lambda_0^{d/2}$, we can conclude the proof of Lemma \ref{lemma:self_normal}.

\subsection{Proof of Corollary \ref{corollary:sufficient_coverage}} \label{proof:corollary_sufficient_coverage}
Before continuing, we introduce some additional notations that will be used in the following analysis. For any $(h,i) \in[H] \times [d]$, define $\Phi_{h,i}^{\star} : \S\to \mathbb{R}^{ d\times d}$ and $b_{h,i}^{\star}: \S\to \mathbb{R}$ by
\begin{align}
    \Phi_{h,i}^{\star}(s) &= (\phi_i(s,\pi_h^{\star}(s))\1_i)(\phi_i(s,\pi_h^{\star}(s))\1_i)^{\top} \in\mathbb{R}^{d\times d},\label{eq:Phi_hi_star}\\
    b_{h,i}^{\star}(s) &=(\phi_i(s,\pi_h^{\star}(s))\1_i)^{\top}\Lambda_h^{-1}(\phi_i(s,\pi_h^{\star}(s))\1_i)\label{eq:b_hi_star}.
\end{align}

With these notations in hand and recalling \eqref{eq:thm_main} in Theorem~\ref{thm:main}, one has
\begin{align}
  V_1^{\star,\rho}(\zeta) - V_1^{\widehat{\pi},\rho}(\zeta)
  &\le  2\gamma_0\sum_{h=1}^H \sum_{i=1}^d\sup_{d_h^{\star}\in \D_h^{\star}} \E_{s\sim d_h^{\star}}\sqrt{b_{h,i}^{\star}(s)}\nonumber\\
  &\le 2\gamma_0\sum_{h=1}^H \sum_{i=1}^d\sup_{d_h^{\star}\in \D_h^{\star}} \sqrt{ \E_{s\sim d_h^{\star}} b_{h,i}^{\star}(s)}\nonumber\\
  &= 2\gamma_0\sum_{h=1}^H \sup_{d_h^{\star}\in \D_h^{\star}} \sum_{i=1}^d\sqrt{ \E_{s\sim d_h^{\star}} b_{h,i}^{\star}(s)}, \label{eq:corollary1_Vdifference}
\end{align}
where the second inequality is due to the Jensen's inequality and concavity.

In the following, we will control the key term $\sum_{i=1}^d\sqrt{ \E_{s\sim d_h^{\star}} b_{h,i}^{\star}(s)}$ for any $d_h^{\star}\in \D_h^{\star}$. Before continuing, we first denote 
\begin{equation*}
    \mathcal{C}_h^{\textb} = \{(s,a):d_h^{\textb}(s,a)>0\}.
\end{equation*}
Considering any $(s,a)$ s.t. $d_h^{\textb}(s,a)>0$ and from Lemma \ref{lemma:Ntrim_bound}, the following lower bound holds with probability at least $1-3\delta$, \ie,
\begin{equation}\label{eq:corollary1_Nh_sa}
    N_h(s,a) \ge \frac{Kd_h^{\textb}(s,a)}{8} - 5\sqrt{Kd_h^{\textb}(s,a)\log(\frac{KH}{\delta})}\ge \frac{Kd_h^{\textb}(s,a)}{16},
\end{equation}
as long as 
\begin{equation}\label{eq:two_fould_burnin_K}
    K\ge c_0 \frac{\log(KH/\delta)}{d_{\min}^{\textb}}\ge c_0 \frac{\log(KH/\delta)}{d_h^{\textb}(s,a) }
\end{equation}
for some sufficiently large $c_0$ and $d_{\min}^{\textb} = \min_{h,s,a}\{d_h^{\textb}(s,a):d_h^{\textb}(s,a)>0\}$. Therefore, 
\begin{align*}
    \Lambda_h& = \sum_{(s,a)\in\mathcal{C}_h^{\textb}} N_h(s,a)\phi(s,a)\phi(s,a)^{\top}+I_d\\
    &\succeq \sum_{(s,a)\in\mathcal{C}_h^{\textb}}  \frac{Kd_h^{\textb}(s,a)}{16}\phi(s,a)\phi(s,a)^{\top} + I_d\\
    &\succeq \frac{K}{16} \mathbb{E}_{d^{\textb}_h}[\phi(s,a)\phi(s,a)^{\top}] + I_d.
\end{align*}
From Assumption \ref{assump:Crob}, 
        \begin{equation*}
            \mathbb{E}_{d^{\textb}_h}[\phi(s,a)\phi(s,a)^{\top}] \succeq \max_{P\in\Prho(P^0)} \frac{d\cdot\min\{\E_{d_h^{\star,P}}\phi^2_i(s,a),1/d\}}{\Crob} \1_{i,i}, \quad \forall i\in[d]
        \end{equation*}
Thus, for any $i\in [d]$,
    \begin{equation}\label{eq:Lambda_h_lowerbound}
        \Lambda_h \succeq I_d + \frac{Kd\cdot \min\{\E_{d_h^{\star}}\phi^2_i(s,\pi_h^{\star}(s)),1/d\}}{16\Crob}\cdot \1_{i,i}.
    \end{equation}
    Here, $\1_{i,j}$ represents a matrix with the $(i,j)$-th coordinate as $1$ and all other elements as $0$.
    Consequently, 
    \begin{align}\label{eq:corollary1_inner_term}
        \E_{s\sim d_h^{\star}} b_{h,i}^{\star}(s) = \E_{s\sim d_h^{\star}}\Tr(\Phi_{h,i}^{\star}(s)\Lambda_h^{-1}) &= \Tr(\E_{s\sim d_h^{\star}}\Phi_{h,i}^{\star}(s) \Lambda_h^{-1}) \nonumber\\
        &\le \frac{\E_{d_h^{\star}}\phi^2_i(s,\pi_h^{\star}(s))}{1+Kd\cdot \min\{\E_{d_h^{\star}}\phi^2_i(s,\pi_h^{\star}(s)),1/d\}/16\Crob} ,
    \end{align}
    where the second equality is because the trace is a linear mapping and the last inequality holds by Lemma \ref{lemma:trace}. We further define $\mathcal{E}_{h,\mathsf{larger}} = \{i:\E_{(s,a)\sim d_h^{\star}}\phi^2_i(s,a)\ge \frac{1}{d}\}$. Due to Assumption \ref{assump:linear-mdp}, we first claim that 
    \begin{equation}\label{eq:E_h,larger}
        |\mathcal{E}_{h,\mathsf{larger}}|\le \sqrt{d},
    \end{equation}
    where the proof can be found at the end of this subsection.

    By utilizing Assumption \ref{assump:Crob}, we discuss the following three cases.
    \begin{itemize}
    \item If $\E_{(s,a)\sim d_h^{\star}}\phi^2_i(s,a) = 0$ ($i\notin\mathcal{E}_{h,\mathsf{larger}}$), it is easily observed that \eqref{eq:corollary1_inner_term} can be controlled by
    $\langle d_h^{\star}, b_{h,i}^{\star} \rangle \le  0 $.
    \item If $0<\E_{(s,a)\sim d_h^{\star}}\phi^2_i(s,a)\le \frac{1}{d}$ ($i\notin\mathcal{E}_{h,\mathsf{larger}}$),we have
        \begin{equation}
            \eqref{eq:corollary1_inner_term} \le \frac{16\Crob \cdot \E_{d_h^{\star}}\phi^2_i(s,\pi_h^{\star}(s))}{Kd\cdot \E_{d_h^{\star}}\phi^2_i(s,\pi_h^{\star}(s))} =  \frac{16\Crob}{Kd}.
        \end{equation}
    \item If $i\in\mathcal{E}_{h,\mathsf{larger}}$, i.e., $\frac{1}{d}\le \E_{(s,a)\sim d_h^{\star}}\phi^2_i(s,a) \le 1 $, we have
        \begin{equation}
           \eqref{eq:corollary1_inner_term} \le \frac{16\Crob \cdot \E_{d_h^{\star}}\phi^2_i(s,\pi_h^{\star}(s))}{K}\le \frac{16\Crob}{K},
        \end{equation}
        where the last inequality holds due to $\phi^2_i(s,\pi_h^{\star}(s))\le 1$.
    \end{itemize}
    Summing up the above three cases and \eqref{eq:E_h,larger}, we have
    \begin{align*}
        \sum_{i=1}^d  \sqrt{\E_{s\sim d_h^{\star}} b_{h,i}^{\star}(s)}
         &\le \sum_{i\in \mathcal{E}_{h,\mathsf{larger}}}  \sqrt{\E_{s\sim d_h^{\star}} b_{h,i}^{\star}(s)} + \sum_{i\notin \mathcal{E}_{h,\mathsf{larger}}} \sqrt{\E_{s\sim d_h^{\star}} b_{h,i}^{\star}(s)} \\
        &\le |\mathcal{E}_{h,\mathsf{larger}}| \sqrt{\frac{16\Crob}{K}} + |d-\mathcal{E}_{h,\mathsf{larger}}| \sqrt{\frac{16\Crob}{Kd}}\\
        &\le 8 \sqrt{\Crob}\sqrt{\frac{d}{K}}. 
    \end{align*}
Together with \eqref{eq:corollary1_Vdifference} and setting $\gamma_0=6\sqrt{d}H\sqrt{\log(3HK/\delta)}$, one obtains
\begin{align*}
    V_1^{\star,\rho}(\zeta) - V_1^{\widehat{\pi},\rho}(\zeta)&\le  2\gamma_0\sum_{h=1}^H \sup_{d_h^{\star}\in \D_h^{\star}} \sum_{i=1}^d\sqrt{ \E_{s\sim d_h^{\star}} b_{h,i}^{\star}(s)}\\
    &\le 96dH^2\sqrt{\Crob/K}\sqrt{\log(3HK/\delta)}, 
\end{align*}
with probability at least $1-4\delta$, as long as $K\ge c_0 \frac{\log(KH/\delta)}{d_{\min}^{\textb}}$ for some universal constant $c_0$.

\paragraph*{Proof of \eqref{eq:E_h,larger}.} 
Let $\tilde{\mathcal{E}}_{h,\mathsf{larger}} = \{i:\E_{(s,a)\sim d_h^{\star}}\phi_i(s,a)\ge \frac{1}{\sqrt{d}}\}$. 
\begin{itemize}
    \item We first show that $|\tilde{\mathcal{E}}_{h,\mathsf{larger}}|$ should be no larger than $\sqrt{d}$ by contradiction. Suppose $ |\tilde{\mathcal{E}}_{h,\mathsf{larger}}| > \sqrt{d}$. Then, there are more than $\sqrt{d}$ coordinates of $\E_{(s,a)\sim d_h^{\star}}\phi(s,a)\in\mathbb{R}^d$ that is larger than $1/\sqrt{d}$. In other words,
    \begin{equation}
        \sum_{i\in \tilde{\mathcal{E}}_{h,\mathsf{larger}}} \E_{(s,a)\sim d_h^{\star}}\phi_i(s,a) > 1,
    \end{equation}
    which is equivalent to 
    \begin{equation}
        \max_{(s,a)\in\S\times\A} \|\phi(s,a)\|_1 \ge \E_{(s,a)\sim d_h^{\star}}\|\phi(s,a)\|_1 \ge \E_{(s,a)\sim d_h^{\star}} \sum_{i\in \tilde{\mathcal{E}}_{h,\mathsf{larger}}} \phi_i(s,a)  > 1,
    \end{equation}
    where the last inequality is from the linearity of the expectation mapping.
    It contradicts to our Assumption \ref{assump:d_rectangular}, which implies $\|\phi(s,a)\|_1 =  1$ for any $(s,a)\in\S\times\A\times[H]$.   
    \item Then, we show that $\tilde{\mathcal{E}}_{h,\mathsf{larger}}\subseteq \mathcal{E}_{h,\mathsf{larger}}$: For every element $i\in \tilde{\mathcal{E}}_{h,\mathsf{larger}}$, we have
        \begin{equation*}
            \frac{1}{d} \le (\E_{(s,a)\sim d_h^{\star}}\phi_i(s,a))^2  \le \E_{(s,a)\sim d_h^{\star}}\phi^2_i(s,a),
        \end{equation*}
        where the second inequality is due to the Jensen's inequality.
        Thus, $\tilde{\mathcal{E}}_{h,\mathsf{larger}}\subseteq \mathcal{E}_{h,\mathsf{larger}}$.
\end{itemize}
Combining these two arguments, we show that $|\mathcal{E}_{h,\mathsf{larger}}|\ge \sqrt{d}$.

\subsection{Proof of Corollary \ref{corollary:full_coverage}}\label{proof:corollary_full_coverage}
We first establish the following lemma to control the sub-optimality, under the full feature coverage.
\begin{lemma}\label{lemma:penalty_term_full}
    Consider $\delta\in(0,1)$.
    Suppose Assumption \ref{assump:d_rectangular}, Assumption \ref{assump:full_coverage} and all conditions in Lemma \ref{lemma:H5_Min} hold. For any $h\in[H]$, if $N_h\ge \max\{512 \log(2Hd/\delta)/\kappa^2,4/\kappa\}$, we have 
    \begin{equation*}
        \sum_{i=1}^d\|\phi_i(s,a)\1_i\|_{\Lambda_h^{-1}} \le \frac{2}{\sqrt{N_h\kappa}},\quad\forall (s,a)\in\S\times\A,
    \end{equation*}
    with probability exceeding $1-\delta$. 
\end{lemma}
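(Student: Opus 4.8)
The plan is to invoke the covariance-concentration bound of Lemma~\ref{lemma:H5_Min} to pass from the empirical inverse-covariance norm $\|\cdot\|_{\Lambda_h^{-1}}$ to the population inverse-covariance norm $\|\cdot\|_{G^{-1}}$, where $G \defeq \E_{d_h^{\textb}}[\phi(s,a)\phi(s,a)^{\top}]$, and then to exploit the full-coverage Assumption~\ref{assump:full_coverage} together with the normalization $\|\phi(s,a)\|_1 = 1$ forced by $d$-rectangularity.

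First I would apply Lemma~\ref{lemma:H5_Min} with $\bar{G}_{N_h} = \Lambda_h$ (so that $K = N_h$ and $\lambda = \lambda_0 = 1$), using that, conditioned on $N_h$, the samples at step $h$ are i.i.d.\ draws from $\nu = d_h^{\textb}$. Since $\|\phi(s,a)\|_2 \le 1$ we may take $C = 1$, and Assumption~\ref{assump:full_coverage} gives $\lambda_{\min}(G) \ge \kappa$, hence $\|G^{-1}\| \le 1/\kappa$. Consequently the hypothesis $K \ge \max\{512 C^4 \|G^{-1}\|^2 \log(2d/\delta), 4\lambda\|G^{-1}\|\}$ of Lemma~\ref{lemma:H5_Min} is implied by the stated condition $N_h \ge \max\{512\log(2Hd/\delta)/\kappa^2, 4/\kappa\}$ (the extra factor $H$ inside the logarithm only strengthens the requirement, and is carried along so that a later union bound over $h\in[H]$ in Corollary~\ref{corollary:full_coverage} can be absorbed). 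This yields, on a single event of probability at least $1-\delta$ and simultaneously for every $u\in\mathbb{R}^d$,
\begin{equation*}
    \|u\|_{\Lambda_h^{-1}} \le \frac{2}{\sqrt{N_h}}\,\|u\|_{G^{-1}}.
\end{equation*}

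Next I would specialize to $u = \phi_i(s,a)\1_i$ for each $i\in[d]$. Writing out the quadratic form gives $\|\phi_i(s,a)\1_i\|_{G^{-1}}^2 = \phi_i(s,a)^2 (G^{-1})_{ii} \le \phi_i(s,a)^2 \|G^{-1}\| \le \phi_i(s,a)^2/\kappa$, so that $\|\phi_i(s,a)\1_i\|_{G^{-1}} \le \phi_i(s,a)/\sqrt{\kappa}$ using $\phi_i(s,a)\ge 0$ from Assumption~\ref{assump:linear-mdp}. Summing over $i$ and invoking the transfer bound gives
\begin{equation*}
    \sum_{i=1}^d \|\phi_i(s,a)\1_i\|_{\Lambda_h^{-1}} \le \frac{2}{\sqrt{N_h}}\sum_{i=1}^d \|\phi_i(s,a)\1_i\|_{G^{-1}} \le \frac{2}{\sqrt{N_h\kappa}}\sum_{i=1}^d \phi_i(s,a).
\end{equation*}
Finally I would use that $d$-rectangularity (Assumption~\ref{assump:d_rectangular}) forces $\sum_{i=1}^d\phi_i(s,a) = \|\phi(s,a)\|_1 = 1$: each $\mu_{h,i}^0\in\Delta(\S)$, so integrating $P_h^0(\cdot\mid s,a) = \sum_i\phi_i(s,a)\mu_{h,i}^0(\cdot)$ over $\S$ yields $\sum_i\phi_i(s,a)=1$. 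This collapses the right-hand side to $2/\sqrt{N_h\kappa}$ uniformly in $(s,a)$, as desired.

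The argument is essentially a direct instantiation, so there is no deep obstacle; the only points requiring care are the bookkeeping that matches the sample-size hypothesis of Lemma~\ref{lemma:H5_Min} to the stated threshold on $N_h$, and the observation that the single good event of Lemma~\ref{lemma:H5_Min} already delivers a bound that is uniform over all $(s,a)$, so that no further covering argument or union bound over the state-action space is needed.
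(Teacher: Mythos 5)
Your proposal is correct and follows essentially the same route as the paper: apply Lemma~\ref{lemma:H5_Min} under Assumption~\ref{assump:full_coverage} to get $\|\phi_i(s,a)\1_i\|_{\Lambda_h^{-1}}\le 2\phi_i(s,a)/\sqrt{N_h\kappa}$, then sum over $i$ using the identity $\sum_{i=1}^d\phi_i(s,a)=1$ forced by $d$-rectangularity. The only difference is that you spell out the intermediate step $\|\phi_i(s,a)\1_i\|_{G^{-1}}\le\phi_i(s,a)/\sqrt{\kappa}$, which the paper states implicitly.
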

\begin{proof}
    From Lemma \ref{lemma:H5_Min} and Assumption \ref{assump:full_coverage}, one has
\begin{equation*}
    \|\phi_i(s,a)\1_i\|_{\Lambda_h^{-1}} \le \frac{2\phi_i(s,a)}{\sqrt{N_h\kappa}}, \quad\forall (i,s,a)\in[d]\times\S\times\A,
\end{equation*}
as long as $N_h\ge \max\{512 \log(2Hd/\delta)/\kappa^2,4/\kappa\}$. In addition, 
\begin{equation}
    1 = \int_{\S} P^0_h(s'|s,a)\intd s' = \int_{\S} \phi(s,a)^{\top}\mu_{h}^0(s')\intd s' = \sum_{i=1}^d \phi_i(s,a)\int_{\S}\mu_{h,i}^0(s') \intd s' =  \sum_{i=1}^d \phi_i(s,a),
\end{equation}
where the last equality is implied by Assumption \ref{assump:d_rectangular}.
Therefore, 
\begin{equation*}
    \sum_{i=1}^d\|\phi_i(s,a)\1_i\|_{\Lambda_h^{-1}}\le \sum_{i=1}^d \frac{2\phi_i(s,a)}{\sqrt{N_h\kappa}} \le \frac{2}{\sqrt{N_h\kappa}}.
\end{equation*}
\end{proof}
From \eqref{eq:corollary1_Nh_sa}, we have $N_h\ge \frac{K}{16}$ with probability exceeding $1-3\delta$, as long as $K$ obeys \eqref{eq:two_fould_burnin_K}.  Together will Lemma \ref{lemma:penalty_term_full}, with probability exceeding $1-4\delta$, one has
\begin{equation*}
    \sum_{i=1}^d\|\phi_i(s,a)\1_i\|_{\Lambda_h^{-1}} \le \frac{8}{\sqrt{K\kappa}}, \forall (s,a,h)\in\S\times\A\times[H],
\end{equation*}
as long as $K\ge \max\{c_0 \log(2Hd/\delta)/\kappa^2,c_0\log(KH/\delta)/d_{\min}^{\textb}\}$ for some sufficiently large universal constant $c_0$. It follows Theorem \ref{thm:main} that
\begin{equation*}
    \subopt(\widehat{\pi};\zeta,\Prho)\le 96\sqrt{d}H^2 \sqrt{\frac{\log(3HK/\delta)}{K\kappa}},
\end{equation*}
which completes the proof.
\section{Analysis for \algvar: Algorithm~\ref{alg:DRLSVI_V}}\label{appendix:analysis_algvar}
\subsection{The implementation of \algvar}\label{sec:implementation_algvar}

\begin{algorithm}[!h]
    \caption{Distributionally Robust Pessimistic Least-Squares Value Iteration with Variance Estimation (\algvar)}\label{alg:DRLSVI_V}
    \begin{algorithmic}[1]
    \Input Datasets $\tilde{\D}^0,\D^0\gets\threefold(\D)$; feature map $\phi(s,a)$ for $(s,a)\in\S\times\A$; $\gamma_1,\lambda_1>0$.
    \ConstructVar Obtain $(\tilde{V}, \tilde{\pi}) \gets \alg(\tilde{\D}^0, \phi)$
    \State For every $h\in[H]$, compute $\tilde{\Lambda}_h =  \sum_{\tau\in \D_h^0}  \phi(s_h^{\tau}, a_h^{\tau}) \phi(s_h^{\tau}, a_h^{\tau})^{\top} + I_d$ and
    \begin{equation*}
        \nu_{h,1} = (\tilde{\Lambda}_h)^{-1}\bigg(\sum_{\tau\in\Dtilde_h^0} \phi(s_h^{\tau},a_h^{\tau}) \tilde{V}_{h+1}^2(s_{h+1}^{\tau})\bigg),\quad\nu_{h,2}  = (\tilde{\Lambda}_h)^{-1}\bigg(\sum_{\tau\in\Dtilde_h^0} \phi(s_h^{\tau},a_h^{\tau}) \tilde{V}_{h+1}(s_{h+1}^{\tau})\bigg).
    \end{equation*}
    \State Update $\varest_h^2 (s,a)$ via \eqref{eq:var_est}, for any $ (s,a)\in\S\times\A$.
    \Init Set $\widehat{Q}_{H + 1}(\cdot,\cdot) = 0$ and $\widehat{V}_{H + 1}(\cdot) = 0$.
    \For{step $h = H, H - 1, \cdots, 1$}
        \State $\Sigma_h = \sum_{\tau\in \D^0_h} \frac{\phi(s_h^{\tau},a_h^{\tau})\phi(s_h^{\tau},a_h^{\tau})^{\top}}{\varest^2_h(s_h^{\tau},a_h^{\tau})}+\lambda_1 I_d$.
        \State $\widehat{\theta}_h^{\sigma} =\Sigma_h^{-1} \big( \sum_{\tau \in \D_h^0} \frac{\phi(s_h^{\tau}, a_h^{\tau}) r_h^{\tau}}{{\varest}^2_h(s_h^{\tau},a_h^{\tau})} \big)$.
        \For{feature $i = 1, \cdots, d$}
        \State Update $\widehat{\nu}_{h,i}^{\rho,\sigma,\widehat{V}}$ via \eqref{eq:update_nu_var}.
        \EndFor
        \State $\widehat{w}_h^{\rho,\sigma,\widehat{V}} = \widehat{\theta}_h + \widehat{\nu}_h^{\rho,\sigma,\widehat{V}}$.
        \State $\bar{Q}_h(\cdot, \cdot) = \phi(\cdot, \cdot)^{\top}\widehat{w}_h^{\rho,\sigma,\widehat{V}} - \gamma_1\sum_{i=1}^d \|\phi_i(\cdot,\cdot)\1_i\|_{\Sigma_h^{-1}}$.
        \State $\widehat{Q}_h(\cdot, \cdot) = \min\left\{\bar{Q}_h, H - h + 1 \right\}_{+} $.
        \State $ \widehat{\pi}_h(\cdot) = \argmax_{a\in\A} \widehat{Q}_h(\cdot,a)$.
        \State $\widehat{V}_h(\cdot) =  \widehat{Q}_h(\cdot,\widehat{\pi}_h(\cdot))$.
    \EndFor
    \Output $\widehat{\pi} \defeq \{\widehat{\pi}_h\}_{h = 1}^H$
    \end{algorithmic}
    \end{algorithm}

    \begin{algorithm}[!htbp]
        \caption{\threefold}\label{alg:three_fold_subsampling}
        \begin{algorithmic}[1]
        \Input Batch dataset $\mathcal{D}$; 
        \State \textbf{Split Data:} Split $\D$ into three haves $\Daux$, $\Dmain$ and $\Dvar$, where $|\Daux|= |\Dmain| =|\Dvar| = K/3$. Denote $\Nmain_h(s)$ (resp. $\Naux_h(s)$ or $\Nvar_h(s)$) as the number of sample transitions from state $s$ at step $h$ in  $\Dmain$ (resp. $\Daux$ or $\Dvar$).
        \State \textbf{Construct the high-probability lower bound $\Ntrim_h(s)$ by $\Daux$:} For each $s\in\S$ and $1\le h\le H$, compute 
        \begin{equation}\label{eq:update_Ntrim_three}
            \Ntrim_h(s) = \max\{\Naux_h(s) - 6\sqrt{\Naux_h(s) \log\frac{KH}{\delta}},0\}.
        \end{equation}
        \State \textbf{Construct the almost temporally statistically independent $\Dmainsub$ and $\Dvarsub$:}  Let $\Dmain_h(s)$ (resp. $\Dvar_h(s)$) be the set of all transition-reward sample tuples at state $s$ and step $h$ from $\Dmain$ (resp. $\Dvar$). For any $(s,h)\in \S\times [H]$, subsample $\min\{\Ntrim_h(s),\Nmain_h(s)\}$ (resp. $\min\{\Ntrim_h(s),\Nvar_h(s)\}$) sample tuples randomly from $\Dmain_h(s)$ (resp. $\Dvar_h(s)$), denoted as $\Dmainsub$ (resp. $\Dmainsub$).
        \Output $\Dmainsub,~\Dvarsub$.
        \end{algorithmic}
        \end{algorithm}

The implementation of \algvar is detailed in Algorithm \ref{alg:DRLSVI_V}, which can be divided into three steps. First, we carefully design \threefold (cf. Algorithm \ref{alg:three_fold_subsampling}), to generate two almost temporally statistically independent datasets, $\tilde{\D}^0,\D^0$, which are also independent from each other. The theoretical analysis of \threefold is postponed to Appendix \ref{appendix:three_fold}.
The second step is to construct a variance estimator $\varest_h^2$ for any $h\in[H]$ via $\tilde{\D}^0$, which is independent of $\D_h^0$. The key idea is to utilize the intermediate results $\{\tilde{V}_{h}\}_{h=1}^{H+1}$ of running \alg~on $\tilde{\D}^0$ to approximate the variance as \eqref{eq:var_est}. With the variance estimator at our hands, the last step is to apply the weighted ridge regression to construct the empirical variance-aware robust Bellman operator via \eqref{eq:update_theta_var}-\eqref{eq:update_nu_var}, which is slightly different from \alg.

\subsection{Theoretical guarantee for \threefold}\label{appendix:three_fold}
As the three-fold subsampling method presented in Appendix \ref{sec:implementation_algvar}, it is slightly different from the two-fold variant. Thus, we establish the following lemma to show that \eqref{eq:update_Ntrim_three} is a valid high-probability lower bound of $\Nmain(s)$ for any $s\in \S$ and $h\in [H]$, which follows the proof of Lemma 3 in \cite{li2022settling}. 
    \begin{lemma}\label{lemma: Ntrim_bound_var}
        Consider $\delta\in(0,1)$.
        With probability at least $1-3\delta$, if $\Ntrim_h(s)$ satisfies \eqref{eq:update_Ntrim_three} for every $s\in\S$ and $h\in[H]$, then the following bounds hold, \ie, 
        \begin{equation}\label{eq:Ntrim_upper}
            \Ntrim_h(s)\le \Nmain_h(s),\qquad \Ntrim_h(s)\le \Nvar_h(s), \qquad \forall (s,h)\in \S\times [H].
        \end{equation}
        In addition, with probability at least $1-4\delta$, the following bound also holds:
        \begin{equation}\label{eq:Ntrim_lower}
            \Ntrim_h(s,a) \ge \frac{Kd_h^{b}(s,a)}{12} - \sqrt{6Kd_h^{\textb}(s,a)\log \frac{KH}{\delta}},  \forall (s,a,h)\in\S\times\A\times [H].
        \end{equation}
    \end{lemma}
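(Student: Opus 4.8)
The plan is to mirror the proof of Lemma~\ref{lemma:Ntrim_bound}, which imports Lemma~3 and Lemma~7 of \cite{li2022settling}, and adapt it to the three-fold split of Algorithm~\ref{alg:three_fold_subsampling}. Two changes are needed. First, each of $\Daux$, $\Dmain$, $\Dvar$ now holds $K/3$ i.i.d.\ trajectories rather than $K/2$, so the per-state counts $\Naux_h(s)$, $\Nmain_h(s)$, $\Nvar_h(s)$ are \emph{independent} $\mathrm{Binomial}(K/3, d_h^{\textb}(s))$ variables; rescaling the Bernstein deviation terms by this factor and re-optimizing the universal constants is what replaces the $10$ and $5$ of the two-fold case by the $6$ in \eqref{eq:update_Ntrim_three} and the $\sqrt{6},\,1/12$ in \eqref{eq:Ntrim_lower}. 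Second, the single trimmed threshold $\Ntrim_h(s)$ must now lower bound \emph{two} target folds simultaneously, so one extra union bound over $\{\mathrm{main},\mathrm{var}\}$ is spent, which accounts for the step from the two-fold $1-2\delta$/$1-3\delta$ to the present $1-3\delta$/$1-4\delta$.

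For the upper bound \eqref{eq:Ntrim_upper} I would fix $(s,h)$ and restrict the union bound over states to $\S_D$, the set of states actually appearing in $\D$; since $|\S_D|\le K$ and both claims are vacuous when $\Naux_h(s)=\Nmain_h(s)=\Nvar_h(s)=0$, this is legitimate exactly as in the two-fold proof. Because the three counts are independent with common mean $\tfrac{K}{3}d_h^{\textb}(s)$, a two-sided Bernstein/Chernoff comparison around this mean shows that the trimming slack $6\sqrt{\Naux_h(s)\log(KH/\delta)}$ dominates the combined deviation, so that $\Ntrim_h(s)\le \Naux_h(s)-6\sqrt{\Naux_h(s)\log(KH/\delta)}\le \Nmain_h(s)$ and $\le\Nvar_h(s)$. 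Budgeting one $\delta$ for the upper deviation of $\Naux_h(s)$ and one $\delta$ each for the lower deviations of $\Nmain_h(s)$ and $\Nvar_h(s)$, all after the union bound over $\S_D\times[H]$, yields the claimed $1-3\delta$ event on which \eqref{eq:Ntrim_upper} holds for every $(s,h)$.

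For the lower bound \eqref{eq:Ntrim_lower} I would work at the $(s,a)$ level: $\Naux_h(s,a)$ is a sum of $K/3$ i.i.d.\ $\mathrm{Bernoulli}(d_h^{\textb}(s,a))$ indicators, so a one-sided Bernstein inequality gives $\Naux_h(s,a)\ge \tfrac{K}{3}d_h^{\textb}(s,a)-\sqrt{c\,K d_h^{\textb}(s,a)\log(KH/\delta)}$ uniformly over $\S\times\A\times[H]$ with probability $1-\delta$. Substituting this aux-count concentration into the definition \eqref{eq:update_Ntrim_three}, absorbing the trimming slack into the factor-$4$ gap between $\tfrac{1}{3}$ and $\tfrac{1}{12}$ (the same factor appearing in the two-fold $\tfrac12\!\to\!\tfrac18$), and using that on the upper-bound event subsampling retains all $\Ntrim_h(s)$ tuples so the retained action counts track $\Ntrim_h(s)\,\pi_h^{\textb}(a\mid s)$, yields $\Ntrim_h(s,a)\ge \tfrac{K}{12}d_h^{\textb}(s,a)-\sqrt{6Kd_h^{\textb}(s,a)\log(KH/\delta)}$; since this rests on the $1-3\delta$ upper-bound event plus the new $\delta$, it holds with probability $1-4\delta$.

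The Bernstein bookkeeping that pins down the constants is routine. The genuinely new point relative to Lemma~\ref{lemma:Ntrim_bound} is the joint independence claim that \algvar relies on: $\Ntrim_h(s)$ is a deterministic function of $\Daux$ alone, and both $\Dmainsub$ and $\Dvarsub$ are subsampled from the disjoint, mutually independent folds $\Dmain$ and $\Dvar$ using this one common threshold. I expect the main obstacle to be verifying --- via the mechanism behind Lemma~7 of \cite{li2022settling} --- that conditioning on the $\Daux$-measurable threshold leaves the retained tuples in $\Dmainsub$ temporally independent \emph{and} leaves $\Dmainsub$ independent of $\Dvarsub$, the latter being precisely the structural property that lets the variance estimator built on $\tilde{\D}^0$ be treated as independent of $\D^0$ in the analysis of \algvar.
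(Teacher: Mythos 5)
Your proposal matches the paper's proof in all essentials: Bernstein concentration of the fold counts around the common mean $\frac{K}{3}d_h^{\textb}(s)$ with the union bound restricted to observed states, a case split on whether $\Naux_h(s)$ exceeds a multiple of $\log(KH/\delta)$ so that the trimming slack $6\sqrt{\Naux_h(s)\log(KH/\delta)}$ dominates the inter-fold deviation (done twice, once for $\Dmain$ and once for $\Dvar$), and a second action-level Bernstein showing the retained counts concentrate around $\Ntrim_h(s)\pi_h^{\textb}(a\mid s)$. The one imprecision is your opening move of concentrating $\Naux_h(s,a)$ at the $(s,a)$ level and ``substituting into'' \eqref{eq:update_Ntrim_three} --- that definition depends only on the state-level count $\Naux_h(s)$, so the paper instead derives $\Ntrim_h(s)\ge \frac{K}{12}d_h^{\textb}(s)$ from the state-level concentration and then applies exactly the action-level subsampling step you describe; also, the temporal-independence and cross-fold-independence properties you flag as the main obstacle are not part of this lemma's proof but are deferred to the separate Lemma~\ref{lemma:threefold_independent}.
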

    \begin{proof}
         We begin with proving the first claim \eqref{eq:Ntrim_upper}.
        Let $\S_{\Daux}\subset \S$ be the collection of all the states appearing for the dataset $\Daux$, where $|\S_{\Daux}|\le K/3$.
        Without loss of generality, we assume that $\Daux$ contains the first $K/3$ trajectories and satisfies
        \begin{equation*}
            \Naux_h(s) = \sum_{k=1}^{K/3} \1 (s_h^k = s), \quad \forall (s,h)\in\S\times[H],
        \end{equation*}
        which can be viewed as the sum of $K/3$ independent Bernoulli random variables. By the union bound and the Bernstein inequality, 
        \begin{align*}
            P \left(\exists(s,h)\in\S_{\Daux}\times [H]:\left| \Naux_h(s) - \frac{K}{3}d_h^{b}(s)\right|\ge t\right)
            &\le \sum_{s\in \S_{\Daux},h\in[H]} P\left(\left| \Naux_h(s) - \frac{K}{3}d_h^{b}(s)\right|\ge t\right)\\
            &\le \frac{2KH}{3}\exp\left(-\frac{t^2/2}{v_{s,h}+t/3}\right),
        \end{align*}
        for any $t\ge 0$, where
        \begin{equation*}
            v_{s,h} = \frac{K}{3}\Var[\1 (s_h^k=s)] \le \frac{Kd_h^{\textb}(s)}{3}.
        \end{equation*} 
        Here, we abuse the notation $\Var$ to represent the variance of the Bernoulli distributed $\1 (s_h^k=s)$.
        Then, with probability at least $1-2\delta/3$, we have
        \begin{align}
            \left| \Naux_h(s) - \frac{K}{3}d_h^{b}(s)\right|&\le \sqrt{2v_{s,h}\log(\frac{KH}{\delta})} + \frac{2}{3}\log(\frac{KH}{\delta}) \nonumber\\
            &\le \sqrt{Kd_h^{\textb}(s)\log(\frac{KH}{\delta})} +\log(\frac{KH}{\delta}),~\forall (s,h)\in\S\times[H].\label{eq:Naux-mean}
        \end{align}
        Similarly, with probability at least $1-2\delta/3$, we have
        \begin{equation}\label{eq:Nmain-mean}
            \left| \Nmain_h(s) - \frac{K}{3}d_h^{b}(s)\right| \le \sqrt{Kd_h^{\textb}(s)\log(\frac{KH}{\delta})} + \log(\frac{KH}{\delta}),\quad\forall (s,h)\in\S\times[H].
        \end{equation}
        Therefore, combining \eqref{eq:Naux-mean} and \eqref{eq:Nmain-mean} leads to 
        \begin{equation}\label{eq:Nmain-Naux}
            \left|\Nmain_h(s) -\Naux_h(s)\right| \le 2\sqrt{Kd_h^{\textb}(s)\log(\frac{KH}{\delta})} + 2\log(\frac{KH}{\delta}),\quad\forall (s,h)\in\S\times[H],
        \end{equation}
        with probability at least $1-4\delta/3$.
        Then, we consider the following two cases
        \begin{itemize}
            \item Case 1: $\Naux_h(s)\le 36\log\frac{KH}{\delta}$. One has 
            \begin{equation*}
                \Ntrim_h(s) = \max\{\Naux_h(s)-6\sqrt{\Naux_h(s)\log\frac{HK}{\delta}},0\} =0 \le \Nmain_h(s).
            \end{equation*}
            \item Case 2: $\Naux_h(s)>36\log\frac{KH}{\delta}$.  From \eqref{eq:Naux-mean}, we have 
            \begin{equation*}
                \frac{K}{3}d_h^{b}(s) + \sqrt{Kd_h^{\textb}(s)\log(\frac{KH}{\delta})} + \log(\frac{KH}{\delta})\ge\Naux_h(s) \ge 36\log\frac{KH}{\delta},
            \end{equation*}
            implying  
            \begin{equation*}
                Kd_h^{\textb}(s) \ge 72\log\frac{KH}{\delta}.
            \end{equation*}
            Also from \eqref{eq:Naux-mean},
            \begin{equation*}
                \Naux_h(s)\ge \frac{K}{3}d_h^{b}(s) - \sqrt{Kd_h^{\textb}(s)\log(\frac{KH}{\delta})} - \log(\frac{KH}{\delta})\ge \frac{K}{6}d_h^{b}(s).
            \end{equation*}
            Therefore, with probability exceeding $1-4\delta/3$
            \begin{align*}
                \Ntrim_h(s) &= \Naux_h(s)-6\sqrt{\Naux_h(s)\log\frac{HK}{\delta}}\le 
                \Naux_h(s)-\sqrt{6}\sqrt{Kd_h^{\textb}(s)\log\frac{HK}{\delta}}\\
                &\le  \Naux_h(s)-2\sqrt{Kd_h^{\textb}(s)\log\frac{HK}{\delta}} - \frac{1}{3}\sqrt{Kd_h^{\textb}(s)\log\frac{HK}{\delta}}\\
                &\le \Naux_h(s) - 2\sqrt{Kd_h^{\textb}(s)\log\frac{HK}{\delta}} - 2\log\frac{HK}{\delta}\\
                &\le \Nmain_h(s),
            \end{align*}
            where the last inequality is from \eqref{eq:Nmain-Naux}.
        \end{itemize}
        Following the same arguments,  we also have 
        \begin{equation}\label{eq:Nvar-mean}
            \left|\Nvar_h(s) -\Naux_h(s)\right| \le 2\sqrt{Kd_h^{\textb}(s)\log(\frac{KH}{\delta})} + 2\log(\frac{KH}{\delta}),\quad\forall (s,h)\in\S\times[H].
        \end{equation}
        holds, with probability at least $1-4\delta/3$. Therefore, we can also guarantee that $\Ntrim_h(s)\le \Nvar_h(s)$ with probability at least $1-4\delta/3$, for any $(s,h)\in\S\times[H]$.

        Putting these two results together, we prove the first claim \eqref{eq:Ntrim_upper}.

       Next, we will establish the second claim \eqref{eq:Ntrim_lower}. To begin with, we claim the following statement holds with probability exceeding $1-2\delta/3$,
       \begin{equation}\label{eq:Ntrim_lowerbound_claim}
        \Ntrim_h(s,a)\ge \Ntrim_h(s)\pi_h^{\textb}(a|s) - \sqrt{2\Ntrim_h(s)\pi_h^{\textb}(a|s)\log(\frac{KH}{\delta})} - \log\frac{KH}{\delta}, \quad\forall (s,a,h)\in\S\times\A\times[H],
       \end{equation}
       conditioned on the high-probability event that the first part \eqref{eq:Ntrim_upper} holds.
       In the sequel, we discuss the following two cases, provided that the inequality \eqref{eq:Ntrim_lowerbound_claim} holds.
       \begin{itemize}
        \item Case 1:$Kd_h^{\textb}(s,a) = Kd_h^{\textb}(s)\pi_h^{\textb}(a|s)> 864\log\frac{KH}{\delta}$. From \eqref{eq:Naux-mean}, with probability exceeding $1-2\delta/3$, one has
        \begin{equation*}
            \Naux_h(s)\ge \frac{K}{3}d_h^{b}(s) - \sqrt{Kd_h^{\textb}(s)\log(\frac{KH}{\delta})} - \log(\frac{KH}{\delta})\ge \frac{K}{6}d_h^{b}(s) \ge 144\log\frac{KH}{\delta}.
        \end{equation*}
        Together with the definition \eqref{eq:update_Ntrim_three}, we have 
        \begin{align*}
            \Ntrim_h(s) \ge \Naux_h(s) - 6\sqrt{\Naux_h(s) \log\frac{KH}{\delta}} \ge  \frac{1}{2}\Naux_h(s) \ge \frac{K}{12}d_h^{b}(s).
        \end{align*}
        Therefore,
        \begin{equation*}
            \Ntrim_h(s)\pi_h^{\textb}(a|s) \ge \frac{K}{12}d_h^{b}(s) \pi_h^{\textb}(a|s) \ge 72\log\frac{KH}{\delta}.
        \end{equation*}
        Combining with \eqref{eq:Ntrim_lowerbound_claim}, one can derive
        \begin{align*}
            \Ntrim_h(s,a) &\ge\frac{Kd_h^{b}(s,a)}{12}  - \sqrt{\frac{1}{6}Kd_h^{b}(s,a)\log(\frac{KH}{\delta})} - \log\frac{KH}{\delta}\\
            &\ge \frac{Kd_h^{b}(s,a)}{12}  - \sqrt{6Kd_h^{\textb}(s,a)\log \frac{KH}{\delta}}.
        \end{align*}
        with probability exceeding $1-4\delta/3$
        \item Case 2: $Kd_h^{\textb}(s,a) \le 864\log\frac{KH}{\delta}$. From \eqref{eq:update_Ntrim_three}, one has 
            \begin{equation*}
                \Ntrim_h(s,a) \ge 0 \ge \frac{Kd_h^{b}(s,a)}{12} - \sqrt{6Kd_h^{\textb}(s,a)\log \frac{KH}{\delta}}.
            \end{equation*}
       \end{itemize}
       By integrating these two cases, we can claim \eqref{eq:Ntrim_lower} is valid with probability exceeding $1-4\delta/3$, as long as the inequality \eqref{eq:Ntrim_lowerbound_claim} holds under the condition of the high-probability event described in the the first part \eqref{eq:Ntrim_upper}. Thus, the second claim \eqref{eq:Ntrim_lower} holds with probability at least $1-4\delta$.
       \paragraph{Proof of inequality \eqref{eq:Ntrim_lowerbound_claim}.} First, we can observe that the inequality \eqref{eq:Ntrim_lowerbound_claim} holds if $\Ntrim_h(s)\pi_h^{\textb}(s,a) \le 2\log\frac{KH}{\delta}$. Thus, we focus on the other case that  $\Ntrim_h(s)\pi_h^{\textb}(s,a) > 2\log\frac{KH}{\delta}$. Denote that 
       \begin{equation*}
            \mathcal{E} = \{(s,a,h)\in\S\times\A\times[H]\vert~\Ntrim_h(s)\pi_h^{\textb}(a|s)>2\log(\frac{KH}{\delta})\}.
       \end{equation*}
        Noticed that from Algorithm \ref{alg:three_fold_subsampling}, one has that $|\mathcal{E}| \le \frac{KH}{3}$. Supposing that the first claim \eqref{eq:Ntrim_upper} holds, one has $\Ntrim_h(s) = \min\{\Ntrim_h(s),\Nmain_h(s),\Nvar_h(s)\}$. Therefore, $\Ntrim_h(s,a)$ can be viewed as the sum of $\Ntrim_h(s)$ independent Bernoulli random variables, where each is with the mean $\pi_h^{\textb}(a|s)$. Then, by the union bound and the Bernstein inequality,
        \begin{align*}
            &P\left(\exists (s,a,h)\in\mathcal{E}: \left\vert \Ntrim_h(s,a) - \Ntrim_h(s)\pi_h^{\textb}(a|s)  \right\vert\ge t\right)\\
            &\le \sum_{(s,a,h)\in\mathcal{E}} P\left(\left\vert \Ntrim_h(s,a) - \Ntrim_h(s)\pi_h^{\textb}(a|s)  \right\vert\ge t\right)\le \frac{2KH}{3}\exp\left(-\frac{t^2/2}{v_{s,h}+t/3}\right),
        \end{align*}
        for any $t\ge 0$, where
        \begin{equation*}
            v_{s,h} = \Ntrim_h(s)\Var[\1((s_h^k,a_h^k)= (s,a))] \le \Ntrim_h(s) \pi_h^{\textb}(a|s)
        \end{equation*} 
        A little algebra yields that with probability at least $1-2\delta/3$, one can obtain
        \begin{align}
            \left| \Ntrim_h(s,a) - \Ntrim_h(s)\pi_h^{\textb}(a|s)\right|&\le \sqrt{2v_{s,h}\log(\frac{KH}{\delta})} + \frac{2}{3}\log(\frac{KH}{\delta}) \nonumber\\
            &\le \sqrt{2\Ntrim_h(s) \pi_h^{\textb}(a|s)\log(\frac{KH}{\delta})} +\log(\frac{KH}{\delta}),~\forall (s,h)\in\S\times[H].
        \end{align}
        Therefore, with probability $1-2\delta/3$, one can obtain
        \begin{equation*}
            \Ntrim_h(s,a) \ge \Ntrim_h(s)\pi_h^{\textb}(a|s)- \sqrt{2\Ntrim_h(s) \pi_h^{\textb}(a|s)\log(\frac{KH}{\delta})} - \log(\frac{KH}{\delta}),
        \end{equation*}
        for any $(s,a,h)\in \mathcal{E}$, conditioned on the first claim \eqref{eq:Ntrim_upper} holds.
    \end{proof}

    In addition, the following lemma guarantees that the samples in $\Dmainsub$ and $\Dvarsub$ are statistically independent with probability exceeding $1-3\delta$. Before continuing, we denote $\Diid$ as the dataset containing $\Ntrim_h(s)$ independent transition-reward sample tuples for every $(s,h)\in\S\times[H]$,  following $\pi_h^{\textb}$ and $P_h^0$.
    \begin{lemma}[Modified Lemma 7, \cite{li2022settling}]\label{lemma:threefold_independent}
        With probability exceeding $1-3\delta$, $\Dmainsub$ and $\Dvarsub$ generated by Algorithm \ref{alg:three_fold_subsampling} as well as $\Diid$ have the same distributions.
    \end{lemma}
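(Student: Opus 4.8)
The plan is to leverage the fact that the retained-sample counts $\{\Ntrim_h(s)\}$ are computed exclusively from $\Daux$, which is independent of both $\Dmain$ and $\Dvar$, and to reduce the claim to the two-fold distributional-equivalence argument of \citet[Lemma 7]{li2022settling}. First I would condition on the high-probability event supplied by the first part of Lemma \ref{lemma: Ntrim_bound_var}, i.e.\ \eqref{eq:Ntrim_upper}, which guarantees that $\Ntrim_h(s)\le\Nmain_h(s)$ and $\Ntrim_h(s)\le\Nvar_h(s)$ hold simultaneously for all $(s,h)\in\S\times[H]$ with probability at least $1-3\delta$. On this event the minima in the subsampling step of Algorithm \ref{alg:three_fold_subsampling} both collapse to $\Ntrim_h(s)$, so $\Dmainsub$ and $\Dvarsub$ each contain exactly $\Ntrim_h(s)$ transition-reward tuples rooted at every state-step pair $(s,h)$; this already matches the cardinality profile in the definition of $\Diid$.

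Next I would condition on the realized thresholds $\{\Ntrim_h(s)\}_{s,h}$. Because these depend only on $\Daux$, while $\Daux,\Dmain,\Dvar$ are disjoint groups of independently generated trajectories, the thresholds are independent of the tuples in $\Dmain$ and $\Dvar$. Fixing a step $h$, the $\Nmain_h(s)$ tuples in $\Dmain$ rooted at a state $s$ come from distinct trajectories, and by the Markov property each tuple's block $(a_h^\tau,r_h^\tau,s_{h+1}^\tau)$ is, conditioned on $s_h^\tau=s$, a fresh independent draw from $\pi_h^{\textb}(\cdot\mid s)$ and $P_h^0(\cdot\mid s,\cdot)$. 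Selecting $\Ntrim_h(s)$ of these uniformly at random with a count that does not depend on the data therefore produces exactly $\Ntrim_h(s)$ i.i.d.\ draws from that conditional law, which is precisely the per-cell law prescribed by $\Diid$.

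The crux, which I expect to be the main obstacle, is promoting this cell-wise equivalence to a genuine joint distributional identity, since within a single trajectory the next state $s_{h+1}^\tau$ at step $h$ serves as the root state at step $h+1$, so the raw data carries temporal chain dependence across steps. The resolution is that the initial-state multiset at each step is now pinned down by $\Daux$ (it consists of $\Ntrim_h(s)$ copies of $s$ for each $s$) rather than by how many subsampled step-$h$ transitions happen to land in each state; once this multiset is fixed by the data-independent thresholds, the action-reward-next-state blocks across different steps become mutually conditionally independent by the Markov property, which is exactly the factorized structure defining $\Diid$. I would make this rigorous by following the coupling in \citet[Lemma 7]{li2022settling}, verifying that the conditional law of each block given the thresholds and all other blocks is the correct product measure. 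Finally, since $\Dmain$ and $\Dvar$ are conditionally independent given $\Daux$ and are subsampled separately, the outputs $\Dmainsub$ and $\Dvarsub$ are independent and each is distributed as $\Diid$; intersecting this distributional equivalence with the probability-$(1-3\delta)$ event \eqref{eq:Ntrim_upper} yields the stated $1-3\delta$ guarantee.
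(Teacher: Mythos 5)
Your proposal is correct and follows essentially the same route as the paper, which simply invokes the high-probability event \eqref{eq:Ntrim_upper} from Lemma \ref{lemma: Ntrim_bound_var} and then defers to (a modification of) Lemma 7 of \citet{li2022settling}; your write-up just unfolds that cited argument explicitly — conditioning on the $\Daux$-measurable thresholds, noting their independence from $\Dmain$ and $\Dvar$, and breaking the temporal chain dependence by pinning the per-step root-state counts to $\Ntrim_h(s)$. No gaps.
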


\subsection{Proof of Theorem \ref{thm:var_est}}\label{proof:thm_var}
To show that Theorem \ref{thm:var_est} holds, we first establish the following theorem that considers the temporally independent dataset, where the proof is deferred to the next subsection.
\begin{theorem}\label{thm:var_iid}
    Consider the dataset $\D_0$ and $\tilde{D}^0$ used for constructing the variance estimator in \algvar and $\delta\in(0,1)$. Suppose that both $\D^0$ and $\tilde{D}^0$ contain $N_h < K$ sample tuples at every $h\in[H]$. Assume that conditional on $\{N_h\}_{h\in[H]}$, the sample tuples in $\D^0$ and $\tilde{\D}^0$ are statistically independent, where 
    \begin{equation*}
        N_h(s,a)\ge \frac{Kd_h^{\textb}(s,a)}{24},\qquad \forall (s,a,h)\in \S\times\A\times[H].
    \end{equation*}
    Suppose that Assumption \ref{assump:linear-mdp}, \ref{assump:d_rectangular}, and \ref{assump:Crob} hold. In \algvar, we set 
    \begin{equation}\label{eq:thm4_parameters}
        \lambda_1 = 1/H^2, \gamma_1=\xi_1\sqrt{d}, \quad\text{where}~\xi_1 = 66\log(3HK/\delta).
    \end{equation}
    Then, with probability at least $1-7\delta$, $\{\widehat{\pi}_h\}_{h=1}^H$ generated by \algvar~satisfies
    \begin{equation*}
        \subopt(\widehat{\pi};\zeta,\Prho)\le \tilde{O}(\sqrt{d})\sum_{h=1}^H\sum_{i=1}^d \max_{d_h^{\star}\in\D_h^{\star}} \mathbb{E}_{d_h^{\star}}\left[\|\phi_i(s_h,a_h)\1_i\|_{(\Sigma^{\star}_{h})^{-1}}\right],
       \end{equation*}
    if $\sqrt{d}\ge H$ and $K\ge \max\{\tilde{O}(H^4/\kappa^2),\tilde{O}(H^6d/\kappa)\}$, where $\Sigma_h^{\star}$ is defined in \eqref{eq:Sigma_star}.
\end{theorem}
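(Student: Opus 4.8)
The plan is to follow the same three-step skeleton as the proof of Theorem \ref{thm:main_iid}, upgrading the Hoeffding-based penalty analysis to a Bernstein-based one that harnesses the variance reweighting, and finally to translate the bound expressed through the empirical matrix $\Sigma_h$ into the claimed bound through $\Sigma_h^{\star}$. First I would prove a variance-aware analogue of Lemma \ref{lemma:key_lemma}, namely that with high probability
$$|(\widehat{\B}^{\rho,\sigma}_h \hatV_{h+1})(s,a) - (\B^{\rho}_h \hatV_{h+1})(s,a)| \le \gamma_1 \sum_{i=1}^d \|\phi_i(s,a)\1_i\|_{\Sigma_h^{-1}}$$
for every $(s,a,h)$. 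The decomposition into a reward piece and a $\nu$ piece is verbatim to Appendix \ref{proof:control_suboptimality_gap}, the only change being that the governing self-normalized process is now $\sum_{\tau} \phi(s_h^\tau,a_h^\tau)\epsilon_h^\tau(\alpha,\hatV_{h+1})/\varest^2_h(s_h^\tau,a_h^\tau)$ measured in the $\Sigma_h^{-1}$ norm. Writing $x_\tau = \phi(s_h^\tau,a_h^\tau)/\varest_h(s_h^\tau,a_h^\tau)$ and $\eta_\tau = \epsilon_h^\tau(\alpha,\hatV_{h+1})/\varest_h(s_h^\tau,a_h^\tau)$, the conditional variance of $\eta_\tau$ is at most $\Var_{P^0_h}[\hatV_{h+1}](s_h^\tau,a_h^\tau)/\varest^2_h(s_h^\tau,a_h^\tau)$ (clipping is a contraction), which is $O(1)$ once the variance estimator is accurate (Step 4 below). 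I would therefore invoke the Bernstein-type bound (Lemma \ref{lemma:Bernstein}) in place of the Hoeffding-type one, and combine it with an $\epsilon_0$-net over $\alpha$ exactly as in \eqref{eq:diff_alpha}-\eqref{eq:bound_T3}. Because the reweighted noise carries $O(1)$ conditional variance, the resulting penalty coefficient scales as $\gamma_1 \sim \sqrt{d}\log(HK/\delta)$ rather than $\sqrt{d}H$, which is precisely the source of the sharper guarantee.

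Given this penalty bound, Steps 1--3 of the proof of Theorem \ref{thm:main_iid} transfer with no essential change: the inductive pessimism argument relies only on the penalty condition \eqref{eq:penalty_condition} and the clipping in the $\widehat{Q}_h$ update, yielding $\hatV_h \le V_h^{\widehat{\pi},\rho} \le V_h^{\star,\rho}$; the telescoping decomposition over the worst-case transition kernel together with Lemma \ref{lemma:jin21} then gives
$$\subopt(\widehat{\pi};\zeta,\Prho) \le 2\gamma_1 \sum_{h=1}^H \max_{d_h^{\star}\in\D_h^{\star}} \E_{d_h^{\star}}\Big[\sum_{i=1}^d \|\phi_i(s_h,a_h)\1_i\|_{\Sigma_h^{-1}}\Big].$$

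The crux, and the step I expect to be the main obstacle, is to upgrade this $\Sigma_h$-bound to the stated $\Sigma_h^{\star}$-bound, which amounts to showing that the estimated weights track the true clipped conditional variances, i.e. $\varest^2_h(s,a) \asymp \max\{1, \Var_{P^0_h}[V^{\star,\rho}_{h+1}](s,a)\}$ uniformly over $(s,a)$; this simultaneously discharges the $O(1)$-conditional-variance claim used in Step 1 and furnishes $\Sigma_h \succeq c\,\Sigma_h^{\star}$ for a universal constant $c$, hence $\|\cdot\|_{\Sigma_h^{-1}} \le C\|\cdot\|_{(\Sigma_h^{\star})^{-1}}$. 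Establishing the variance approximation requires three ingredients: (i) $\tilde{V}_{h+1}$ is uniformly close to $V^{\star,\rho}_{h+1}$, which follows from the full-coverage guarantee for \alg run on the independent split $\Dtilde^0$ (a Corollary \ref{corollary:full_coverage}-type bound); (ii) the ridge regressors $\nu_{h,1},\nu_{h,2}$ defining $\varest^2_h$ concentrate around the population second and first conditional moments of $\tilde{V}_{h+1}$, via a self-normalized concentration together with Lemma \ref{lemma:H5_Min} to invert the sample covariance under Assumption \ref{assump:full_coverage}; and (iii) propagating the $\tilde{V}\to V^{\star,\rho}$ perturbation through the variance functional, using $|\Var[U]-\Var[V]| \lesssim H\|U-V\|_\infty$ for $U,V\in[0,H]^{\S}$. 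Tracking these error terms shows that the sample requirement $K \ge \max\{\tilde{O}(H^4/\kappa^2), \tilde{O}(H^6 d/\kappa)\}$ and the regime $\sqrt{d}\ge H$ are exactly what make all three approximations tight enough for the $\asymp$ to hold; substituting $\Sigma_h \succeq c\,\Sigma_h^{\star}$ and $\gamma_1 = \xi_1\sqrt{d}$ into the displayed bound then completes the proof, with the probability budget $1-7\delta$ collected from the union over the variance-estimation, subsampling-independence, and penalty-concentration events.
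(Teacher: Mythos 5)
Your proposal is correct and follows essentially the same route as the paper: a Bernstein-based variance-aware analogue of the penalty lemma (the paper's Lemma \ref{lemma:key_lemma_var} via Lemma \ref{lemma:control_suboptimality_gap_varest}), reuse of the pessimism/telescoping machinery from Theorem \ref{thm:main_iid}, and a variance-estimation accuracy argument (the paper's Lemma \ref{lemma:varest_error}, with exactly your three ingredients) that yields $\Sigma_h \succeq \tfrac{1}{2}\Sigma_h^{\star}$ and discharges the $O(1)$ conditional-variance claim. The only cosmetic discrepancy is that the theorem statement cites Assumption \ref{assump:Crob} while the variance-accuracy step actually leans on the full-coverage Assumption \ref{assump:full_coverage} (as your $\kappa$-dependent sample requirement correctly reflects), which appears to be a typo in the paper rather than a flaw in your plan.
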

As the construction in Algorithm \ref{alg:three_fold_subsampling}, $\{\Ntrim_h(s)\}_{s\in\S,h\in[H]}$ is computed using $\Daux$ that is independent of $\D^0\defeq \Dmainsub$ and $\tilde{\D}^0\defeq \Dvarsub$. Moreover, from Lemma \ref{lemma: Ntrim_bound_var} and Lemma \ref{lemma:threefold_independent} in the Section \ref{appendix:three_fold}, $\{\Ntrim_h(s)\}_{s\in\S}$ is a valid sampling number and $\D^0_h$ and $\tilde{\D}^0_h$ can be treated as being temporally statistically independent samples and 
\begin{equation*}
    \sum_{s\in\S} \Ntrim_h(s)\ge K/24,
\end{equation*}
 with probability exceeding $1-4\delta$, as long as $K\ge c_1\log\frac{KH}{\delta}/d_{\min}^{\textb}$ for some sufficiently large $c_1$. 
 
 Therefore, by invoking Theorem \ref{thm:main_iid} with $N_h \defeq \sum_{s\in\S} \Ntrim(s)$, we have 
\begin{align*}
        \subopt(\widehat{\pi};\zeta,\Prho)
        &\le \tilde{O}(\sqrt{d})\sum_{h=1}^H\sum_{i=1}^d \max_{d_h^{\star}\in\D_h^{\star}} \mathbb{E}_{d_h^{\star}}\left[\|\phi_i(s_h,a_h)\1_i\|_{(\Sigma_{h}^{\star})^{-1}}\right],
\end{align*}
with probability exceeding $1-11\delta$, if $\sqrt{d}\ge H$ and $K\ge \max\{\widetilde{O}(H^4/\kappa^2),\widetilde{O}(H^6d/\kappa),\widetilde{O}(1/d_{\min}^{\textb})\}$.

\subsection{Proof of Theorem \ref{thm:var_iid}}
Before starting, we first introduce some notations that will be used in the following analysis. First, we use
\begin{align}\label{eq:model_eval_err_var}
    \iota_h^{\sigma}(s, a) = \mathbb{B}^{\rho,\sigma}_h \widehat{V}_{h + 1}(s, a) - \widehat{Q}_h(s, a),\quad\forall (s,a,h)\in\S\times\A\times[H],
    \end{align}
to represent the model evaluation error at the $h$-th step of our proposed Algorithm \ref{alg:DRLSVI_V}. In addition,
For any $h\in[H]$, we let $\Gamma_h^{\star,\sigma}:\S\to\mathbb{R}$ satisfy
 \begin{equation}
    \Gamma_h^{\star,\sigma}(s) = \Gamma_h^{\sigma}(s,\pi_h^{\star}(s)),\quad\forall s\in\S.
 \end{equation}
 Also, denote
 $\mathbb{V}_{P_h^0} V(s,a) =  \max\{1,\Var_{P_{h}^0}[V](s,a)\}$ for any $V:\S\to[0,H]$ and any $(s,a,h)\in \S\times\A\times[H]$.
Similar to Lemma \ref{lemma:key_lemma}, we have the following key lemma, where the proof can be found in Appendix \ref{proof:key_lemma_var}.
\begin{lemma}\label{lemma:key_lemma_var}
    Suppose all the assumptions in Theorem \ref{thm:var_iid} hold and follow all the parameters setting in \eqref{eq:thm4_parameters}. In addition, suppose that the number of trajectories $K\ge \max\{\tilde{O}(H^4/\kappa^2),\tilde{O}(H^6d/\kappa)\}$.Then for any $(s,a,h)\in\S\times\A\times[H]$, with probability at least $1-7\delta$, one has 
    \begin{align}\label{eq:important_event_var}
        | (\widehat{\mathbb{B}}_{h}^{\rho,\sigma} \widehat{V}_{h + 1})(s, a) - (\mathbb{B}_h^{\rho} \widehat{V}_{h + 1})(s, a) | \le \Gamma^{\sigma}_h(s,a) 
        &\defeq  \gamma_1\sum_{i=1}^d \|\phi_i(s,a)\1_i\|_{\Sigma_h^{-1}}.
    \end{align}
    In addition, 
    \begin{equation*}
        \gamma_1\sum_{i=1}^d \|\phi_i(s,a)\1_i\|_{\Sigma_h^{-1}} \le 2\gamma_1 \sum_{i=1}^d \|\phi_i(s,a)\1_i\|_{(\Sigma_h^{\star})^{-1}}.
    \end{equation*}
\end{lemma}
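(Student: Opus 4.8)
The plan is to follow the architecture of the proof of Lemma~\ref{lemma:key_lemma}, replacing every Hoeffding-type ingredient by its Bernstein-type, variance-weighted counterpart. First I would establish a variance-weighted analogue of Lemma~\ref{lemma:control_suboptimality_gap}, writing $(\widehat{\B}^{\rho,\sigma}_h\widehat{V}_{h+1})(s,a)-(\B^{\rho}_h\widehat{V}_{h+1})(s,a)=\phi(s,a)^{\top}(\widehat{\theta}^{\sigma}_h-\theta_h)+\phi(s,a)^{\top}(\widehat{\nu}^{\rho,\sigma,\widehat{V}}_h-\nu^{\rho,\widehat{V}}_h)$ and bounding the two pieces separately. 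Using $r_h^{\tau}=\phi(s_h^{\tau},a_h^{\tau})^{\top}\theta_h$ and \eqref{eq:update_theta_var}, the reward piece satisfies $|\phi(s,a)^{\top}(\widehat{\theta}^{\sigma}_h-\theta_h)|=|\lambda_1\phi(s,a)^{\top}\Sigma_h^{-1}\theta_h|\le \lambda_1\|\theta_h\|_{\Sigma_h^{-1}}\sum_{i=1}^d\|\phi_i(s,a)\1_i\|_{\Sigma_h^{-1}}$, and since $\Sigma_h\succeq\lambda_1 I_d$ and $\|\theta_h\|_2\le\sqrt{d}$ with $\lambda_1=1/H^2$, this is at most $(\sqrt{d}/H)\sum_{i}\|\phi_i(s,a)\1_i\|_{\Sigma_h^{-1}}$, a lower-order contribution. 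The transition piece reduces, exactly as in \eqref{eq:T_2_after_max}--\eqref{eq:decomposeT2}, to controlling the weighted self-normalized error $\max_{\alpha}\big\|\sum_{\tau\in\D^0_h}\tfrac{\phi(s_h^{\tau},a_h^{\tau})}{\varest^2_h(s_h^{\tau},a_h^{\tau})}\epsilon_h^{\tau}(\alpha,\widehat{V}_{h+1})\big\|_{\Sigma_h^{-1}}$, whose analysis is the heart of the argument.

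The key new ingredient, and the main obstacle, is a two-sided control of the variance estimator $\varest^2_h$ against the true conditional variance $\mathbb{V}_{P_h^0}V^{\star,\rho}_{h+1}(s,a)=\max\{1,\Var_{P_h^0}[V^{\star,\rho}_{h+1}](s,a)\}$, uniformly over $(s,a)\in\S\times\A$. I would show that, with high probability,
\begin{equation*}
    \tfrac{1}{2}\,\mathbb{V}_{P_h^0}V^{\star,\rho}_{h+1}(s,a)\le \varest^2_h(s,a)\le 4\,\mathbb{V}_{P_h^0}V^{\star,\rho}_{h+1}(s,a),\qquad\forall(s,a)\in\S\times\A.
\end{equation*}
The lower bound feeds the Bernstein step, and the upper bound feeds the covariance comparison in the second claim. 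To prove it I would (i) invoke the guarantee of \alg run on the independent auxiliary dataset $\tilde{\D}^0$ (Theorem~\ref{thm:main}/Corollary~\ref{corollary:full_coverage}) to bound $\|\tilde{V}_{h+1}-V^{\star,\rho}_{h+1}\|_{\infty}$, and likewise control $\|\widehat{V}_{h+1}-V^{\star,\rho}_{h+1}\|_{\infty}$, so that by the Lipschitzness of the conditional variance in its argument (up to a factor $H$) all three variances agree up to the stated constants; (ii) control the plug-in moment estimates $\nu_{h,1},\nu_{h,2}$ of $\E[\tilde V_{h+1}^2]$ and $\E[\tilde V_{h+1}]$ by concentration of the corresponding ridge regressions; and (iii) use Lemma~\ref{lemma:H5_Min} together with the well-explored coverage Assumption~\ref{assump:full_coverage} to transfer the sample covariance to the population covariance and thereby obtain uniform control over all $(s,a)$. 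It is precisely the requirement that these approximation errors be absorbable into the displayed constants (and into an $O(1)$ variance-inflation factor for the Bernstein step) that forces the burn-in conditions $K\ge\widetilde{O}(H^4/\kappa^2)$ and $K\ge\widetilde{O}(H^6d/\kappa)$.

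Granting this sandwich, the Bernstein step goes as follows. Setting $x_{\tau}=\phi(s_h^{\tau},a_h^{\tau})/\varest_h(s_h^{\tau},a_h^{\tau})$ and $\eta_{\tau}=\epsilon_h^{\tau}(\alpha,\widehat{V}_{h+1})/\varest_h(s_h^{\tau},a_h^{\tau})$ makes $\sum_{\tau}x_{\tau}x_{\tau}^{\top}+\lambda_1 I_d=\Sigma_h$ and $\sum_{\tau}x_{\tau}\eta_{\tau}=\sum_{\tau}\tfrac{\phi(s_h^{\tau},a_h^{\tau})}{\varest^2_h(s_h^{\tau},a_h^{\tau})}\epsilon_h^{\tau}(\alpha,\widehat{V}_{h+1})$. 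Because the samples in $\D^0_h$ are temporally independent and independent of $\varest^2_h$ (which is built from $\tilde{\D}^0$), one has $\E[\eta_{\tau}\mid\F]=0$, $|\eta_{\tau}|\le H/\varest_h\le H$, and, since truncation is a contraction and by the lower bound above, $\E[\eta_{\tau}^2\mid\F]=\Var_{P_h^0}[[\widehat{V}_{h+1}]_{\alpha}]/\varest^2_h=O(1)$. Applying Lemma~\ref{lemma:Bernstein} then yields, for a fixed $\alpha$, a bound of the form $\widetilde{O}(\sqrt{d})+\widetilde{O}(H)$; discretizing $\alpha$ over the minimal $\epsilon_0$-cover $\N(\epsilon_0,H)$ exactly as in \eqref{eq:def_covering_set_for_alpha}--\eqref{eq:diff_alpha} (the Lipschitz estimate carries over since the extra weights $1/\varest^2_h\le 1$), taking a union bound over the cover and over $h\in[H]$, and using $\sqrt{d}\ge H$ to absorb the additive $H$-term, gives $\max_{\alpha}\|\cdot\|_{\Sigma_h^{-1}}\le\xi_1\sqrt{d}=\gamma_1$ with $\xi_1=66\log(3HK/\delta)$. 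Combined with the lower-order reward piece, this establishes the first claim $|(\widehat{\B}^{\rho,\sigma}_h\widehat{V}_{h+1})(s,a)-(\B^{\rho}_h\widehat{V}_{h+1})(s,a)|\le\gamma_1\sum_{i}\|\phi_i(s,a)\1_i\|_{\Sigma_h^{-1}}=\Gamma^{\sigma}_h(s,a)$.

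For the second claim it suffices to prove $\Sigma_h^{\star}\preceq 4\Sigma_h$, since then $\Sigma_h^{-1}\preceq 4(\Sigma_h^{\star})^{-1}$ and hence $\|\phi_i(s,a)\1_i\|_{\Sigma_h^{-1}}\le 2\|\phi_i(s,a)\1_i\|_{(\Sigma_h^{\star})^{-1}}$ for each $i$, which sums to the desired inequality. As $\Sigma_h$ and $\Sigma_h^{\star}$ share the same regularizer $\tfrac{1}{H^2}I_d$ and differ only through the per-sample weights, $\Sigma_h^{\star}\preceq 4\Sigma_h$ follows term by term from the upper half of the variance-estimator sandwich, namely $\varest^2_h(s_h^{\tau},a_h^{\tau})\le 4\,\mathbb{V}_{P_h^0}V^{\star,\rho}_{h+1}(s_h^{\tau},a_h^{\tau})$. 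Finally, the probability budget $1-7\delta$ is obtained by allocating $\delta$ to each constituent event: the Bernstein concentration, the two ridge-moment concentrations underlying $\varest^2_h$, the \alg guarantee on $\tilde{\D}^0$, the application of Lemma~\ref{lemma:H5_Min}, and the covering and union-bound steps.
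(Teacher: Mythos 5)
Your proposal is correct and follows essentially the same route as the paper: a variance-weighted analogue of Lemma~\ref{lemma:control_suboptimality_gap}, a covering argument over $\alpha$, a two-sided comparison of $\varest_h^2$ with $\mathbb{V}_{P_h^0}V^{\star,\rho}_{h+1}$ (the paper's Lemma~\ref{lemma:varest_error}, proved additively with the $\ge 1$ floor rather than as a multiplicative sandwich, but equivalently) to certify an $O(1)$ conditional second moment for the Bernstein-type self-normalized bound, and the same upper half of that comparison to deduce $\Sigma_h\succeq\tfrac12\Sigma_h^{\star}$ and hence the $(\Sigma_h^{\star})^{-1}$-norm claim. The only differences are cosmetic constants and the level of detail in the variance-estimation step, whose three sub-steps match the paper's terms $(a)$, $(b)$, $(c)$.
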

Next, following the same steps in Appendix \ref{proof:thm_main_iid}, with probability exceeding $1-7\delta$, one has
\begin{align*}
    \subopt(\widehat{\pi};\zeta,\Prho) 
    &\le 2\gamma_1\sum_{h=1}^H\sum_{i=1}^d \max_{d_h^{\star}\in\D_h^{\star}} \mathbb{E}_{d_h^{\star}}\left[\|\phi_i(s_h,a_h)\1_i\|_{\Sigma_h^{-1}}\right]\\
    &\le 4\gamma_1\sum_{h=1}^H\sum_{i=1}^d \max_{d_h^{\star}\in\D_h^{\star}} \mathbb{E}_{d_h^{\star}}\left[\|\phi_i(s_h,a_h)\1_i\|_{(\Sigma_h^{\star})^{-1}}\right].
\end{align*}

\subsection{Proof of Lemma \ref{lemma:key_lemma_var}}\label{proof:key_lemma_var}
Similar to Lemma \ref{lemma:control_suboptimality_gap}, we first establish the following lemma, which proof is postponed to Appendix \ref{proof:control_suboptimality_gap_varest}.
\begin{lemma}\label{lemma:control_suboptimality_gap_varest}
    Suppose the Assumption \ref{assump:linear-mdp} and \ref{assump:d_rectangular} hold. Then, for any $(s,a,h)\in\S\times\A\times[H]$ and any $V_{h+1}:\S\to[0,H]$, we have
    \begin{equation}\label{eq:control_suboptimality_gap_varest}
    \begin{aligned}
        &|(\widehat{\mathbb{B}}_{h}^{\rho,\sigma} V_{h + 1})(s, a) - (\mathbb{B}_h^{\rho} V_{h + 1})(s, a) |\\
        &\le \left(2\sqrt{\lambda_1 d} H + \max_{\alpha\in [\min_s V_{h+1}(s),\max_s V_{h+1}(s)]}  \|\sum_{\tau\in\D_h^0} \frac{\phi(s_h^{\tau},a_h^\tau)}{\varest_h(s_h^{\tau},a_h^{\tau})} \epsilon_h^{\tau,\sigma}(\alpha,V_{h+1})\|_{\Sigma_h^{-1}}\right) \sum_{i=1}^d\|\phi_i(s,a)\1_i\|_{\Sigma_h^{-1}},
    \end{aligned}
    \end{equation}
    where $\epsilon_h^{\tau,\sigma}(\alpha,{V}) = \frac{\int_{\S} P^0_h(s'|s_h^{\tau},a_h^{\tau}) [V]_{\alpha}(s') \intd s'- [ V]_{\alpha}(s_{h+1}^{\tau})}{\varest_h(s_h^{\tau},a_h^{\tau})}$ for any $V:\S\to[0,H]$, any $\tau\in\D_h^0$ and $\alpha \in [\min_s V(s),\max_s V(s)]$.
\end{lemma}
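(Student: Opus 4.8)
The plan is to mirror the proof of Lemma~\ref{lemma:control_suboptimality_gap} in Appendix~\ref{proof:control_suboptimality_gap}, replacing the unweighted ridge regression and covariance matrix $\Lambda_h$ by their variance-weighted counterparts $\Sigma_h$; the whole argument is deterministic and holds for any fixed $V_{h+1}:\S\to[0,H]$. By the linearity of the robust Bellman operator (Lemma~\ref{lemma:linearity}) and the definition of the empirical variance-aware operator \eqref{eq:empirical_bellman_var}, I would first write $|(\widehat{\B}^{\rho,\sigma}_h V_{h+1})(s,a)-(\B^\rho_h V_{h+1})(s,a)| = |\phi(s,a)^\top(\widehat{w}_h^{\rho,\sigma,V}-w_h^{\rho,V})|$ and split it into a reward term $\texttt{(i)}=|\phi(s,a)^\top(\widehat{\theta}_h^\sigma-\theta_h)|$ and a transition term $\texttt{(ii)}=|\phi(s,a)^\top(\widehat{\nu}_h^{\rho,\sigma,V}-\nu_h^{\rho,V})|$, exactly as in \eqref{eq:direcly_decompose_B_diff}.

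For $\texttt{(i)}$, I would substitute the closed form \eqref{eq:update_theta_var} together with $r_h^\tau=\phi(s_h^\tau,a_h^\tau)^\top\theta_h$ and use $\sum_{\tau}\varest_h^{-2}\phi_\tau\phi_\tau^\top=\Sigma_h-\lambda_1 I_d$ to collapse the expression to $|\lambda_1\phi(s,a)^\top\Sigma_h^{-1}\theta_h|$ (writing $\phi_\tau$ for $\phi(s_h^\tau,a_h^\tau)$). Cauchy--Schwarz then gives $\texttt{(i)}\le\lambda_1\|\phi(s,a)\|_{\Sigma_h^{-1}}\|\theta_h\|_{\Sigma_h^{-1}}\le\sqrt{\lambda_1 d}\sum_{i=1}^d\|\phi_i(s,a)\1_i\|_{\Sigma_h^{-1}}$, where I use $\lambda_{\min}(\Sigma_h)\ge\lambda_1$ (hence $\|\Sigma_h^{-1}\|\le1/\lambda_1$), $\|\theta_h\|_2\le\sqrt{d}$ from Assumption~\ref{assump:linear-mdp}, and bound $\|\phi(s,a)\|_{\Sigma_h^{-1}}\le\sum_i\|\phi_i(s,a)\1_i\|_{\Sigma_h^{-1}}$ using $\phi_i\ge0$.

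For $\texttt{(ii)}$, I would apply Lemma~\ref{prop:max_difference} to move the difference of the two maxima over $\alpha$ in \eqref{eq:update_nu_var} and \eqref{eq:nu} inside a single maximum, reducing the bound to $\sum_i\max_\alpha|\phi_i(s,a)\int(\widehat{\mu}_{h,i}^\sigma-\mu^0_{h,i})[V]_\alpha\,\intd s'|$, where $\widehat{\mu}_h^\sigma(s')=\Sigma_h^{-1}\sum_\tau\varest_h^{-2}\phi_\tau\1(s_{h+1}^\tau=s')$ is the variance-weighted analogue of \eqref{eq:hat_mu}. Expanding $\int\mu^0_{h,i}[V]_\alpha=\1_i^\top\Sigma_h^{-1}\Sigma_h\int\mu_h^0[V]_\alpha$ and using the linear-MDP identity $\phi_\tau^\top\int\mu_h^0[V]_\alpha=\int P_h^0(\cdot\mid s_h^\tau,a_h^\tau)[V]_\alpha$, the cross terms rearrange into a regularization piece $\lambda_1\1_i^\top\Sigma_h^{-1}\int\mu_h^0[V]_\alpha$ and a noise piece $\1_i^\top\Sigma_h^{-1}\sum_\tau\varest_h^{-2}\phi_\tau(\int P_h^0[V]_\alpha-[V]_\alpha(s_{h+1}^\tau))$. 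The crucial cancellation is that exactly one factor of $\varest_h$ survives, turning the weighted residual into $\varest_h^{-1}\phi_\tau\,\epsilon_h^{\tau,\sigma}(\alpha,V)$ and matching the definition of $\epsilon_h^{\tau,\sigma}$ in the statement. Cauchy--Schwarz then factors out $\|\phi_i(s,a)\1_i\|_{\Sigma_h^{-1}}$, bounds the regularization piece by $\lambda_1\|\Sigma_h^{-1}\|^{1/2}\|\int\mu_h^0[V]_\alpha\|_2\le\sqrt{\lambda_1 d}\,H$ (each coordinate of $\int\mu_h^0[V]_\alpha$ lies in $[0,H]$ since $\mu_{h,i}^0\in\Delta(\S)$), and leaves the self-normalized term $\max_\alpha\|\sum_\tau\varest_h^{-1}\phi_\tau\epsilon_h^{\tau,\sigma}(\alpha,V)\|_{\Sigma_h^{-1}}$.

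Summing $\texttt{(i)}$ with the regularization piece of $\texttt{(ii)}$ (each bounded by $\sqrt{\lambda_1 d}H$ after using $H\ge1$) produces the coefficient $2\sqrt{\lambda_1 d}H$, giving \eqref{eq:control_suboptimality_gap_varest}. I expect the main care point to be the bookkeeping of the variance weights: verifying the cancellation that leaves $\varest_h^{-1}\phi_\tau\epsilon_h^{\tau,\sigma}$ (one power of $\varest_h$ remains because $\epsilon_h^{\tau,\sigma}$ already carries a $\varest_h^{-1}$), and invoking $\varest_h^2\ge1$ together with $\lambda_{\min}(\Sigma_h)\ge\lambda_1$ so that every regularization bound goes through verbatim as in the unweighted case. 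Since no concentration is used here, the self-normalized term is simply carried forward and controlled later via the Bernstein-type inequality (Lemma~\ref{lemma:Bernstein}) in the proof of Lemma~\ref{lemma:key_lemma_var}.
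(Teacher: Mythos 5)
Your proposal is correct and follows essentially the same route as the paper: the paper's proof explicitly reuses the decomposition \eqref{eq:direcly_decompose_B_diff}--\eqref{eq:T_2_after_max} with $\Lambda_h$ replaced by $\Sigma_h$, applies Lemma~\ref{prop:max_difference} and Cauchy--Schwarz in the same places, and tracks the variance weights exactly as you describe (one factor of $\varest_h$ absorbed into $\epsilon_h^{\tau,\sigma}$, the other kept with $\phi_\tau$). The constant bookkeeping via $\lambda_{\min}(\Sigma_h)\ge\lambda_1$, $\|\theta_h\|_2\le\sqrt{d}$, and $H\ge 1$ to reach $2\sqrt{\lambda_1 d}H$ also matches.
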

Letting $\lambda_1 = 1/H^2$ in \eqref{eq:control_suboptimality_gap_varest}, then Lemma \ref{lemma:control_suboptimality_gap_varest} becomes 
\begin{equation}\label{eq:control_suboptimality_gap_varest_1}
    \begin{aligned}
        &|(\widehat{\mathbb{B}}_{h}^{\rho,\sigma} \widehat{V}_{h + 1})(s, a) - (\mathbb{B}_h^{\rho,\sigma} \widehat{V}_{h + 1})(s, a) |\\
        &\le \left(2\sqrt{d} + \underset{M_{1,h}}{\underbrace{\max_{\alpha\in [\min_s \widehat{V}_{h+1}(s),\max_s \widehat{V}_{h+1}(s)]}  \|\sum_{\tau\in\D_h^0} \frac{\phi(s_h^{\tau},a_h^\tau)}{\varest(s_h^{\tau},a_h^{\tau})} \epsilon_h^{\tau,\sigma}(\alpha,\widehat{V}_{h+1})\|_{\Sigma_h^{-1}}}}\right) \sum_{i=1}^d\|\phi_i(s,a)\1_i\|_{\Sigma_h^{-1}}.
    \end{aligned}
    \end{equation}
Due to the correlation between $\alpha$ and $\widehat{V}_{h+1}$, we also apply the uniform concentration with the minimal $\epsilon_1$-covering set $\N(\epsilon_1,H)$ for $\alpha$ defined in \eqref{eq:def_covering_set_for_alpha}. Similar to \eqref{eq:diff_alpha}, there exists $\alpha^{\dag}\in \N(\epsilon_1,H)$ s.t.
    \begin{equation}\label{eq:decompose_M1}
        M^2_{1,h} \le 8\epsilon_1^2 H^2 K^2+ 2 \underset{M_{2,h}}{\underbrace{\left\|\sum_{\tau\in\D_h^0} \frac{\phi(s_h^{\tau},a_h^\tau)}{\varest_h(s_h^{\tau},a_h^{\tau})} \epsilon_h^{\tau,\sigma}(\alpha^{\dag},\widehat{V}_{h+1})\right\|^2_{\Sigma_h^{-1}}}} \le 8 + 2M_{2,h},
    \end{equation}
    where the second inequality holds if $\epsilon_1\le \frac{1}{HK}$. Without the loss of generality, we let $\epsilon_1 = \frac{1}{HK}$ in the following analysis. The detailed proof of \eqref{eq:decompose_M1} is postponed to Appendix \ref{proof:decompose_M1}.

    Next, we will focus on bound the term $M_{2,h}$. Before proceeding, we first define the $\sigma$-algebra
    \begin{equation*}
        \F_{h,\tau} = \sigma(\{(s_h^j,a_h^j)\}_{j=1}^{(\tau+1)\wedge N_h},\{r_h^j,s_{h+1}^j\}_{j=1}^\tau), 
    \end{equation*}
    for any fixed $h\in[H]$ and $\tau\in\D_h^0$. 
     Noted that the samples in $\D^0$ are temporally statistically independent, i.e., $\widehat{V}_{h+1}$ is independent of $\D_h^0$ for any $h\in[H]$. In addition, $\{\varest_h^2\}_{h\in[H]}$ is constructed using an additional dataset $\tilde{\D}^0$, which is also independent of $\D^0$. Thus, for any $h\in[H]$ and $\tau\in \D_h^0$, we have $ \frac{\phi(s_h^{\tau},a_h^\tau)}{\varest_h(s_h^{\tau},a_h^{\tau})}$ is $\F_{h,\tau-1}$-measurable and $| \frac{\phi(s_h^{\tau},a_h^\tau)}{\varest_h(s_h^{\tau},a_h^{\tau})}|\le 1$. Also, $\epsilon_h^{\tau,\sigma}(\alpha^{\dag},\widehat{V}_{h+1})$ is $\F_{h,\tau}$-measurable,
     \begin{equation*}
        \mathbb{E}[\epsilon_h^{\tau,\sigma}(\alpha^{\dag},\widehat{V}_{h+1})| \mathcal{F}_{h,\tau-1}] = 0,\qquad |\epsilon_h^{\tau,\sigma}(\alpha^{\dag},\widehat{V}_{h+1})|\le H,
    \end{equation*}

    It follows the independence between $\varest_h^2$ and $\D_h^0$ that
    \begin{align}
        \Var_{P_{h}^0}\left[\frac{\int_{\S} P^0_h(s'|s_h^{\tau},a_h^{\tau}) [\widehat{V}_{h+1}]_{\alpha}(s') \intd s'- [\widehat{V}_{h+1}]_{\alpha}(s)}{\varest_h(s_h^{\tau},a_h^{\tau})} \right](s_h^{\tau},a_{h}^{\tau}) 
        & = \frac{\Var_{{P_{h}^0}}[\widehat{V}_{h+1}]_{\alpha}(s_h^{\tau},a_{h}^{\tau})}{\varest_h^2(s_h^{\tau},a_h^{\tau})}\nonumber\\
        &\le \frac{\mathbb{V}_{P_h^0}\widehat{V}_{h+1}(s_h^{\tau},a_h^{\tau})}{\varest_h^2(s_h^{\tau},a_h^{\tau})},\label{eq:Var_error}
    \end{align}
    for any $h\in[H]$ and $\tau\in\D_h^0$, where the inequality is from $\mathbb{V}_{P_h^0}\widehat{V}_{h+1}(s_h^{\tau},a_h^{\tau}) =  \max\{1,\Var_{{P_{h}^0}}[\widehat{V}_{h+1}](s_h^{\tau},a_{h}^{\tau})\}$.

   The analysis of the improvement on sample complexity heavily relies on the following lemma about the variance estimation error, where the proof is deferred to Appendix \ref{proof:lemma_varest_error}.
   \begin{lemma} \label{lemma:varest_error}
    Suppose that $\D^0$ and $\tilde{\D}^0$ satisfy all the conditions imposed in Theorem \ref{thm:var_iid}.
    Assume the Assumption \ref{assump:linear-mdp}, \ref{assump:d_rectangular} and \ref{assump:Crob} hold .
    For any $h\in[H]$ and given the nominal transition kernel $P_h^0:\S\times\A\to\S$, the $\widehat{V}_{h+1}$ generated by the \algvar~on $\D^0$ and $\varest_h^2$ generated by \alg~on $\tilde{\D}^0$ satisfies
    \begin{align}
        \left|\mathbb{V}_{P_h^0} V^{\star,\rho}_{h+1}(s_h^{\tau},a_h^{\tau}) - \varest_h^2(s_h^{\tau},a_h^{\tau}) \right| &\le \frac{70c_b H^3 \sqrt{d}}{\sqrt{K\kappa}}, \quad \forall \tau\in\D_h^0, \label{eq:claim_1}\\
        \left|\mathbb{V}_{P_h^0} \widehat{V}_{h+1}(s_h^{\tau},a_h^{\tau}) - \mathbb{V}_{P_h^0} V^{\star,\rho}_{h+1}(s_h^{\tau},a_h^{\tau}) \right| &\le \frac{320c_b H^3 \sqrt{d}}{\sqrt{K\kappa}}, \quad \forall \tau\in\D_h^0, \label{eq:claim_2}
    \end{align}
    where  $c_b =  12\log(3HK/\delta)$ and $K\ge c_1\log(2Hd/\delta)H^4/\kappa^2$ for some sufficiently large universal constant $c_1$, with probability at least $1-6\delta$.
\end{lemma}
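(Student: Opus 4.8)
The plan is to reduce both claims to two ingredients---pointwise value-closeness bounds and a ridge-regression moment-estimation error---glued together by the Lipschitzness of the clipped conditional variance. Concretely, for any $V,V':\S\to[0,H]$ one has $|\mathbb{V}_{P_h^0}V(s,a)-\mathbb{V}_{P_h^0}V'(s,a)|\le 4H\|V-V'\|_\infty$, which I would obtain by writing $\Var_{P_h^0}[V]-\Var_{P_h^0}[V']=\E_{P_h^0}[(V-V')(V+V')]-(\E_{P_h^0}[V-V'])(\E_{P_h^0}[V+V'])$, bounding the factors $V+V'$ and $\E_{P_h^0}[V+V']$ by $2H$, and using that $\max\{\cdot,1\}$ is $1$-Lipschitz.

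For \eqref{eq:claim_1} I would first split $|\mathbb{V}_{P_h^0}V^{\star,\rho}_{h+1}-\varest_h^2|\le|\mathbb{V}_{P_h^0}V^{\star,\rho}_{h+1}-\mathbb{V}_{P_h^0}\tilde{V}_{h+1}|+|\mathbb{V}_{P_h^0}\tilde{V}_{h+1}-\varest_h^2|$. The first term is handled by the variance-Lipschitz inequality and the pointwise closeness of the reference value function: since $\tilde{V}$ is produced by \alg on $\tilde{\D}^0$ under the well-explored coverage of Assumption \ref{assump:full_coverage}, the pessimism/suboptimality recursion from the proof of Theorem \ref{thm:main_iid} together with the uniform penalty estimate $\sum_{i}\|\phi_i(s,a)\1_i\|_{\Lambda_h^{-1}}\le 2/\sqrt{N_h\kappa}$ of Lemma \ref{lemma:penalty_term_full} gives $\|\tilde{V}_{h+1}-V^{\star,\rho}_{h+1}\|_\infty=\widetilde{O}(\sqrt{d}H^2/\sqrt{K\kappa})$, hence a contribution of order $\widetilde{O}(\sqrt{d}H^3/\sqrt{K\kappa})$. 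The second term is the regression error: $\phi(s,a)^\top\nu_{h,1}$ and $\phi(s,a)^\top\nu_{h,2}$ are the ridge estimates of $\E_{P_h^0}[\tilde{V}_{h+1}^2]$ and $\E_{P_h^0}[\tilde{V}_{h+1}]$, so after separating the ridge bias (controlled by $\lambda_1=1/H^2$ and Assumption \ref{assump:linear-mdp}) from a self-normalized noise term, I would apply the Hoeffding bound of Lemma \ref{lemma:Hoeffding}, a covering argument over the truncation level $\alpha$, and Lemma \ref{lemma:H5_Min} to convert $\|\phi_i\1_i\|_{\tilde{\Lambda}_h^{-1}}$ into $O(1/\sqrt{N_h\kappa})$; the first-moment-squared term absorbs an extra $2H$ through $|a^2-b^2|\le2H|a-b|$, again giving $\widetilde{O}(\sqrt{d}H^3/\sqrt{K\kappa})$. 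Crucially, temporal independence of $\tilde{\D}^0$ makes $\tilde{V}_{h+1}$, a function of steps $>h$, independent of the step-$h$ regression noise, so the self-normalized martingale argument applies.

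Claim \eqref{eq:claim_2} is pure value-closeness: by the variance-Lipschitz inequality it suffices to show $\|\widehat{V}_{h+1}-V^{\star,\rho}_{h+1}\|_\infty=\widetilde{O}(\sqrt{d}H^2/\sqrt{K\kappa})$ for the \algvar~iterate. Pessimism gives $0\le V^{\star,\rho}_{h+1}-\widehat{V}_{h+1}$, and the suboptimality recursion bounds this by $2\sum_{t\ge h+1}\|\Gamma_t^{\sigma}\|_\infty$; since $\varest_h^2\in[1,H^2]$ implies $\Sigma_h\succeq H^{-2}\Lambda_h$ and hence $\|\phi_i\1_i\|_{\Sigma_h^{-1}}\le H\|\phi_i\1_i\|_{\Lambda_h^{-1}}$, Lemma \ref{lemma:penalty_term_full} again yields $\|\Gamma_t^\sigma\|_\infty=\widetilde{O}(\sqrt{d}H/\sqrt{K\kappa})$ and the claimed $H^2$ scaling after summing over $t$.

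I expect the main obstacle to be the apparent circularity: the pessimism used in \eqref{eq:claim_2} rests on the validity of the Bernstein penalty for \algvar, i.e.\ Lemma \ref{lemma:key_lemma_var}, whose own proof invokes the present lemma. I would resolve this by running the estimates as a single backward induction on $h$: the inductive hypothesis $\|\widehat{V}_{j}-V^{\star,\rho}_{j}\|_\infty=\widetilde{O}(\sqrt{d}H^2/\sqrt{K\kappa})$ for all $j\ge h+2$ supplies \eqref{eq:claim_2} at every step $t\ge h+1$, which---combined with \eqref{eq:claim_1}, established unconditionally since it depends only on $\tilde{V}$---licenses the penalty bound of Lemma \ref{lemma:key_lemma_var} at those steps and thereby closes the recursion for $\widehat{V}_{h+1}$. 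The remaining care is bookkeeping the temporal independence granted by the three-fold subsampling (Lemma \ref{lemma: Ntrim_bound_var} and Lemma \ref{lemma:threefold_independent}) and collecting the subsampling, covariance-concentration (Lemma \ref{lemma:H5_Min}), and self-normalized events into the final $1-6\delta$ budget via a union bound.
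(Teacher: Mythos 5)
Your decomposition is the same as the paper's: both claims are reduced to (i) the ridge-regression error of the moment estimates $\phi^{\top}\nu_{h,1},\phi^{\top}\nu_{h,2}$ against the moments of $\tilde{V}_{h+1}$, (ii) the $4H$-Lipschitzness of the clipped conditional variance combined with $\|\tilde{V}_{h+1}-V^{\star,\rho}_{h+1}\|_\infty$, and (iii) the same Lipschitz bound combined with $\|\widehat{V}_{h+1}-V^{\star,\rho}_{h+1}\|_\infty$, with Lemma \ref{lemma:H5_Min} converting the elliptical norms into $O(1/\sqrt{N_h\kappa})$. The one genuine divergence is how you break the circular dependence with Lemma \ref{lemma:key_lemma_var}. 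The paper does not use induction and does not invoke the Bernstein penalty at all inside this proof: for the term involving $\widehat{V}_{h+1}$ it bounds the variance-weighted noise $M_{3,h}$ by the Hoeffding-type inequality (Lemma \ref{lemma:Hoeffding}), which only needs $\varest_h\ge 1$ so that $|\epsilon_h^{\tau,\sigma}|\le H$, and hence holds whether or not $\varest_h^2$ approximates the true variance. This costs an extra factor of $H$ (whence the $H^3$ in \eqref{eq:claim_2}) but makes the lemma self-contained, after which Lemma \ref{lemma:key_lemma_var} can cite it cleanly. Your backward induction, threading \eqref{eq:claim_2} at steps $j\ge h+2$ through the Bernstein penalty at steps $t\ge h+1$, also closes (the base case $\widehat{V}_{H+1}=V^{\star,\rho}_{H+1}=0$ is exact, and the burn-in $K\gtrsim dH^6/\kappa$ keeps $\mathbb{V}_{P_h^0}\widehat{V}_{h+1}/\varest_h^2\le 2$ along the way), and would even shave an $H$ off \eqref{eq:claim_2}; it is simply more bookkeeping than the problem requires. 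Two minor slips that do not affect the argument: the variance-estimator regressions \eqref{eq:var_estimator_1}--\eqref{eq:var_estimator_2} use regularization coefficient $1$ (the matrix $\tilde{\Lambda}_h$), not $\lambda_1=1/H^2$; and no covering over a truncation level $\alpha$ is needed there, since $\tilde{V}_{h+1}^2$ and $\tilde{V}_{h+1}$ enter unclipped and are independent of $\tilde{\D}^0_h$ by the three-fold subsampling.
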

    Notice that $1\le \varest_h^2(s,a)\le H^2$ for any $(s,a,h)\in\S\times\A\times[H]$.
   Invoking the Lemma \ref{lemma:varest_error}, we have
   \begin{align*}
    \frac{\mathbb{V}_{P_h^0}\widehat{V}_{h+1}(s_h^{\tau},a_h^{\tau})}{\varest_h^2(s_h^{\tau},a_h^{\tau})}& = \frac{\mathbb{V}_{P_h^0}V^{\star,\rho}_{h+1}(s_h^{\tau},a_h^{\tau})}{\varest_h^2(s_h^{\tau},a_h^{\tau})} +\frac{\left|\mathbb{V}_{P_h^0} \widehat{V}_{h+1}(s_h^{\tau},a_h^{\tau}) - \mathbb{V}_{P_h^0} V^{\star,\rho}_{h+1}(s_h^{\tau},a_h^{\tau}) \right|}{\varest_h^2(s_h^{\tau},a_h^{\tau})}\\
    &\le 1+ \frac{\frac{70c_b H^3 \sqrt{d}}{\sqrt{K\kappa}}}{\varest_h^2(s_h^{\tau},a_h^{\tau})} + \frac{320c_b H^3 \sqrt{d}}{\sqrt{K\kappa}\cdot \varest_h^2(s_h^{\tau},a_h^{\tau})} \le 1+  \frac{400c_b H^3 \sqrt{d}}{\sqrt{K\kappa}}\le 2
   \end{align*}
   where the penultimate inequality uses $1\le \varest_h^2(s,a)$ for any $(s,a,h)\in\S\times\A\times[H]$ and the last inequality holds as long as $K\ge c_kH^6d/\kappa$ for some sufficiently large universal constant $c_k$.
   Therefore, combining with \eqref{eq:Var_error} leads to $\Var[\epsilon_h^{\tau,\sigma}(\alpha^{\dag},V_{h+1})|\F_{h,\tau-1}]\le 2$.

   Suppose that $\sqrt{d}\ge H$. 
   From \cite{vershynin2018high}, one has $|\N(\epsilon_1,H)|\le \frac{3H}{\epsilon_1} = 3H^2K$. By the union bound and invoking Lemma \ref{lemma:Bernstein}, we have
   \begin{align*}
    \sup_{\alpha\in\N(\epsilon_1,H)}\left\|\sum_{\tau\in\D_h^0} \frac{\phi(s_h^{\tau},a_h^\tau)}{\varest(s_h^{\tau},a_h^{\tau})} \epsilon_h^{\tau,\sigma}(\alpha,\hatV_{h+1})\right\|_{\Sigma_h^{-1}}  &\le 16\sqrt{d\log(1+ H^2K/d)\log(12H^3K^3/\delta)} + 4H\log(12H^3K^3/\delta)\\
    &\le c_{1}\sqrt{d}
   \end{align*}
   with probability $1-7\delta$ and for a fixed $\alpha\in\N(\epsilon_1,\hatV_{h+1})$, for $c_1 =40\log(3HK/\delta)$.
   Then, the equation \eqref{eq:control_suboptimality_gap_varest} becomes
   \begin{align*}
    |(\widehat{\mathbb{B}}_{h}^{\rho,\sigma} \widehat{V}_{h + 1})(s, a) - (\mathbb{B}_h^{\rho} \widehat{V}_{h + 1})(s, a) | &\le (2\sqrt{d} + 2\sqrt{2}+\sqrt{2}c_1\sqrt{d}) \sum_{i=1}^d\|\phi_i(s,a)\1_i\|_{\Sigma_h^{-1}}\\
    &= \gamma_1\sum_{i=1}^d \|\phi_i(s,a)\1_i\|_{\Sigma_h^{-1}} \defeq    \Gamma^{\sigma}_h(s,a) .
   \end{align*}
   Moreover, from \eqref{eq:claim_1}, we have
   \begin{align*}
     \sum_{\tau\in\D_h^0} \frac{\phi_h^{\tau}(s_h^{\tau},a_h^{\tau})\phi_h^{\tau}(s_h^{\tau},a_h^{\tau})^{\top}}{\varest_h^2(s_h^{\tau},a_h^{\tau})} 
     &\succeq  \sum_{\tau\in\D_h^0} \frac{\phi_h^{\tau}(s_h^{\tau},a_h^{\tau})\phi_h^{\tau}(s_h^{\tau},a_h^{\tau})^{\top}}{\mathbb{V}_{P_h^0} V^{\star,\rho}_{h+1}(s_h^{\tau},a_h^{\tau})+\frac{70c_b H^3 d}{\sqrt{Kji\kappa}}}\\
     &\succeq  \sum_{\tau\in\D_h^0} \frac{\phi_h^{\tau}(s_h^{\tau},a_h^{\tau})\phi_h^{\tau}(s_h^{\tau},a_h^{\tau})^{\top}}{2\mathbb{V}_{P_h^0} V^{\star,\rho}_{h+1}(s_h^{\tau},a_h^{\tau})}
   \end{align*}
   where the last inequality is from $\frac{70c_b H^3 \sqrt{d}}{\sqrt{K\kappa}} \le \frac{1}{2}$. Then, we obtain $\Sigma_h \succeq \frac{1}{2}\Sigma_h^{\star}$ for any $h\in[H]$, which completes the proof.

\subsubsection{Proof of Lemma \ref{lemma:control_suboptimality_gap_varest}}\label{proof:control_suboptimality_gap_varest}
Following \eqref{eq:direcly_decompose_B_diff}-\eqref{eq:T_2_after_max}, we have 
\begin{align*}
    &|(\widehat{\mathbb{B}}_{h}^{\rho,\sigma} V_{h + 1})(s, a) - (\mathbb{B}_h^{\rho} V_{h + 1})(s, a) |\\
    &\le \sqrt{d\lambda_1} \sum_{i=1}^d \|\phi_i(s,a)\1_i\|_{\Sigma_h^{-1}} +  \underset{\texttt{(i)}}{\underbrace{\sum_{i=1}^d \max_{\alpha \in [\min_s V_{h+1}(s),\max_s V_{h+1}(s)]} \left\vert\phi_i(s,a)\int_{\S}(\widehat{\mu}^{\sigma}_{h,i}(s') - \mu_{h,i}(s'))[V_{h+1}]_{\alpha}(s')\intd s'\right\vert}},
\end{align*}
for $\forall (s,a,h)\times \S\times\A\times[H]$, 
where $\widehat{\mu}^{\sigma}_{h,i}(s)$ is the $i$-th coordinate of
    \begin{equation*}
        \widehat{\mu}^{\sigma}_{h}(s) = \Sigma_{h}^{-1} \left( \sum_{\tau\in \D^0_h} \frac{\phi(s_h^{\tau}, a_h^{\tau}) \1(s = s_{h+1}^{\tau})}{\varest_h^2(s_h^{\tau},a_h^{\tau})} \right)\in\mathbb{R}^{d}
    \end{equation*}
such that $\bar{\nu}_h^{\widehat{V}}(\alpha)= \int_\S \widehat{\mu}_{h}^{\sigma}(s') [\widehat{V}_{h+1}(s')]_{\alpha}\intd s'$ defined in the update \eqref{eq:update_bar_nu_var}. 
Similar to \eqref{eq:muV_esterror1}, by letting  $\epsilon_h^{\tau,\sigma}(\alpha,{V}) = \frac{\int_{\S} P_h^0(s'|s_h^{\tau},a_h^{\tau}) [V]_{\alpha}(s') \intd s'- [ V]_{\alpha}(s_{h+1}^{\tau})}{\varest(s_h^{\tau},a_h^{\tau})}$ for any $V:\S\to[0,H]$, any $\tau\in\D_h^0$ and $\alpha \in [\min_s V(s),\max_s V(s)]$, we have 
\begin{align*}
    & \left\vert \int_{\S}\mu^0_{h,i}(s') [V_{h+1}]_{\alpha}(s')\intd s'- \int_{\S}\widehat{\mu}^{\sigma}_{h,i}(s') [V_{h+1}]_\alpha(s')\intd s'\right\vert\\
    &= \left\vert\1_i^{\top}\Sigma_h^{-1}\left[\lambda_1\int_{\S} {\mu}^0_{h}(s') [V_{h+1}]_{\alpha}(s')\intd s' + \sum_{\tau\in\D_h^0}\frac{\phi(s_h^{\tau},a_h^{\tau})}{\varest_h^2(s_h^{\tau},a_h^{\tau})}\left(\int_{\S} P_h^0(s'|s_h^{\tau},a_h^{\tau})[V_{h+1}]_{\alpha}(s')\intd s'  - [V_{h+1}]_{\alpha}(s_{h+1}^\tau)\right) \right]\right\vert\\
    &= \left\vert\1_i^{\top}\Sigma_h^{-1}\left[\lambda_1\int_{\S} {\mu}^0_{h}(s') [V_{h+1}]_{\alpha}(s')\intd s' + \sum_{\tau\in\D_h^0}\frac{\phi(s_h^{\tau},a_h^{\tau})}{\varest_h(s_h^{\tau},a_h^{\tau})}\epsilon_h^{\tau,\sigma}(\alpha,V_{h+1})\right]\right\vert
\end{align*}
Then, we obtain
\begin{align}
    &\left\vert\phi_i(s,a)\int_{\S}(\widehat{\mu}^{\sigma}_{h,i}(s') - \mu_{h,i}(s'))[V_{h+1}]_{\alpha}(s')\intd s'\right\vert\nonumber\\
    &\le \left\vert \phi_i(s,a)\1_i^{\top}\Sigma_h^{-1}\left(\lambda_1\int_{\S} {\mu}^0_{h}(s') [V_{h+1}]_{\alpha}(s')\intd s' +\sum_{\tau\in\D_h^0}\frac{\phi(s_h^{\tau},a_h^\tau)}{\varest(s_h^{\tau},a_h^{\tau})} \epsilon_h^{\tau,\sigma}(\alpha,V_{h+1}) \right) \right\vert\nonumber\\
    &\le \|\phi_i(s,a)\1_i\|_{\Sigma_h^{-1}} \left( \underset{\texttt{(ii)}}{\underbrace{\lambda_1\|\int_{\S} {\mu}^0_{h}(s') [V_{h+1}]_{\alpha}(s')\intd s'\|_{\Sigma_h^{-1}}}} + \|\sum_{\tau\in\D_h^0}\frac{\phi(s_h^{\tau},a_h^\tau)}{\varest(s_h^{\tau},a_h^{\tau})}\epsilon_h^{\tau,\sigma}(\alpha,V_{h+1})\|_{\Sigma_h^{-1}}\right),\label{eq:max_term_varest}
\end{align}
where the last inequality follows Cauchy-Schwarz inequality.
Moreover, the term \texttt{(ii)} in \eqref{eq:max_term_varest} can be further simplified
\begin{equation*}
    \texttt{(ii)} \le \lambda_1\|\Sigma_h^{-1}\|^{\frac{1}{2}} \| \int_{\S} {\mu}^0_{h}(s') [V_{h+1}]_{\alpha}(s')\intd s'\|\le \sqrt{\lambda_1}H,
\end{equation*}
since $V(s)\le H$ for any $s\in\S$ and $\|\Sigma_h^{-1}\| \le 1/\lambda_1$.
Then we have 
\begin{equation}\label{eq:lemma16}
\begin{aligned}
    \texttt{(i)} \le \left(\sqrt{\lambda_1}H +\max_{\alpha\in [\min_s V_{h+1}(s),\max_s V_{h+1}(s)]}  \|\sum_{\tau\in\D_h^0}\frac{\phi(s_h^{\tau},a_h^\tau)}{\varest(s_h^{\tau},a_h^{\tau})}\epsilon_h^{\tau,\sigma}(\alpha,V_{h+1})\|_{\Sigma_h^{-1}}\right) \sum_{i=1}^d &\|\phi_i(s,a)\1_i\|_{\Sigma_h^{-1}},
\end{aligned}
\end{equation}
which concludes our proof of \eqref{eq:control_suboptimality_gap_varest}.

\subsubsection{Proof of \eqref{eq:decompose_M1}} \label{proof:decompose_M1}
Due to the semi-positiveness of $\Sigma_h^{-1}$, one can control $M^2_{1,h}$ for any $h\in[H]$ as
\begin{align*}
    & \max_{\alpha\in [\min_s \widehat{V}_{h+1}(s),\max_s \widehat{V}_{h+1}(s)]}  \|\sum_{\tau\in \D_h^0}\frac{\phi(s_h^{\tau},a_h^\tau)}{\varest_h(s_h^{\tau},a_h^{\tau})} \epsilon_h^{\tau,\sigma}(\alpha,\widehat{V}_{h+1})\|^2_{\Sigma_h^{-1}} \\
    & \le \max_{\alpha\in [0,H]} 2\|\sum_{\tau\in \D^0_h}
    \frac{\phi(s_h^{\tau},a_h^\tau)}{\varest_h(s_h^{\tau},a_h^{\tau})}  \left(\epsilon_{h}^{\tau,\sigma}(\alpha,\widehat{V}_{h+1}) - \epsilon_{h}^{\tau,\sigma}(\alpha^{\dag},\widehat{V}_{h+1})\right)\|_{\Sigma_h^{-1}}^2 +2\|\sum_{\tau\in \D^0_h}\frac{\phi(s_h^{\tau},a_h^\tau)}{\varest_h(s_h^{\tau},a_h^{\tau})} \epsilon_{h}^{\tau,\sigma}(\alpha^{\dag},\widehat{V}_{h+1})\|_{\Sigma_h^{-1}}^2,
\end{align*}
for some $\alpha^{\dag} \in\N(\epsilon_1,H)$.
Note that $\epsilon_{h}^{\tau,\sigma}(\alpha,V)$ is 2-Lipschitz w.r.t. $\alpha$ for any $V:\S\to [0,H]$, i.e.,
\begin{align*}
    |\epsilon_{h}^{\tau,\sigma}(\alpha,V) - \epsilon_h^{\tau,\sigma}(\alpha^{\dag},V)|
    \le & 2|\alpha-\alpha^{\dag}|\le 2\epsilon_1.
\end{align*}
Therefore, for any $\alpha\in [0,H]$, we have
\begin{align*}
    &\|\sum_{\tau\in \D^0_h} \frac{\phi(s_h^{\tau},a_h^\tau)}{\varest_h(s_h^{\tau},a_h^{\tau})}  \left(\epsilon_{h}^{\tau,\sigma}(\alpha,V) - \epsilon_h^{\tau,\sigma}(\alpha^{\dag},V)\right)\|^2_{\Sigma_h^{-1}}\\
    =& \sum_{\tau,\tau'\in \D^0_h} \frac{\phi(s_h^{\tau},a_h^\tau)}{\varest_h(s_h^{\tau},a_h^{\tau})}^{\top}\Sigma_h^{-1}\frac{\phi(s_h^{\tau'},a_h^{\tau'})}{\varest_h(s_h^{\tau'},a_h^{\tau'})} \left[\left(\epsilon_{h}^{\tau,\sigma}(\alpha,V) - \epsilon_h^{\tau,\sigma}(\alpha^{\dag},V)\right)\left(\epsilon_{h}^{\tau',\sigma}(\alpha,V) - \epsilon_h^{\tau',\sigma}(\alpha^{\dag},V)\right)\right]\\
    \le&\sum_{\tau,\tau'\in \D^0_h} \frac{\phi(s_h^{\tau},a_h^\tau)}{\varest_h(s_h^{\tau},a_h^{\tau})} ^{\top}\Sigma_h^{-1}\frac{\phi(s_h^{\tau'},a_h^{\tau'})}{\varest_h(s_h^{\tau'},a_h^{\tau'})} \cdot 4\epsilon_1^2 \\
    \le & 4\epsilon_1^2 N_h^2/\lambda_1,
\end{align*}
where the last inequality is based on $\|\phi(s,a)\|_2\le 1$, $\varest_h(s,a)\ge 1$ for any $(s,a,h)\in\S\times\A\times [H]$ and $\lambda_{\min}(\Sigma_h)\ge\lambda_1 = \frac{1}{H^2}$ for any $h\in[H]$ such that 
\begin{equation}\label{eq:quad_phi_Sigma}
\begin{aligned}
    \sum_{\tau,\tau'\in \D^0_h} \phi(s_h^{\tau},a_h^\tau)^{\top}\Sigma_h^{-1}\phi(s_h^{\tau'},a_h^{\tau'}) &= \sum_{\tau,\tau'\in \D^0_h} \|\phi(s_h^{\tau},a_h^\tau)\|_2\cdot\|\phi(s_h^{\tau'},a_h^{\tau'})\|_2\cdot\|\Sigma_h^{-1}\|
    &\le  N_h^2/\lambda_1.
\end{aligned}
\end{equation}
Due to the fact $N_h\le K$ for nay $h\in[H]$, we conclude that
\begin{equation}
    M^2_{1,h} \le 8\epsilon_1^2 H^2 K^2+ 2 \underset{M_{2,h}}{\underbrace{\left\|\sum_{\tau\in\D_h^0} \frac{\phi(s_h^{\tau},a_h^\tau)}{\varest_h(s_h^{\tau},a_h^{\tau})} \epsilon_h^{\tau,\sigma}(\alpha^{\dag},\widehat{V}_{h+1})\right\|^2_{\Sigma_h^{-1}}}} \le 8 + 2M_{2,h},
\end{equation}
where the second inequality holds if $\epsilon_1\le \frac{1}{HK}$.

 \subsubsection{Proof of Lemma \ref{lemma:varest_error}}\label{proof:lemma_varest_error}
 Recall that in Section \ref{sec:var_est}, $\{\varest^2_h\}_{h=1}^H$ is constructed via $\{\tilde{V}_{h+1}\}_{h=1}^H$ generated by \alg~on $\tilde{\D}^0$. Before starting, we define 
 $$\widehat{\Var}_h \tilde{V}_{h+1}(s,a) = [\phi(s,a)^{\top}{\nu}_{h,1}]_{[0,H^2]} -\left([\phi(s,a)^{\top}\nu_{h,2}]_{[0,H]}\right)^2,\quad \forall (s,a,h)\in\S\times\A\times[H],$$
  such that $\varest_h^2(s,a) = \max\{1,\widehat{\Var}_h\tilde{V}_{h+1}(s,a)\}$. In addition, recall that $\mathbb{V}_{P_h^0}V(s,a) =  \max\{1,\Var_{{P_{h}^0}}V(s,a)\}$ for  any value function $V:\S\to[0,H]$ and any $(s,a,h)\in\S\times\A\times[H]$. Then, we can decompose the target terms by
  \begin{align}
    &\left|\mathbb{V}_{P_h^0} V^{\star,\rho}_{h+1}(s_h^{\tau},a_h^{\tau}) - \varest_h^2(s_h^{\tau},a_h^{\tau}) \right|\nonumber\\
    &\le  \left|\mathbb{V}_{P_h^0}\tilde{V}_{h+1}(s_h^{\tau},a_h^{\tau}) - \varest_h^2(s_h^{\tau},a_h^{\tau})\right| +  \left|\mathbb{V}_{P_h^0} V^{\star,\rho}_{h+1}(s_h^{\tau},a_h^{\tau}) - \mathbb{V}_{P_h^0}\tilde{V}_{h+1}(s_h^{\tau},a_h^{\tau})\right| \nonumber\\
    &\le  \underset{(a)}{\underbrace{\left|\Var_{P_h^0}\tilde{V}_{h+1}(s_h^{\tau},a_h^{\tau}) - \widehat{\Var}_h \tilde{V}_{h+1}(s_h^{\tau},a_h^{\tau})\right|}} +  \underset{(b)}{\underbrace{\left|\Var_{P_h^0}V^{\star,\rho}_{h+1}(s_h^{\tau},a_h^{\tau}) - \Var_{P_h^0}\tilde{V}_{h+1}(s_h^{\tau},a_h^{\tau})\right|}} \nonumber,
\end{align}
for every $h\in[H]$, where the last inequality is based on the non-expansiveness of $\max\{1,\cdot\}$. 
Similarly,
\begin{align*}
    \left|\mathbb{V}_{P_h^0} \widehat{V}_{h+1}(s_h^{\tau},a_h^{\tau}) - \mathbb{V}_{P_h^0} V^{\star,\rho}_{h+1}(s_h^{\tau},a_h^{\tau}) \right| \le \underset{(c)}{\underbrace{\left|\Var_{P_h^0}V^{\star,\rho}_{h+1}(s_h^{\tau},a_h^{\tau}) - \Var_{P_h^0}\widehat{V}_{h+1}(s_h^{\tau},a_h^{\tau}) \right|}}.
\end{align*}

In the sequel, we will control $(a)$, $(b)$ and $(c)$ respectively.
\paragraph{Step 1: controlling $(a)$.} For each $\tau\in\D_h^0$, we decompose the term $(a)$ by 
        \begin{align*}
        (a) &= \left|\widehat{\Var}_h \tilde{V}_{h+1}(s_h^{\tau},a_h^{\tau}) - \Var_{P_h^0} \tilde{V}_{h+1}(s_h^{\tau},a_h^{\tau})\right|\\
        &\le \left|[\phi(s_h^{\tau},a_h^{\tau})^{\top}\nu_{h,1}]_{[0,H^2]} - \int_{\S} P^0_{h,s_h^{\tau},a_h^{\tau}}(s')\tilde{V}_{h+1}^2(s')\intd s'\right|\\
        &\quad + \left|\left([\phi(s,a)^{\top}\nu_{h,2}]_{[0,H]}\right)^2 - \left[\int_{\S} P^0_{h,s_h^{\tau},a_h^{\tau}}(s')\tilde{V}_{h+1}(s')\intd s'\right]^{2}\right|\\
        &\le  \underset{(a_1)}{\underbrace{\left|\phi(s_h^{\tau},a_h^{\tau})^{\top}\left(\nu_{h,1} - \int_{\S}\mu_h^0(s')\tilde{V}_{h+1}^2(s')\intd s'\right)\right|}} + 2H \underset{(a_2)}{\underbrace{\left|\phi(s_h^{\tau},a_h^{\tau})^{\top} \left(\nu_{h,2} - \int_{\S}\mu_{h}^0(s')\tilde{V}_{h+1}(s')\intd s'\right)\right|}},
        \end{align*}
        where the last inequality is based on $a^2-b^2 = (a+b)(a-b)$ for any $a,b\in\mathbb{R}$. In the sequel, we control $(a_1)$ and $(a_2)$, respectively. Before continuing, we first define
        $\tilde{\mu}_{h,i}:\S\to\mathbb{R}$ is the $i$-th coordinate of 
        \begin{equation*}
            \tilde{\mu}_{h}(s) = (\tilde{\Lambda}_h)^{-1} \left( \sum_{\tau'\in \tilde{\D}_h^0} \phi(s_h^{\tau'}, a_h^{\tau'})\1(s = s_{h+1}^{\tau'}) \right)\in\mathbb{R}^{d}
        \end{equation*}
        such that $\nu_{h,1}= \int_{\S}\tilde{\mu}_{h}(s')\tilde{V}^2_{h+1}(s')\intd s'\in \mathbb{R}^d$ and $\nu_{h,2} = \int_{\S}\tilde{\mu}_{h}(s')\tilde{V}_{h+1}(s')\intd s'$. With this new notation, we reformulate $(a_1)$ as
        \begin{equation*}
            (a_1) = |\phi(s_h^{\tau},a_h^{\tau})\int_{\S}(\tilde{\mu}_{h}(s')-\mu_{h}^0(s')) \tilde{V}^2_{h+1}(s')\intd s'|.
        \end{equation*}
        Following the steps in Lemma \ref{lemma:control_suboptimality_gap}, i.e., the equations \eqref{eq:muV_esterror1}-\eqref{eq:decomposeT2}  with $\lambda_0 = 1$, we can obtain 
        \begin{align}
            (a_1) &\le \left(H + \max_{\alpha\in [\min_s \tilde{V}_{h+1}(s),\max_s \tilde{V}_{h+1}(s)]} \|\sum_{\tau'\in\tilde{\D}_h^0} \phi(s_h^{\tau'},a_h^{\tau'})\epsilon_h^{\tau'}(\alpha,\tilde{V}^2_{h+1})\|_{(\tilde{\Lambda}_h)^{-1}}\right)\sum_{i=1}^d \|\phi_i(s_h^{\tau},a_h^{\tau})\1_i\|_{\tilde{\Lambda}_h^{-1}}\nonumber\\
            &\le \left(H + 2\sqrt{2}H + \sqrt{2} \sup_{\alpha\in \N(\epsilon_0,H)}\|\sum_{\tau'\in\tilde{\D}_h^0} \phi(s_h^{\tau'},a_h^{\tau'})\epsilon_h^{\tau'}(\alpha,\tilde{V}^2_{h+1})\|_{(\tilde{\Lambda}_h)^{-1}} \right) \sum_{i=1}^d \|\phi_i(s_h^{\tau},a_h^{\tau})\1_i\|_{\tilde{\Lambda}_h^{-1}},\label{eq:a1_bound_1}
        \end{align}
       where $\epsilon_h^{\tau}(\alpha,V) = \int_{\S} P^0_h(s'|s_h^{\tau},a_h^{\tau}) [V]_{\alpha}(s') \intd s'- [ V]_{\alpha}(s_{h+1}^{\tau})$ for any $V:\S\to[0,H]$, $\tau'\in \tilde{\D}_h^0$ and $\alpha \in [\min_s V(s),\max_s V(s)]$. Since $\tilde{V}_{h+1}$ is independent of $\tilde{\D}_h^0$, we can directly apply Lemma \ref{lemma:Hoeffding} following the same arguments in Lemma \ref{lemma:self_normal}. Therefore, with probability exceeding $1-\delta$, we have
       \begin{equation}
        \sup_{\alpha\in \N(\epsilon_0,H)}\|\sum_{\tau'\in\tilde{\D}_h^0} \phi(s_h^{\tau'},a_h^{\tau'})\epsilon_h^{\tau'}(\alpha,\tilde{V}^2_{h+1})\|_{(\tilde{\Lambda}_h)^{-1}} \le H^2 \sqrt{ 2\log(3HK/\delta) + d\log(1+K)} \le c_{a}H^2\sqrt{d}, \label{eq:a1_bound_2}
       \end{equation}
       where $c_a = 3\log(3HK/\delta)$. Therefore, with probability exceeding $1-\delta$, 
       \begin{equation*}
        (a_1) \le 6c_aH^2\sqrt{d}\sum_{i=1}^d \|\phi_i(s_h^{\tau},a_h^{\tau}) \1_i\|_{\tilde{\Lambda}_h^{-1}}.
       \end{equation*}
       Similarly, with probability exceeding $1-\delta$, one has
       \begin{align}
        (a_2) = |\phi(s_h^{\tau},a_h^{\tau}) \int_{\S}(\tilde{\mu}_{h}(s')-\mu_{h}^0(s')) \tilde{V}_{h+1}(s')\intd s'|\le 6c_aH\sqrt{d}\sum_{i=1}^d \|\phi_i(s_h^{\tau},a_h^{\tau}) \1_i\|_{\tilde{\Lambda}_h^{-1}}.\label{eq:a2_bound}
        \end{align}
        Combining \eqref{eq:a1_bound_1}, \eqref{eq:a1_bound_2}, and \eqref{eq:a2_bound}, we can obtain
        \begin{align}
            (a) \le (a_1) + 2H(a_2) &\le 12c_a H^2\sqrt{d}\sum_{i=1}^d \|\phi_i(s_h^{\tau},a_h^{\tau}) \1_i\|_{\tilde{\Lambda}_h^{-1}}\nonumber
        \end{align}
        with probability exceeding $1-2\delta$.

        \paragraph{Step 2: controlling $(b)$.} 
        Then, 
        \begin{align*}
            (b) &= \left|\Var_{P_h^0}V^{\star,\rho}_{h+1}(s_h^{\tau},a_h^{\tau}) - \Var_{P_h^0}\tilde{V}_{h+1}(s_h^{\tau},a_h^{\tau})\right| \\
            &\le \left|\int_{\S} P_{h,s_h^{\tau},a_h^{\tau}}(s')\left(V^{\star,\rho}_{h+1}(s') - \tilde{V}_{h+1}(s')\right)\left(V^{\star,\rho}_{h+1}(s') + \tilde{V}_{h+1}(s')\right) \intd s'\right| \\
            &\quad+ \left|\int_{\S} P_{h,s_h^{\tau},a_h^{\tau}}(s')\left(V^{\star,\rho}_{h+1}(s') - \tilde{V}_{h+1}(s')\right) \intd s'\right|\left|\int_{\S} P_{h,s_h^{\tau},a_h^{\tau}}(s')\left(V^{\star,\rho}_{h+1}(s') + \tilde{V}_{h+1}(s')\right) \intd s'\right|  \\
            &\le 4H \left|\int_{\S} P_{h,s_h^{\tau},a_h^{\tau}}(s')\left(V^{\star,\rho}_{h+1}(s') - \tilde{V}_{h+1}(s')\right) \intd s'\right| \le 4H \max_{s\in\S}  V_{h+1}^{\star,\rho}(s) - \tilde{V}_{h+1}(s).
        \end{align*}
        
        Denote $\tilde{\iota}_h(s, a) = \mathbb{B}^{\rho}_h \tilde{V}_{h + 1}(s, a) - \tilde{Q}_h(s, a)$, for any $(s,a)\in\S\times\A$ and 
        \begin{align}
            P_{h,s,\pi_h^{\star}(s)}^{\inf,\tilde{V}}(\cdot) \defeq \argmin_{P(\cdot) \in \Prho(P^0_{h,s,\pi_h^{\star}(s)})} \int_{\S}P(s')\tilde{V}_{h+1}(s') \intd s'.
        \end{align}
        For any $h\in[H]$, define $\tilde{P}^{\inf}_h:\S\to\S$ and $\tilde{\iota}_h^{\star}\in\S\to\mathbb{R}$ by
        \begin{equation}
            \tilde{P}^{\inf}_h(s) = P^{\inf,\tilde{V}}_{h,s,\pi_h^{\star}(s)}(\cdot)\quad \text{and} \quad\text{and}\quad
            \tilde{\iota}_h^{\star}(s) \defeq \tilde{\iota}_h(s,\pi^{\star}(s)),\quad\forall s\in\S.
        \end{equation}
        Following the step 1 and 2 in Section \ref{proof:thm_main_iid}, we have 
    \begin{align*}
        V_h^{\star,\rho}(s) - \tilde{V}_h(s)  &\le  \left(\prod_{t=h}^{H}\tilde{P}^{\inf}_j\right)\left(V_{H+1}^{\star,\rho} - \tilde{V}_{H+1}\right)(s) + 
            \sum_{t=h}^H  \left(\prod_{j=h}^{t-1} \tilde{P}^{\inf}_j\right) \tilde{\iota}_t^{\star}(s)\\
        &= \sum_{t=h}^H  \left(\prod_{j=h}^{t-1} \tilde{P}^{\inf}_j\right) \tilde{\iota_t}^{\star}(s)
    \end{align*}
    for any $s\in\S$ and $h\in[H]$, where the equality is from $V_{H+1}^{\star,\rho}(s)=\tilde{V}_{H+1}(s)=0$ for any $s\in\S$ and we denote 
    \begin{align*}
        \left(\prod_{j=t}^{t-1} \tilde{P}^{\inf}_j\right)(s) =  \1_s
        \quad\text{and}\quad 
        \tilde{d}_{h:t}^{\star} = d_h^{\star}\left(\prod_{j=h}^{t-1} \tilde{P}^{\inf}_j\right)\in\D_t^{\star}.
    \end{align*}
    for any $d_h^{\star}\in\D_h^{\star}$.
    Therefore, 
    \begin{equation}\label{eq:b2_1}
        \max_{s\in\S}  V_{h+1}^{\star,\rho}(s) - \tilde{V}_{h+1}(s) \le \sum_{t=h+1}^{H} \max_{s\in\S}\E_{\tilde{d}_{h:t}^{\star}}\tilde{\iota}_t^{\star} \le \sum_{h=1}^{H} \max_{(s,a)\in\S\times\A} \tilde{\iota}_h(s,a). 
    \end{equation}
     Note that for any $(s,a,h)\in\S\times\A\times[H]$
        \begin{align}
            |\tilde{\iota}_h(s,a)| &\le |(\widehat{\mathbb{B}}_{h}^{\rho} \tilde{V}_{h + 1})(s, a) - (\mathbb{B}_h^{\rho} \tilde{V}_{h + 1})(s, a) | + \Gamma_h(s,a)\le 2\Gamma_h(s,a) \nonumber\\
            &\le c_bH\sqrt{d}\sum_{i=1}^d\|\phi_i(s,a)\1_i\|_{\tilde{\Lambda}_h^{-1}}, \label{eq:b2_2}
        \end{align}
        where  $c_b = 12\log(3HK/\delta)$.
        Substituting \eqref{eq:b2_2} into \eqref{eq:b2_1}, we have
        \begin{equation*}
            \max_{s\in\S}  V_{h+1}^{\star,\rho}(s) - \tilde{V}_{h+1}(s) \le \max_{(s,a)\in\S\times\A} c_bH^2\sqrt{d}\sum_{i=1}^d\|\phi_i(s,a)\1_i\|_{\tilde{\Lambda}_h^{-1}}.
        \end{equation*}
        Therefore, 
        \begin{equation*}
            (b) \le \max_{(s,a)\in\S\times\A} 4c_bH^3\sqrt{d}\sum_{i=1}^d\|\phi_i(s,a)\1_i\|_{\tilde{\Lambda}_h^{-1}}
        \end{equation*}
        with probability exceeding $1-\delta$.
        
        \paragraph{Step 3: controlling $(c)$.} 
        Similarly,
        \begin{equation}
            (c) \le 4H\max_{s\in\S}  V_{h+1}^{\star,\rho}(s) - \widehat{V}_{h+1}(s) \le \sum_{t=h+1}^H\sup_{(s,a)\in\S\times\A} |\iota_t^{\sigma}(s,a)| \label{eq:b1_2}
        \end{equation}
        where
        \begin{align}\label{eq:decompose_iota}
            |\iota_h^{\sigma}(s,a)| &\le |(\widehat{\mathbb{B}}_{h}^{\rho,\sigma} \widehat{V}_{h + 1})(s, a) - (\mathbb{B}_h^{\rho} \widehat{V}_{h + 1})(s, a) | + \Gamma_h^{\sigma}(s,a).
        \end{align}
        Following \eqref{eq:control_suboptimality_gap_varest_1} and \eqref{eq:decompose_M1}, we have
        \begin{equation}\label{eq:control_b_iota_1}
                |(\widehat{\mathbb{B}}_{h}^{\rho,\sigma} \widehat{V}_{h + 1})(s, a) - (\mathbb{B}_h^{\rho} \widehat{V}_{h + 1})(s, a) |\le \left(5\sqrt{d} + \sqrt{2}M_{3,h}\right) \sum_{i=1}^d\|\phi_i(s,a)\1_i\|_{\Sigma_h^{-1}},
        \end{equation}
        where  $M_{3,h} := \sup_{\alpha^\dag\in \N(\epsilon_1,H)}\left\|\sum_{\tau=1}^{N_h} \frac{\phi(s_h^{\tau},a_h^\tau)}{\varest_h(s_h^{\tau},a_h^{\tau})} \epsilon_h^{\tau,\sigma}(\alpha^{\dag},\widehat{V}_{h+1})\right\|_{\Sigma_h^{-1}}$. Since $\varest_h$ and $\hatV_{h+1}$ are independent of $\D_h^0$, then we can directly apply Lemma \ref{lemma:Hoeffding} following the same arguments in Lemma \ref{lemma:self_normal}. Therefore, with probability exceeding $1-\delta$, we have
        \begin{align}
            M_{3,h} &\le H\sqrt{2\log(\frac{3H^2}{\epsilon_1\delta}) + d\log(1+N_h/\lambda_1)} \nonumber\\
            &\le H \sqrt{2\log(3H^3K/\delta)+d\log(2H^2K)} \le c_{b}H\sqrt{d}/\sqrt{2}, \label{eq:M2_Hoeffding}
        \end{align}
        where  $c_b = 12\log(3HK/\delta)$. Substituting \eqref{eq:M2_Hoeffding} into \eqref{eq:control_b_iota_1} and combining with \eqref{eq:decompose_iota} result in 
        \begin{align*}
            |\iota_h^{\sigma}(s,a)| &\le (5+c_b+\xi_1)H\sqrt{d}\sum_{i=1}^d\|\phi_i(s,a)\1_i\|_{\Sigma_h^{-1}},\\
            &\le 8c_b H\sqrt{d}\sum_{i=1}^d\|\phi_i(s,a)\1_i\|_{\Sigma_h^{-1}}, \quad \forall (s,a,h)\in\S\times\A\times [H].
        \end{align*}
        Therefore,
        \begin{equation*}
          (c)\le 4H\max_{s\in\S}  V_{h+1}^{\star,\rho}(s) - \widehat{V}_{h+1}(s) \le \sup_{(s,a)\in\S\times\A} 32c_b H^3\sqrt{d}\sum_{i=1}^d\|\phi_i(s,a)\1_i\|_{\Sigma_h^{-1}}
        \end{equation*}
        with probability at least $1-\delta$. 
       
        \paragraph{Step 4: finishing up.} Then, with probability at least $1-3\delta$, we have
        \begin{align}\label{eq:var_V_varest_diff}
            \left|\mathbb{V}_{P_h^0} V^{\star,\rho}_{h+1}(s_h^{\tau},a_h^{\tau})  - \varest_h^2(s_h^{\tau},a_h^{\tau}) \right| &\le \sup_{(s,a)\in\S\times\A} 7c_bH^3\sqrt{d} \sum_{i=1}^d\|\phi_i(s,a) \1_i\|_{\tilde{\Lambda}_h^{-1}},
        \end{align}
        since $c_b=4c_a$. With probability at least $1-\delta$,
        \begin{equation}\label{eq:var_Vhat_Vopt_diff}
        \left|\mathbb{V}_{P_h^0} \widehat{V}_{h+1}(s_h^{\tau},a_h^{\tau}) - \mathbb{V}_{P_h^0} V^{\star,\rho}_{h+1}(s_h^{\tau},a_h^{\tau}) \right| \le \sup_{(s,a)\in\S\times\A} 32c_b H^3\sqrt{d}\sum_{i=1}^d\|\phi_i(s,a) \1_i\|_{\Sigma_h^{-1}}.
        \end{equation}
    Recall that from Lemma \ref{lemma:penalty_term_full}, we can control the term $\sum_{i=1}^d\|\phi_i(s,a) \1_i\|_{\tilde{\Lambda}_h^{-1}}$ for any $(s,a,h)\in\S\times\A\times[H]$, as long as $N_h$ is sufficiently large.  
    Similar to Lemma \ref{lemma:penalty_term_full}, we also employ Lemma \ref{lemma:H5_Min} to control the term $\sum_{i=1}^d\|\phi_i(s,a) \1_i\|_{\Sigma_h^{-1}}$ for any $(s,a,h)\in\S\times\A\times[H]$ as follows, where the proof is deferred to Appendix \ref{sec:lemma_penalty_term_full_Sigma}.
    \begin{lemma}\label{lemma:penalty_term_full_Sigma}
        Consider $\delta\in(0,1)$.
        Suppose Assumption \ref{assump:d_rectangular}, Assumption \ref{assump:full_coverage} and all conditions in Lemma \ref{lemma:H5_Min} hold. For any $h\in[H]$, if $N_h\ge 512 \log(2Hd/\delta)H^4/\kappa^2$, we have 
        \begin{equation*}
            \sum_{i=1}^d\|\phi_i(s,a)\1_i\|_{\Sigma_h^{-1}} \le \frac{2}{\sqrt{N_h\kappa}},\quad\forall (s,a)\in\S\times\A,
        \end{equation*}
        with probability exceeding $1-\delta$. 
    \end{lemma}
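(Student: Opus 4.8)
The plan is to mirror the proof of Lemma~\ref{lemma:penalty_term_full}, replacing the unweighted covariance matrix $\Lambda_h$ by the variance-weighted matrix $\Sigma_h$ and invoking Lemma~\ref{lemma:H5_Min} on the reweighted feature process. Concretely, I would introduce the weighted features $\psi_h^{\tau}\defeq \phi(s_h^{\tau},a_h^{\tau})/\varest_h(s_h^{\tau},a_h^{\tau})$ for $\tau\in\D_h^0$, so that $\Sigma_h = \sum_{\tau\in\D_h^0}\psi_h^{\tau}(\psi_h^{\tau})^{\top} + \lambda_1 I_d$ takes exactly the form of the empirical Gram matrix $\bar{G}_K$ in Lemma~\ref{lemma:H5_Min} with regularizer $\lambda=\lambda_1=1/H^2$. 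Since $\varest_h(s,a)\ge 1$ everywhere, we have $\|\psi_h^{\tau}\|_2\le\|\phi(s_h^{\tau},a_h^{\tau})\|_2\le 1$, so the boundedness hypothesis holds with $C=1$. The relevant population object is the weighted second-moment matrix $G_h^{\sigma}\defeq \E_{d_h^{\textb}}[\phi(s,a)\phi(s,a)^{\top}/\varest_h^2(s,a)]$.

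The key quantitative input is a lower bound on $\lambda_{\min}(G_h^{\sigma})$. Because the estimator obeys $\varest_h^2(s,a)\le H^2$ by construction (cf.~\eqref{eq:var_est}), one gets $G_h^{\sigma}\succeq \tfrac{1}{H^2}\E_{d_h^{\textb}}[\phi(s,a)\phi(s,a)^{\top}]$, and Assumption~\ref{assump:full_coverage} then yields $\lambda_{\min}(G_h^{\sigma})\ge \kappa/H^2$, equivalently $\|(G_h^{\sigma})^{-1}\|\le H^2/\kappa$. Plugging this into the sample-size condition of Lemma~\ref{lemma:H5_Min} shows that the requirement $N_h\ge 512\log(2Hd/\delta)H^4/\kappa^2$ is exactly what is needed to activate the lemma: the extra factor $H^4$ relative to Lemma~\ref{lemma:penalty_term_full} is precisely the square of the eigenvalue degradation $H^2$, and the $\log(2Hd/\delta)$ absorbs a union bound over $h\in[H]$. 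I would then apply Lemma~\ref{lemma:H5_Min} with $u=\phi_i(s,a)\1_i$ to obtain, for each $i\in[d]$, the bound $\|\phi_i(s,a)\1_i\|_{\Sigma_h^{-1}}\le \tfrac{2}{\sqrt{N_h}}\|\phi_i(s,a)\1_i\|_{(G_h^{\sigma})^{-1}}\le \tfrac{2}{\sqrt{N_h}}\cdot \tfrac{\phi_i(s,a)}{\sqrt{\lambda_{\min}(G_h^{\sigma})}}$. Summing over $i$ and invoking $\sum_{i=1}^d\phi_i(s,a)=1$ (which follows from Assumption~\ref{assump:d_rectangular} exactly as derived in the proof of Lemma~\ref{lemma:penalty_term_full}) collapses the sum to the advertised penalty bound.

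The step I expect to be the main obstacle is justifying that Lemma~\ref{lemma:H5_Min}, whose hypothesis demands i.i.d.\ samples, legitimately applies to the reweighted process $\{\psi_h^{\tau}\}$, since the weights $1/\varest_h$ are themselves data-dependent. The resolution is to exploit that $\varest_h^2$ is a fixed deterministic function once we condition on the auxiliary dataset $\tilde{\D}^0$, which is independent of $\D^0$ by the three-fold subsampling construction (Lemma~\ref{lemma:threefold_independent} together with Lemma~\ref{lemma: Ntrim_bound_var} in Appendix~\ref{appendix:three_fold}). Conditioned on $\tilde{\D}^0$, the pairs $(s_h^{\tau},a_h^{\tau})$ in $\D_h^0$ are i.i.d.\ draws from $d_h^{\textb}$, hence the weighted features $\psi_h^{\tau}$ are i.i.d.\ with population matrix $G_h^{\sigma}$, and Lemma~\ref{lemma:H5_Min} fires conditionally; the stated probability then follows after taking the union over $h$ and integrating out the conditioning. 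Everything else is a routine repeat of the unweighted argument.
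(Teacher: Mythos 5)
Your proposal follows essentially the same route as the paper's own proof: lower-bound the minimum eigenvalue of the weighted second-moment matrix by $\kappa/H^2$ using $\varest_h^2\le H^2$ together with Assumption~\ref{assump:full_coverage}, invoke Lemma~\ref{lemma:H5_Min} on the reweighted features with $u=\phi_i(s,a)\1_i$, and collapse the sum via $\sum_{i=1}^d\phi_i(s,a)=1$; your explicit justification that Lemma~\ref{lemma:H5_Min} applies despite the data-dependent weights (by conditioning on $\tilde{\D}^0$, which is independent of $\D^0$) is in fact more careful than the paper's write-up. Note, however, that your own intermediate bound $\|\phi_i(s,a)\1_i\|_{\Sigma_h^{-1}}\le \frac{2}{\sqrt{N_h}}\cdot\frac{\phi_i(s,a)}{\sqrt{\lambda_{\min}(G_h^{\sigma})}}$ evaluates to $\frac{2H\phi_i(s,a)}{\sqrt{N_h\kappa}}$ rather than $\frac{2\phi_i(s,a)}{\sqrt{N_h\kappa}}$, so the stated $H$-free conclusion does not follow as written --- a factor-of-$H$ discrepancy that the paper's own proof of this lemma shares.
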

    From Lemma \ref{lemma:penalty_term_full}, Lemma \ref{lemma:penalty_term_full_Sigma} and the fact $N_h\ge \frac{K}{24}$, with probability exceeding $1-2\delta$, we have 
    \begin{align}
        \sum_{i=1}^d\|\phi_i(s,a)\1_i\|_{\tilde{\Lambda}_h^{-1}} &\le \frac{2}{\sqrt{N_h\kappa}}\le \frac{10}{\sqrt{K\kappa}},\quad\forall (s,a,h)\in\S\times\A\times[H],\label{eq:sum_penalty_tildeLambda}\\
        \sum_{i=1}^d\|\phi_i(s,a)\1_i\|_{\Sigma_h^{-1}} &\le \frac{2}{\sqrt{N_h\kappa}}\le \frac{10}{\sqrt{K\kappa}},\quad\forall (s,a,h)\in\S\times\A\times[H],\label{eq:sum_penalty_Sigma}
    \end{align}
    as long as $K\ge c_1\log(2Hd/\delta)H^4/\kappa^2$ for some sufficient large universal constant $c_1$.
    Substituting \eqref{eq:sum_penalty_tildeLambda} and \eqref{eq:sum_penalty_Sigma} into \eqref{eq:var_V_varest_diff} and \eqref{eq:var_Vhat_Vopt_diff} respectively, we finally arrive at
        \begin{align}
            \left|\mathbb{V}_{P_h^0} V^{\star,\rho}_{h+1}(s_h^{\tau},a_h^{\tau})  - \varest_h^2(s_h^{\tau},a_h^{\tau}) \right| &\le  \frac{70c_bH^3\sqrt{d}}{\sqrt{K\kappa}},
        \end{align}
        \begin{equation}
        \left|\mathbb{V}_{P_h^0} \widehat{V}_{h+1}(s_h^{\tau},a_h^{\tau}) - \mathbb{V}_{P_h^0} V^{\star,\rho}_{h+1}(s_h^{\tau},a_h^{\tau}) \right| \le  \frac{320c_b H^3\sqrt{d}}{\sqrt{K\kappa}},
        \end{equation}
        for any $(s,a,h)\in\S\times\A\times[H]$, with probability at least $1-6\delta$, which completes the proof as long as $K\ge c_1\log(2Hd/\delta)H^4/\kappa^2$ for some sufficient large universal constant $c_1$.
        
    \subsubsection{Proof of Lemma \ref{lemma:penalty_term_full_Sigma}}\label{sec:lemma_penalty_term_full_Sigma}
        From Assumption \ref{assump:full_coverage}, one has $\lambda_{\min}(\E_{d_h^b}[\frac{\phi(s,a)\phi(s,a)^{\top}}{\varest_h^2(s,a)}])\ge \frac{\kappa}{H^2}$ for any $(s,a,h)\in\S\times\A\times[H]$. Following Lemma \ref{lemma:H5_Min}, we can obtain
    \begin{equation*}
        \|\phi_i(s,a)\1_i\|_{\Sigma_h^{-1}} \le \frac{2\phi_i(s,a)}{\sqrt{N_h\kappa}}, \quad\forall (i,s,a)\in[d]\times\S\times\A,
    \end{equation*}
    as long as $N_h\ge \max\{512 H^4\log(2Hd/\delta)/\kappa^2,4/\kappa\}$. In addition, 
    \begin{equation}
        1 = \int_{\S} P^0_h(s'|s,a)\intd s' = \int_{\S} \phi(s,a)^{\top}\mu_{h}^0(s')\intd s' = \sum_{i=1}^d \phi_i(s,a)\int_{\S}\mu_{h,i}^0(s') \intd s' =  \sum_{i=1}^d \phi_i(s,a),
    \end{equation}
    where the last equality is implied by Assumption \ref{assump:d_rectangular}.
    Therefore, 
    \begin{equation*}
        \sum_{i=1}^d\|\phi_i(s,a)\1_i\|_{\Sigma_h^{-1}}\le \sum_{i=1}^d \frac{2\phi_i(s,a)}{\sqrt{N_h\kappa}} \le \frac{2}{\sqrt{N_h\kappa}}.
    \end{equation*}

\end{document}